\documentclass[10pt]{article}
\usepackage[preprint]{tmlr}
\usepackage[utf8]{inputenc}
\usepackage[title]{appendix}

\usepackage{bm}
\usepackage{amsmath}
\usepackage{amssymb}
\usepackage{mathtools}
\usepackage{amsthm}
\usepackage{amsfonts}
\usepackage{thm-restate}

\def\eqref#1{equation~\ref{#1}}

\def\1{\bm{1}}

\def\mG{{\bm{G}}}

\def\mM{{\bm{M}}}
\def\mN{{\bm{N}}}

\def\mP{{\bm{P}}}

\DeclareMathAlphabet{\mathsfit}{\encodingdefault}{\sfdefault}{m}{sl}
\SetMathAlphabet{\mathsfit}{bold}{\encodingdefault}{\sfdefault}{bx}{n}

\def\gD{{\mathcal{D}}}

\def\gL{{\mathcal{L}}}
\def\gM{{\mathcal{M}}}

\def\gO{{\mathcal{O}}}
\def\gP{{\mathcal{P}}}

\def\gR{{\mathcal{R}}}

\def\sR{{\mathbb{R}}}

\newcommand{\E}{\mathbb{E}}

\newcommand{\softmax}{\mathrm{softmax}}

\newcommand{\diag}{\mathrm{diag}}

\DeclareMathOperator{\Tr}{Tr}

\newtheorem{theorem}{Theorem}
\newtheorem{assumption}{Assumption}

\newtheorem{lemma}{Lemma}

\newtheorem{remark}{Remark}

\newcommand{\nheads}{h}

\usepackage{microtype}
\usepackage[section]{placeins}
\usepackage{enumitem}
\usepackage{xcolor}
\usepackage[normalem]{ulem}

\usepackage{url}
\usepackage{xurl}
\usepackage[breaklinks=true]{hyperref}
\usepackage{zref-clever}
\zcsetup{
  cap,
  noabbrev,
  nameinlink=false
}
\newcommand{\cref}[1]{\zcref{#1}}
\newcommand{\Cref}[1]{\zcref[S]{#1}}

\newcommand\blfootnote[1]{%
  \begingroup
  \renewcommand\thefootnote{}\footnotetext{#1}%
  \endgroup
}

\title{Incremental Learning of Sparse Attention Patterns \\ in Transformers}

\author{\name Oğuz Kaan Yüksel\thanks{Corresponding author.} \email oguz.yuksel@epfl.ch \\ \addr TML Lab, EPFL
\AND
\name Rodrigo Alvarez Lucendo \email rodrigo.alvarezlucendo@epfl.ch \\ \addr TML Lab, EPFL
\AND
\name Nicolas Flammarion \email nicolas.flammarion@epfl.ch \\ \addr TML Lab, EPFL
}

\begin{document}

\maketitle
\blfootnote{Published as a conference paper at ICML 2026. Available at \url{https://openreview.net/forum?id=vSRh1qU5sH}.}

\begin{abstract}

This paper studies simple transformers trained on a high-order Markov chain, where the model must incorporate information from multiple past positions, each with different statistical importance.
We show that transformers learn the task incrementally, with each stage corresponding to learning how to copy information from a subset of positions via a sparse attention pattern.
Notably, the learning dynamics transition from a competitive phase, where all heads focus on the statistically most important positions, to a cooperative phase, where different heads specialize in different patterns.
We model these dynamics with simplified differential equations and prove stage-wise convergence of the resulting system.
Functionally, these stages correspond to a sequence of increasingly expressive misspecified models, with the full model class reached only at the end.
Overall, we give a theoretical account of how structured attention patterns and head specialization emerge in stages without an explicit curriculum, with implications for generalization in sequential tasks.

\end{abstract}

\section{Introduction}

Transformers often acquire structure during training in stages: simple patterns appear first, and more complex behaviors emerge only later.
Such stage-wise behavior has been observed in sequential tasks and language-modeling settings, including sudden capability gains and the progressive formation of attention circuits \citep{abbe2023transformers,edelman2024evolution,chen2024sudden}.
However, even in simple settings, it remains unclear how individual attention heads coordinate.
Do heads \emph{specialize} in different patterns from the start, or do they first \emph{compete} for the same statistically dominant signal before differentiating?
Understanding this mechanism is important for explaining how structured attention patterns emerge from standard training dynamics.

A core primitive behind such structure formation is \emph{copying}: moving information from one position to another so that it can be used in downstream computation.
Copying is central in language modeling, where models must reuse phrases, entities, and contextual information, and in algorithmic reasoning, where intermediate values often need to be duplicated for use in later computations.
Transformers implement copying through sparse attention patterns that concentrate probability mass on selected past positions; the simplest such pattern attends to a single position and is a subcircuit of \emph{induction heads} \citep{elhage2021mathematical,olsson2022context}.
Exact sparsity requires the attention logits to separate in scale between attended and unattended positions.
Thus, sparse copying is not only a representational primitive but also a \emph{dynamical} one: the pressure to concentrate attention shapes how attention circuits form during training.

In this paper, we examine this mechanism in a controlled high-order Markov-chain task, illustrated in \Cref{fig:main}.
The next token depends on several groups of past positions, and each group is processed by a feature matrix with different statistical importance.
The model must therefore learn multiple copying patterns and combine the copied information to predict the next token.
\begin{figure*}[t]
      \begin{center}
            \includegraphics[width=0.8\linewidth]{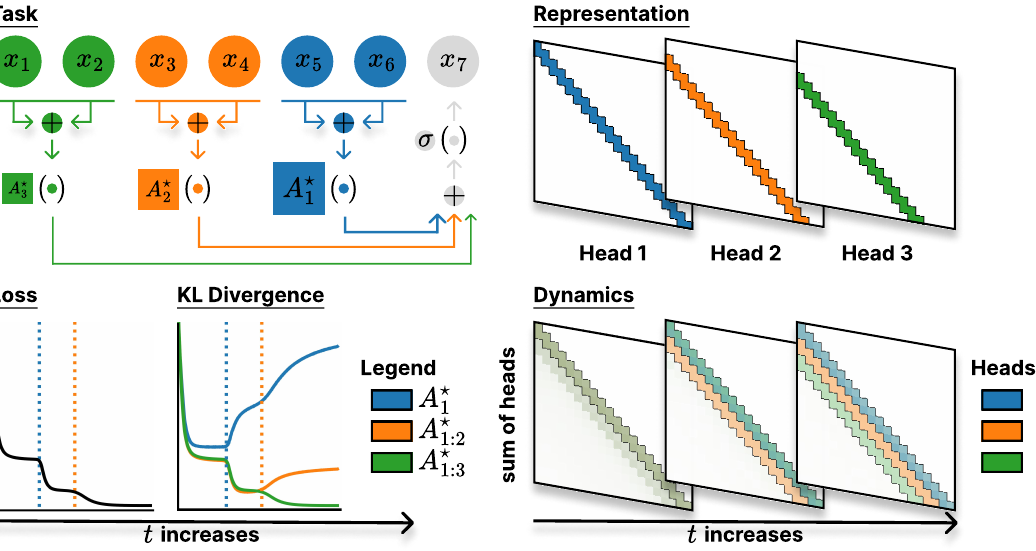}
      \end{center}
      \caption{
            (Top left) The task is based on a high-order Markov chain, where the next token depends on multiple past tokens with different importance weights.
            The context is divided into groups of positions, each aggregated and mapped to prediction logits by an associated feature matrix \(A_k^\star \in \sR^{d \times d}\), with larger \(\|A_k^\star\|\) indicating greater influence on the output.
            (Top right) An idealized representation of the task in a multi-head single-layer attention model.
            Each head represents an individual sparse attention pattern required to solve the task.
            (Bottom left) Transformers learn the task incrementally, with each stage corresponding to the acquisition of a sparse attention pattern, as indicated by the KL divergence between the transformer and predictors \(A_{1:i}^\star\) that depend only on a prefix of the importance-ordered groups, as defined in \Cref{eq:ground_truth}.
            (Bottom right) The learning dynamics transition from competitive, where all heads focus on the statistically most important pattern, to cooperative, where different heads specialize in different patterns.
      }
      \label{fig:main}
\end{figure*}
We find that single-block decoder transformers learn this task incrementally.
At first, all heads focus on the most statistically important positions, producing a competitive phase in which the heads learn the same sparse attention pattern instead of immediately dividing labor.
Only later do the heads specialize: one \emph{offshooting} head leaves the shared pattern and begins to copy from the next most important positions, while the remaining heads compensate for the residual contribution of the offshooting head.
This competitive-to-cooperative transition is the central phenomenon of the paper.
It shows that staged learning is not only visible in model performance, but also in how attention heads coordinate and specialize during training.

To explain this behavior, we introduce a simplified regression variant of the task and analyze its population gradient-flow dynamics.
Under this simplification, the training dynamics reduce to nonlinear tensor factorization, a well-studied class of nonconvex problems \citep{arora2019implicit,razin2021implicit,li2021towards,jin2023understanding}.
Our theoretical results give sufficient conditions under which near-symmetric initialization induces \emph{coupled dynamics}: all heads converge toward the same dominant sparse pattern during the competitive phase.
We then analyze how, near this coupled state, an offshooting head can move toward a new pattern while the other heads compensate, yielding the cooperative phase.
Together, these results connect the observed head-level specialization in transformers to saddle-to-saddle dynamics.
Our main contributions are as follows:
\begin{itemize}
      \item \textbf{A simple task for positional incremental learning.}
      We introduce a high-order Markov-chain task that highlights sparse attention formation as a key mechanism for staged learning.
      Unlike prior in-context Markov-chain settings with more intricate multi-layer circuits \citep{edelman2024evolution}, the staged dynamics already appear here with a single self-attention layer.

      \item \textbf{Competitive-to-cooperative head dynamics.}
      Transformers first enter a competitive phase, where all heads focus on the statistically most important positions, before transitioning to a cooperative phase, where different heads specialize in different patterns.
      This phenomenon appears across variations in the importance ordering, within-group attention weights, interval overlap, initialization scale, optimizer, and dataset size.

      \item \textbf{A gradient-flow account of the stages.}
      We analyze a simplified regression model in which near-symmetric heads evolve as a coupled system, jointly converging toward the same dominant pattern during the competitive phase.
      The analysis also captures the subsequent cooperative phase, where an offshooting head learns a new pattern while the remaining heads compensate.

      \item \textbf{Dataset size and effective complexity.}
      In low-data regimes, early-stopped models approach simpler effective predictors that use only the most important positions, suggesting a link between stage-wise learning and the effective complexity selected during training.
\end{itemize}

\section{Stage-wise Formation of Sparse Patterns}

In this section, we describe the data generation process, how transformers can represent the solution, the experimental evidence for incremental learning of sparse attention patterns, and how these dynamics vary across settings.

\subsection{Markov Chains with Importance Structure}
\label{sec:markov}

We consider a next-token classification task based on a discrete Markov chain of order \(w\) over a dictionary \(\gD\) with \(|\gD| = d\).
Each token in \(\gD\) is represented by a standard basis vector in \(\sR^d\).
The initial tokens \(x_{-w+1}, \ldots, x_0\) are sampled independently and uniformly from \(\gD\).
For each \(t \in [T]\), the remaining tokens are generated according to
\begin{equation}\label{eq:markov_chain}
      x_{t} \sim \softmax\left(\sum_{k=1}^{\nheads} A_k^\star \sum_{i \in I(k)} \alpha_i x_{t-i}\right)\,,
\end{equation}
where \(A_k^\star \in \sR^{d \times d}\) are fixed feature matrices, \(I(k)\) are disjoint subsets that partition \(\{1, \ldots, w\}\), and \(\alpha_i\) are nonnegative importance weights satisfying \(\sum_{i \in I(k)} \alpha_i = 1\) for all \(k \in [\nheads]\).
This task is simple but captures three features relevant to sequence modeling: (i) it is sequential, requiring the model to integrate information from past positions, (ii) it has positional structure, with the past lags partitioned into distinct groups, and (iii) different positions can have different importance, as determined by the feature matrices \(A_k^\star\) and scalars \(\alpha_i\).
As \(I(k)\) and \(A_k^\star\) can be permuted without changing the data generation process, we assume without loss of generality that \(\|A_1^\star\| \geq \|A_2^\star\| \geq \ldots \geq \|A_\nheads^\star\|\) so that \(I(1)\) is paired with the most influential feature matrix.

One particular choice of interest is to have \(I(k)\) be contiguous blocks of indices that start from the most recent position, i.e., for some \(0 = i_0 < i_1 < i_2 < \ldots < i_{\nheads-1} < i_{\nheads} = w\) and \(I(k) = \{i_{k-1}+1, \ldots, i_k\}\).
Contiguous blocks serve as one illustrative default, motivated by the stronger statistical correlations between nearby tokens in natural language.
Notably, when each \(I(k)\) is a singleton, the task recovers position-wise copying followed by a linear feature map.
Lastly, the task is not \emph{identifiable}: the sum in \Cref{eq:markov_chain} admits multiple decompositions with different feature matrices \(A_k^\star\) and partitions \(I(k)\); see \Cref{app:identifiability} for examples.

\subsection{Representing the Task with Sparse Attention}
\label{sec:representation}

We construct a single-layer multi-head attention model that solves the task.
The construction assigns one head to each group: the attention pattern copies the relevant past tokens, and the value matrix applies the corresponding feature map.

Let \(X \in \sR^{d \times (T+w)}\) be the input data matrix with columns \(x_{-w+1}, \ldots, x_{0}, x_1, \ldots, x_T\).
For a transparent construction, we encode positional information using one-hot vectors in \(\sR^{T+w}\) and concatenate them with the input as follows:
\footnote{We use one-hot positional encodings for clarity. The construction and experiments are compatible with relative or sinusoidal encodings that avoid \(\mathcal{O}(T)\) embedding dimension.}
\begin{equation*}
      \tilde{X} = \begin{pmatrix}
            X \\
            I_{T+w}
      \end{pmatrix} \in \sR^{(d+T+w) \times (T+w)}\,.
\end{equation*}
We denote the columns of \(\tilde{X}\) as \(\tilde{x}_i \in \sR^{d+T+w}\), the position-augmented embedding of token \(x_i\).
The transformer takes \(\tilde{X}\) as input and produces the output \(Y \in \sR^{d \times T}\) with columns \(y_0, \ldots, y_{T-1}\) as follows:
\begin{equation*}
      \begin{split}
            y_t &= \softmax\left(\sum_{k=1}^{\nheads} V_k \tilde{X} a_t^{(k)}\right)\,, \\
            a_t^{(k)} &= \softmax\left(\gM_{T-t}\left(\tilde{X}^\top K_k^\top Q_k \tilde{x}_{t}\right)\right)\,,
      \end{split}
\end{equation*}
where \(Q_k, K_k \in \sR^{(d+T+w) \times (d+T+w)}\) are the query and key matrices of head \(k\), \(V_k \in \sR^{d \times (d+T+w)}\) is the value matrix, and \(\gM_{p}\) sets the last \(p\) entries to \(-\infty\) to apply causal masking.

For head \(k\), we let \(a_t^{(k), \star}\) denote the ideal positional attention pattern corresponding to \(I(k)\), defined entrywise by
\[
      \left(a_t^{(k), \star}\right)_s =
      \begin{cases}
            \alpha_i\,, & \text{if } s = t-i \text{ for some } i \in I(k)\,, \\
            0\,, & \text{otherwise}\,,
      \end{cases}
\]
where \(s\) ranges over the positions in the input sequence.
We choose \(V_k\) to apply \(A_k^\star\) to the token coordinates and discard the positional coordinates, i.e., \(V_k = [A_k^\star \; 0]\).
With this choice,
\[
      V_k \tilde{X} a_t^{(k),\star} = A_k^\star \sum_{i \in I(k)} \alpha_i x_{t-i}\,,
\]
so each head recovers one contribution to the logits in \Cref{eq:markov_chain}.
By \emph{sparse attention}, we mean the limiting case in which attention mass is concentrated on a subset of positions and vanishes on the complement.
Exactly learning such a pattern requires the logit gap between attended and unattended positions to diverge.
In practice, we expect finite logit separations that approximate these sparse attention patterns.
This sparse pattern can be realized using only positional coordinates by setting
\[
      K_k^\top Q_k = \lambda \sum_{i \in I(k)} \sum_{p=w}^{T+w} e_{d+p-i} e_{d+p}^\top\,,
\]
where \(\lambda > 0\) is a scaling constant and \(e_i\) is the \(i\)-th standard basis vector in \(\sR^{d+T+w}\).
As \(\lambda \to \infty\), the attention scores converge to the desired sparse pattern.

This construction is not unique as there are many \(Q_k\) and \(K_k\) that can realize the same attention pattern.
In particular, there is a symmetry in which \((Q_k, K_k)\) can be replaced with \(\left(M^{-1} Q_k, M^{\top} K_k\right)\) for any invertible matrix \(M\) without changing the attention scores.
Moreover, as there are \(\nheads\) heads to learn, the construction has a permutation symmetry.
The permutation symmetry plays a central role in our analysis of the learning dynamics, as we discuss in \Cref{sec:coupled}.

\subsection{Transformers Learn Incrementally}
\label{sec:incremental}

\begin{figure*}[t!]
      \begin{center}
            \includegraphics[width=0.8\linewidth]{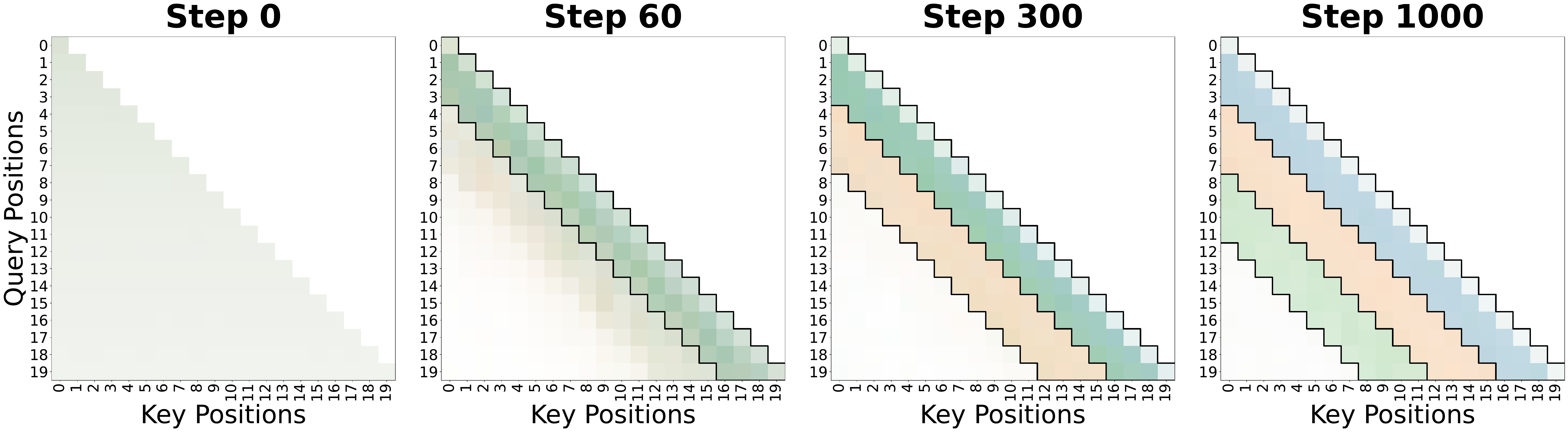}
      \end{center}
      \caption{
            The sum of learned attention patterns for \(\nheads=3, w=12\) at different stages of training with blue, yellow, and green corresponding to different heads.
            At \(t=0\), the attention is uniform as the model is randomly initialized.
            At \(t=60\), all heads learn from the positions in \(I(1)\), indicated by the overlapping blue, yellow, and green, with some residual attention also placed on the positions in \(I(2)\).
            At \(t=300\), a head learns from the positions in \(I(2)\) whereas two heads still focus on \(I(1)\).
            At \(t=1000\), the model finally learns to integrate all positions, with each head specializing in a different pattern.
            The main diagonal does not have the same intensity as the other positions, as it is learned directly from the input via the skip connection.
      }
      \label{fig:attention}
\end{figure*}

We train single-block decoder transformers with \(\nheads\) heads on sequences sampled from \Cref{eq:markov_chain}, minimizing cross-entropy loss over the full sequence. The initial tokens \(x_{-w+1}, \ldots, x_0\) are excluded since they are not drawn from the process.
We keep the architecture as close to standard practice as possible.
The architecture and optimization details are provided in \Cref{app:exp}.

We sample feature matrices \(A_k^\star\) uniformly from orthogonal matrices and then scale them by positive scalars \(m_k\).
These constants are chosen geometrically, i.e., \(m_k = m^{\nheads-k} b_0\) where \(m > 1\) is the multiplicative constant and \(b_0 > 0\) is the base scale.
This results in an importance hierarchy across the feature matrices, while features within the same matrix are equally important.
In particular, \(A_1^\star\) has the largest norm and thus contains the most influential features, while \(A_\nheads^\star\) has the smallest norm and contains the least important ones.
For simplicity, we choose \(I(k)\) as in \Cref{sec:markov} with equal-length intervals of size \(w/\nheads\), and set \(\alpha_i = 1/|I(k)|\) for \(i \in I(k)\).
See \Cref{app:exp} for experimental details and \Cref{app:additional_experiments} for additional experiments.

We observe that the transformers learn the task incrementally, with each stage corresponding to the acquisition of a sparse attention pattern as in \Cref{fig:attention}.
At initialization, all heads have uniform attention.
Early in training, they focus primarily on \(I(1)\), the most statistically important positions.
This is the \emph{competitive phase}: the heads compete to learn from these positions, resulting in overlapping attention patterns with initialization-induced deviations.
The dynamics then enter the \emph{cooperative phase}, where heads gradually specialize in different patterns, with one head learning from \(I(2)\) while another eventually focuses on \(I(3)\).
Notably, training consistently recovers the sparse, disjoint representation, even though the task admits multiple decompositions.

To probe these dynamics in function space, we train transformers with restricted maximum context lengths \(c = 4, 8, 12\).
With \(c = 4\), the model can only access \(I(1)\); with \(c = 8\), it can additionally access \(I(2)\); with \(c = 12\), it can access all relevant positions and solve the task fully.
In \Cref{fig:kl} (right), we plot the Kullback-Leibler (KL) divergence between each restricted model and a full-context transformer over training.
The full-context transformer first approaches the \(c = 4\) model, then the \(c = 8\) model, before approaching the \(c = 12\) model.
This indicates that the transformer jointly learns each attention pattern and its associated feature matrix at every stage, rather than acquiring them separately.

We obtain a similar picture by comparing the transformer's predictions to ground-truth predictors that depend only on the positions in \(I(1)\), \(I(1) \cup I(2)\), and \(I(1) \cup I(2) \cup I(3)\):
\begin{equation}\label{eq:ground_truth}
      f_{A_{1:i}^\star} = \softmax\left(\sum_{k=1}^i A_k^\star \sum_{j \in I(k)} \alpha_j x_{t-j}\right)\,.
\end{equation}
\Cref{fig:kl} (left) shows the same incremental pattern: the transformer first approaches \(f_{A_1^\star}\), then \(f_{A_{1:2}^\star}\), before approaching \(f_{A_{1:3}^\star}\).
This mirrors what \citet{edelman2024evolution} observed: stages characterized by sub-\(n\)-grams.

\begin{figure*}[t!]
      \begin{center}
            \includegraphics[page=1,width=0.4\linewidth]{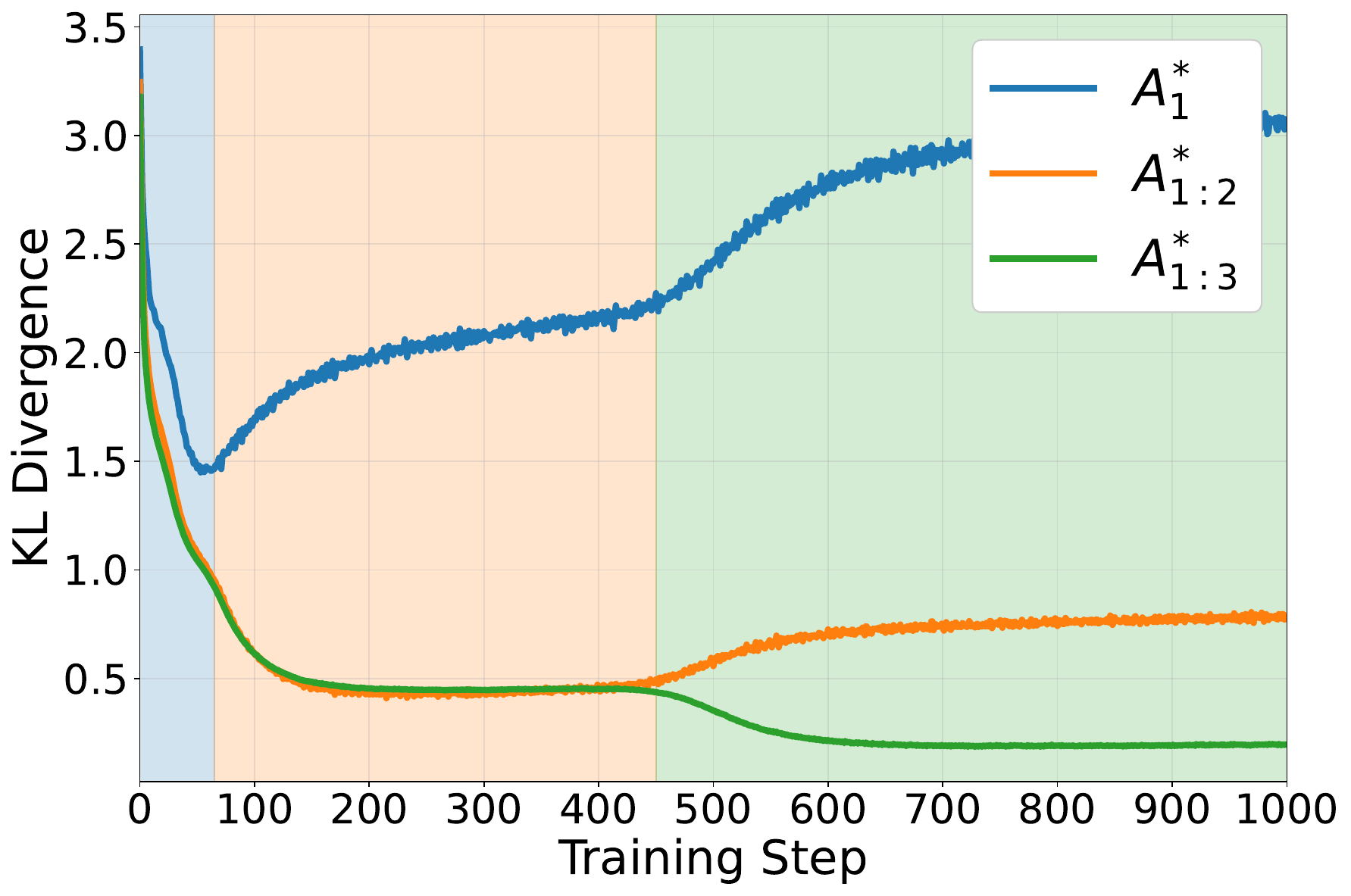}
            \includegraphics[page=1,width=0.4\linewidth]{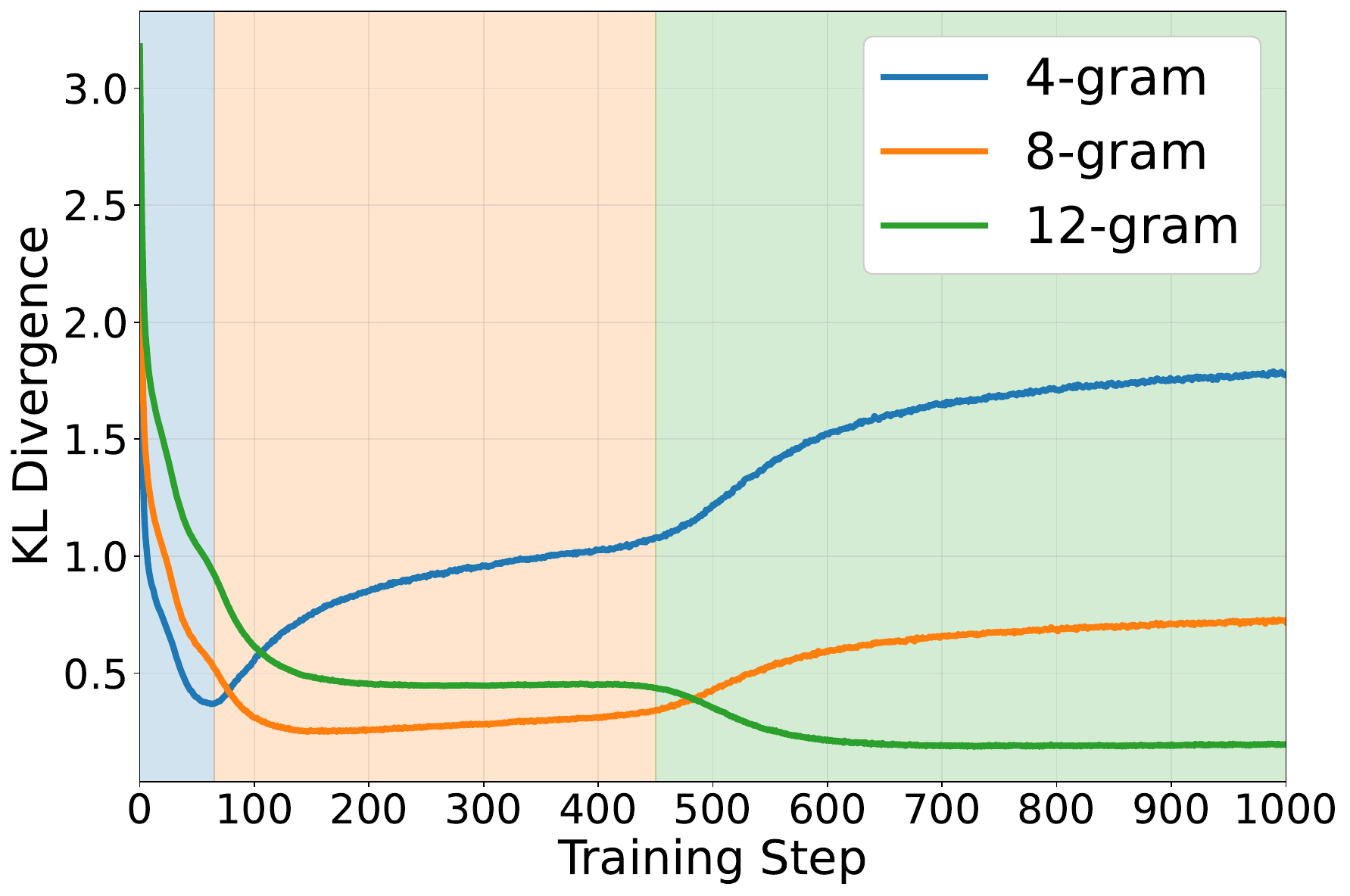}
      \end{center}
      \caption{
            Stage-wise learning in function space: each curve traces the proximity of the full model to a predictor along the training trajectory.
            Curves to intermediate predictors dip and then rise as the trajectory passes through their neighborhood and incorporates additional positions, while the curve to the full predictor decreases and stays low.
            (Left) KL divergence between the ground-truth predictors that depend only on the positions in \(I(1)\), \(I(1) \cup I(2)\), and \(I(1) \cup I(2) \cup I(3)\), and the predictions of the full-context transformer.
            (Right) KL divergence between the predictions of transformers with restricted context lengths \(c = 4, 8, 12\), and those of the full-context transformer.
      }
      \label{fig:kl}
\end{figure*}

\subsection{Ablation Studies}
\label{sec:minimal}

\begin{figure*}
      \begin{center}
            \includegraphics[page=1,width=0.4\linewidth]{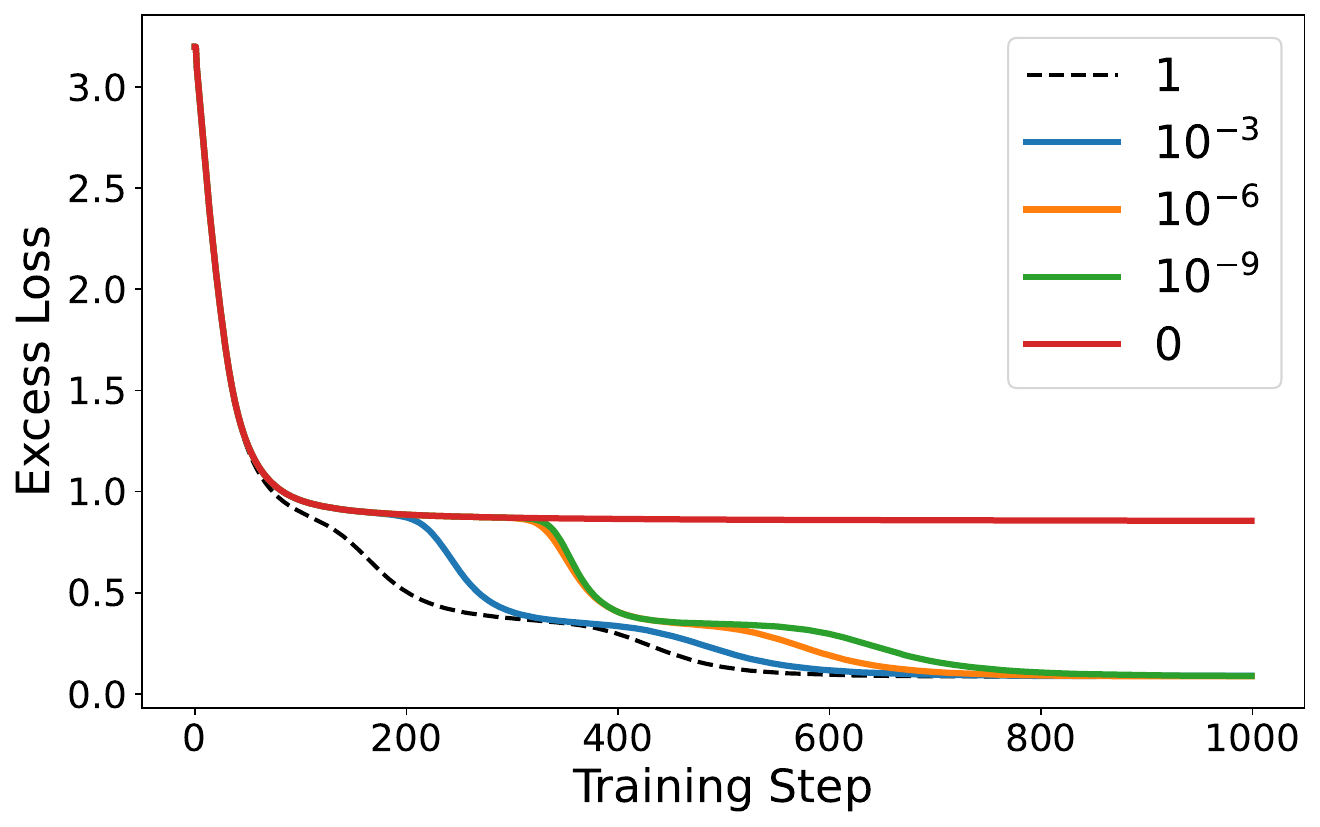}
            \includegraphics[page=1,width=0.4\linewidth]{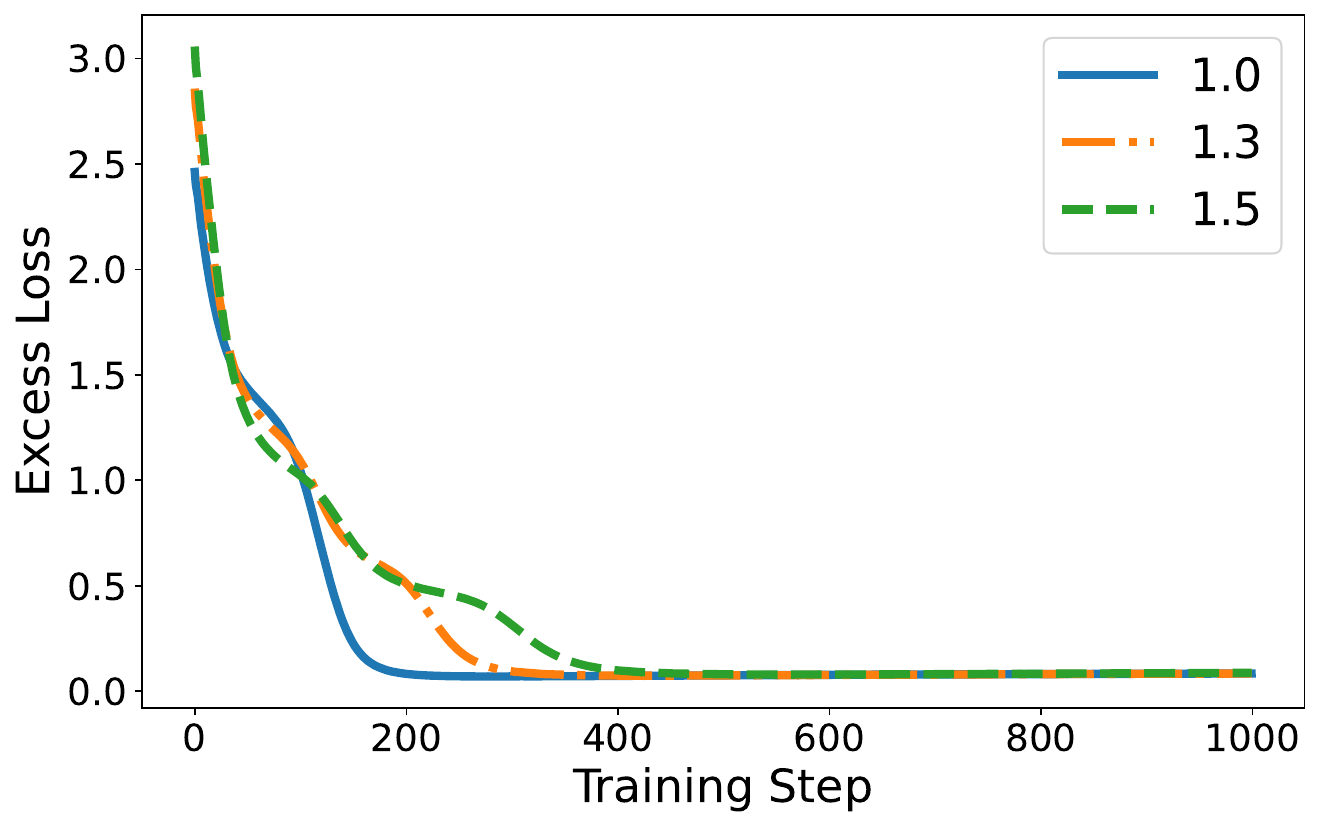}
      \end{center}
      \caption{
            (Left) Excess loss of the minimal architecture with different initialization scales.
            Smaller scales yield slower learning; at \(u=0\), the heads remain exactly symmetric and only the first pattern is learned.
            (Right) Excess loss of the minimal architecture with different multiplicative constants \(m\) that determine the importance hierarchy.
            When \(m > 1\), the importance gap between features attracts the initialization to each saddle sequentially, yielding three distinct stages.
            When \(m=1\), there is no importance ordering, so the initialization is not preferentially attracted to any saddle; one pattern is learned first due to random symmetry breaking, and the remaining two are acquired simultaneously, collapsing to two stages.
      }
      \label{fig:ablations}
\end{figure*}

To identify the architectural components needed for incremental learning, we remove elements absent from the idealized construction in \Cref{sec:representation}, such as layer normalization and residual connections.
We further reduce the product \(K_k^\top Q_k\) to a single matrix \(A_k\).
Individually or in combination, these simplifications do not significantly alter the incremental dynamics; we plot the results in \Cref{fig:main}.

We perform ablation studies with this minimal architecture.
We first vary the initialization scale of the attention matrices \(A_k\) and set value matrices to zero to isolate the attention dynamics.
We initialize \(A_k\) from a uniform distribution on \([-u, u]\), where \(u\) is the initialization scale.
\Cref{fig:ablations} (left) shows that the speed of incremental learning is affected by the initialization scale, with smaller scales resulting in slower learning.
At the extreme \(u=0\), we observe that the model only learns a single pattern and does not progress further.
The heads are exactly symmetric at \(u=0\), and breaking this symmetry requires a small perturbation.

We also vary the multiplicative constant \(m\) that controls the importance hierarchy in the data generation process.
\Cref{fig:ablations} (right) shows that the number of distinct stages decreases to two for \(m=1\), where there is no importance ordering.
When \(m=1\), the model first learns one pattern and then acquires the remaining two simultaneously.
For \(m=1.3\) and \(m=1.5\), three distinct stages emerge, though more intertwined for \(m=1.3\) with less pronounced bumps in the loss curve.
Overall, larger importance gaps yield more clearly separated stages; when the gaps shrink, multiple patterns can be acquired at nearly the same time.

Beyond initialization scale and the importance hierarchy, \Cref{app:additional_experiments} reports ablations over reversed orderings, overlapping intervals, non-uniform \(\alpha_i\), and additional heads.
The competitive-to-cooperative dynamics appear across these variations, indicating that the staged behavior is not specific to the particular choices in our default setup.
In less structured settings, such as overlapping intervals, there can be multiple sparse solutions, and the initialization and feature matrices can influence which solution is selected.

\subsection{Dataset Size and Effective Complexity}
\label{sec:generalization}

Lastly, we study the effect of dataset size on incremental learning.
As we decrease the dataset size below certain thresholds, the number of stages observed in training decreases, as shown in \Cref{fig:generalization} (left).
\Cref{fig:generalization} (right) plots the KL divergence between restricted-context transformers and the low-data trained transformer.
The trend is similar to that observed in \Cref{fig:kl}, but with fewer stage transitions as the dataset size decreases.

In low-data regimes, the early-stopped transformer approaches simpler effective predictors that depend only on the most important positions.
This is consistent with a regularization effect associated with the training trajectory, where shorter effective context corresponds to a simpler, misspecified hypothesis class.
\citet{yuksel2025long} argue that such misspecification can be beneficial in low-data regimes, improving sample efficiency.
Transformers with early stopping seem to select the effective context length automatically, suggesting a possible sample-complexity benefit.

Notably, this structure appears to persist in the overfit regime: with 600 samples, for example, the model concentrates attention on the first two patterns and gives minimal weight to the third, even when the corresponding feature matrices overfit to training noise.
This hierarchical allocation could explain why early stopping recovers a clean predictor on the top-\(k\) learnable patterns.

\begin{figure*}[!ht]
      \begin{center}
            \includegraphics[width=0.9\linewidth]{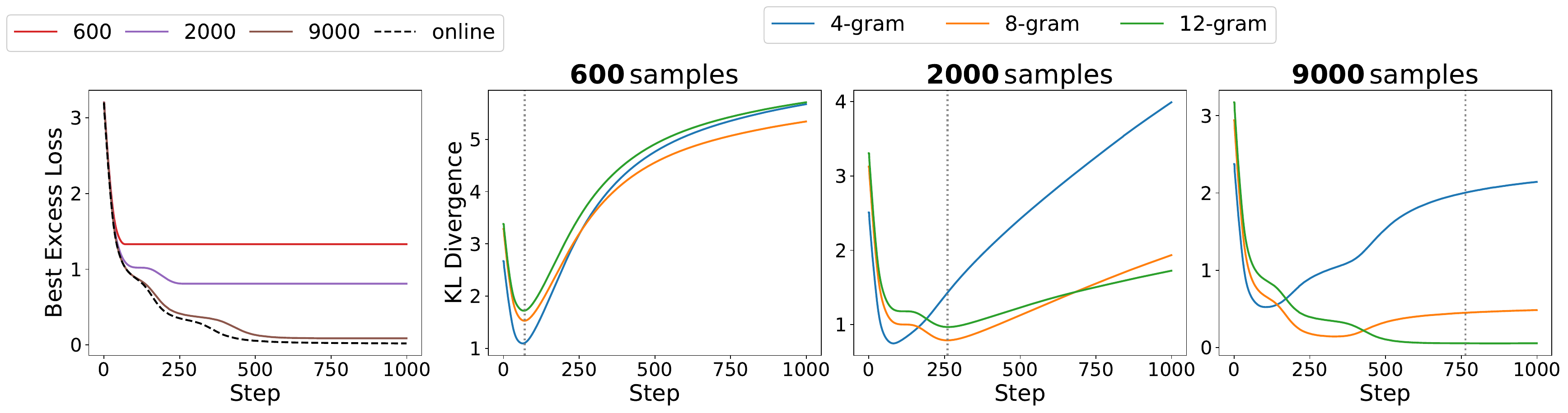}
      \end{center}
      \caption{
            Smaller datasets reduce the number of stages reached during training.
            (Left) The best validation loss as a function of the dataset size.
            (Right) The KL divergence between restricted-context transformers and the trained transformer.
            Dashed lines indicate the first step that obtains the best excess loss.
      }
      \label{fig:generalization}
\end{figure*}

\section{Training Dynamics on Regression Variant}

In this section, we introduce a tractable surrogate regression variant of the classification task from \Cref{sec:markov}, using the minimal architecture of \Cref{sec:minimal}.
We study its training dynamics by analyzing the population gradient flow.

\subsection{The Regression Model}
\label{sec:regression}

For any distributions \(\gP_X\) and \(\gP_\xi\), consider the regression task \(\left(x_1, \ldots, x_T\right) \sim \gP_X\), \(\xi \sim \gP_\xi\), and
\begin{equation*}
      y^\star(X) = \sum_{k=1}^{\nheads} A_k^\star X s_k^\star + \xi\,,
\end{equation*}
where \(s_k^\star \in \sR^T\) is the vector with entries \(\alpha_i\) for \(i \in I(k)\) and zero otherwise.
For this section, we set \(|I(k)| = 1\) for all \(k\) for simplicity.
Let \(m_k^\star = \|A_k^\star\|_F\) and \(V_k^\star = A_k^\star / \|A_k^\star\|_F\) for all \(k \in [\nheads]\) with \(m_1^\star > \ldots > m_{\nheads}^\star\) without loss of generality.

We make some assumptions regarding the distributions \(\gP_X, \gP_\xi\), and the feature matrices.
\begin{assumption}
      The noise is zero-mean, i.e., \(\E[\xi] = 0\) and the data is normalized, i.e.,
      \begin{equation*}
            \forall i, j \in [T]\,, \quad \E\left[x_i x_j^\top\right] = \1_{i = j} I_d\,.
      \end{equation*}
\end{assumption}
\begin{assumption}
      The feature matrices are orthogonal, i.e.,
      \begin{equation*}
            \forall i, j \in [\nheads]\,, \quad \langle V_i^\star, V_j^\star \rangle = \Tr\left((V_i^\star)^\top V_j^\star\right) = \1_{i = j}\,.
      \end{equation*}
\end{assumption}

These assumptions are made for analytical tractability.
The isotropic covariance assumption turns the population objective into a pure tensor-factorization problem rather than a matrix-sensing problem, and the orthogonality assumption makes the feature ordering \(m_1^\star > \cdots > m_\nheads^\star\) unambiguous.
Note that any isotropic covariance reduces to this case by absorbing the scale into the norms \(m_k^\star\); e.g., random one-hot tokens.
Relaxing them would require replacing this scalar ordering with an ordering in the geometry induced by the data covariance and target tensor, which we leave outside the scope of the present analysis.

We use the minimal architecture obtained in \Cref{sec:minimal} with the following modifications.
The attention scores are computed only via the inner product of position vectors, instead of the concatenated position and data vectors.
As the problem is a regression task on the final token, we only need the last row of the matrix \(Q_k\) which we denote by \(q_k \in \sR^T\).
Then, the resulting model is as follows:
\begin{equation*}
      y_{\theta}(X) = \sum_{k=1}^{\nheads} V_k X s_k\,, \quad \text{with} \quad s_k = \softmax(q_k)\,,
\end{equation*}
where \(\theta = (V_1, \ldots, V_{\nheads}, q_1, \ldots, q_{\nheads})\) are the learnable parameters.
We set the loss to the mean-squared loss:
\begin{equation}\label{eq:loss}
      \gL(\theta) = \frac{1}{2} \E_{x_1,\ldots,x_{T}, \xi}\left[\|y_{\theta}(X) - y^\star(X,\xi)\|^2\right]\,.
\end{equation}

We study the gradient flow dynamics of the population loss in \Cref{eq:loss}, i.e., we consider the continuous-time limit of gradient descent with infinitesimal step size.

\paragraph{Tensor Notation.}
We construct tensors that are sums of outer products of matrices and vectors, i.e., \(\mM = \sum_{k=1}^{\nheads} B_k \otimes v_k\), where \(B_k \in \sR^{d \times d}\) and \(v_k \in \sR^T\).
The product \(X^\top \mM\) denotes \(X^\top \mM = \sum_{k=1}^{\nheads} \langle B_k, X \rangle v_k\) whereas the product \(\mM v\) denotes \(\mM v = \sum_{k=1}^{\nheads} B_k \langle v_k, v\rangle\).
The inner product between two tensors \(\mM = \sum_{k=1}^{\nheads} B_k \otimes v_k\) and \(\mN = \sum_{k=1}^{\nheads} B_k' \otimes v_k'\) is denoted by \(\langle \mM, \mN \rangle = \sum_{k=1}^{\nheads} \langle B_k, B_k' \rangle \langle v_k, v_k' \rangle\).
The Frobenius norm of a tensor \(\mM\) is given by \(\|\mM\|_F = \sqrt{\langle \mM, \mM \rangle}\).

\Cref{prop:matrix} reinterprets this dynamical system as gradient flow for a tensor factorization problem.
\begin{restatable}{proposition}{matrix}\label{prop:matrix}
      The gradient flow dynamics of the loss in \Cref{eq:loss} is equivalent to those of \(\gL(\theta) = \dfrac{1}{2} \|\mG - \mP\|_F^2\)
      \begin{equation*}
            \text{where} \enspace \mP = \sum_{k=1}^{\nheads} V_k \otimes s_k\,, \enspace \text{and} \enspace \mG = \sum_{k=1}^{\nheads} m_k^\star \left(V_k^\star \otimes s_k^\star\right)\,.
      \end{equation*}
\end{restatable}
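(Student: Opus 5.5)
The plan is to show that the population loss in \Cref{eq:loss} equals \(\frac{1}{2}\|\mG - \mP\|_F^2\) plus a constant independent of \(\theta\). Gradient flow depends only on the gradient of the objective, so this immediately gives equivalence of the dynamics.

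First I rewrite both the model and the target as tensor contractions. Since \(s_k\) and \(s_k^\star\) are vectors in \(\sR^T\), we have \(V_k X s_k = \sum_{i=1}^T (s_k)_i V_k x_i\). Hence \(y_\theta(X) = \sum_{i} \mP e_i\, x_i\), where \(\mP e_i = \sum_k V_k (s_k)_i \in \sR^{d\times d}\) is the \(i\)-th matrix slice of \(\mP\), and \(\mP e_i\) acts on \(x_i\) by matrix multiplication. For the target I substitute \(A_k^\star = m_k^\star V_k^\star\) to get \(y^\star(X) = \sum_i (\mG e_i) x_i + \xi\). The residual is then
\[
r = \sum_{i=1}^T D_i x_i - \xi, \qquad D_i = (\mP - \mG) e_i .
\]

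Next I expand \(\frac{1}{2}\E\|r\|^2\). The cross term \(\E[\langle D_i x_i, \xi\rangle]\) needs to vanish. The assumption only states \(\E[\xi]=0\), so I will interpret the setup as \(\xi\) being drawn independently of \(X\), as the notation \(X \sim \gP_X\), \(\xi \sim \gP_\xi\) suggests. Then \(\E[x_i^\top D_i^\top \xi] = \E[x_i]^\top D_i^\top \E[\xi] = 0\). The noise term \(\frac{1}{2}\E\|\xi\|^2\) is a constant. The main term is
\[
\frac{1}{2}\sum_{i,j}\E\left[x_j^\top D_j^\top D_i x_i\right] = \frac{1}{2}\sum_{i,j}\Tr\left(D_j^\top D_i\, \E[x_i x_j^\top]\right) = \frac{1}{2}\sum_i \|D_i\|_F^2 ,
\]
where the last step uses the normalization assumption \(\E[x_ix_j^\top] = \1_{i=j}I_d\).

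Finally I verify that \(\sum_i \|D_i\|_F^2 = \|\mP - \mG\|_F^2\) under the paper's tensor inner product. Write \(\mP - \mG = \sum_\ell B_\ell \otimes v_\ell\). Then \(\sum_i \|\sum_\ell B_\ell (v_\ell)_i\|_F^2 = \sum_{\ell,\ell'} \langle B_\ell, B_{\ell'}\rangle \sum_i (v_\ell)_i (v_{\ell'})_i = \langle \mP - \mG, \mP - \mG\rangle\). This requires the inner product to be the bilinear extension including cross terms between distinct \(k\), which is how I will read the definition. Concluding, \(\gL(\theta) = \frac{1}{2}\|\mG - \mP\|_F^2 + \frac{1}{2}\E\|\xi\|^2\), so the gradients in \(\theta=(V_k,q_k)\) coincide through the softmax parametrization \(s_k = \softmax(q_k)\).

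There is no serious obstacle here. The only delicate points are the two interpretive ones: that the noise is independent of \(X\) (or at least uncorrelated with each \(x_i\)), and that the tensor inner product is extended bilinearly.
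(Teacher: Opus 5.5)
Your proof is correct. It differs from the paper's in that it works with the objective itself rather than with the gradients.

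The paper's route is as follows. It establishes the second-moment identities $\E[(Xv_1)(Xv_2)^\top]=\langle v_1,v_2\rangle I_d$ and $\E[v_1^\top X^\top Q X v_2]=\langle v_1,v_2\rangle\Tr(Q)$. It uses them to compute $\nabla_{V_i}\gL$ and $\nabla_{q_i}\gL$ explicitly, the latter through the softmax Jacobian $\Pi(s_i)$. It then states, without further computation, that the resulting flow $\dot V_i=(\mG-\mP)s_i$, $\dot q_i=\Pi(s_i)\,V_i^\top(\mG-\mP)$ is gradient flow on $\frac12\|\mG-\mP\|_F^2$.

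You instead prove the stronger identity $\gL(\theta)=\frac12\|\mG-\mP\|_F^2+\frac12\E\|\xi\|^2$. Equality of the gradients then follows at once for any parametrization of the $s_k$, with no differentiation through the softmax. This also supplies the verification that the paper's last step leaves implicit. What the paper's route gives in addition is the explicit form of the flow, which is what \Cref{eq:dynamics} and the later theorems actually use. From your identity, those formulas need one more easy differentiation.

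Both of your interpretive points are right and match what the paper silently uses. Its formula for $\nabla_{V_i}\gL$ drops the term $\E[\xi(Xs_i)^\top]$, so it too needs $\xi$ uncorrelated with $X$. Its formula $\sum_j\langle s_i,s_j\rangle V_j$ contains cross terms with $i\neq j$, so the tensor inner product must be the bilinear extension $\sum_{k,k'}\langle B_k,B'_{k'}\rangle\langle v_k,v'_{k'}\rangle$, not the index-matched sum written in the notation paragraph. Your additive-constant argument also needs $\E\|\xi\|^2<\infty$, which is implicit anyway for the loss to be finite.
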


\paragraph{Attention Reparameterization.}
Note that due to the softmax operation, \(\sum_{i} q_i\) is always constant, and thus we can restrict \(q_k\) to have a zero mean without loss of generality.
This implies that there is a one-to-one correspondence between \(q_k\) and \(s_k\) in the subspace of zero-mean vectors.
Therefore, it is possible to analyze the dynamics in terms of \(s_k\) instead of \(q_k\) with the notation \(\Pi(s) = \left(\diag(s) - s s^\top\right)\):
\begin{equation}\label{eq:dynamics}
      \dot{V}_k = \left(\mG - \mP\right) s_k\,, \quad \dot{s}_k = \Pi(s_k)^2 \left(V_k^\top \left(\mG - \mP\right)\right)\,.
\end{equation}

\paragraph{Numerical Simulations.}
We simulate these differential equations with initialization \(V_i = 0\) and \(s_i \approx \frac{1}{T} 1_T\).
The results recapitulate the incremental learning behavior observed in \Cref{fig:attention}.
We present the results in \Cref{app:simulations}.

\subsection{Coupled Dynamics Describe the Competitive Phase}
\label{sec:coupled}

We analyze a sufficient idealized regime for the competitive phase: near-symmetric initialization induces coupled dynamics, with \(s_1(0) = s_k(0), V_1(0) = V_k(0)\) for all \(k\).
Once the heads are coupled, they co-evolve, i.e., \(s_k(0) = s(0), V_k(0) = V(0)\) for all \(k\).

This leads to the following coupled dynamics:
\begin{equation*}
      \dot{V} = \left(\mG s - \nheads \|s\|^2 V\right)\,, \quad \dot{s} = \Pi(s)^2 \left(V^\top \mG - \nheads \|V\|_F^2 s\right)\,.            
\end{equation*}

\begin{restatable}{theorem}{coupling}\label{thm:coupling}
      Assume that the initialization satisfies the following for all \(k \in [\nheads]\):
      \begin{equation}\label{eq:ordering}
            \langle V(0), V_1^\star\rangle \geq \langle V(0), V_k^\star \rangle\,, \enspace \langle s(0), s_1^\star\rangle \geq \langle s(0), s_k^\star \rangle\,.
      \end{equation}
      Then, the dynamics of \(V\) and \(s\) converge to the following fixed point:
      \begin{equation}\label{eq:fixed_point}
            V(\infty) = \dfrac{m_1^\star}{\nheads} V_1^\star\,, \quad s(\infty) = s_1^\star\,.
      \end{equation}
\end{restatable}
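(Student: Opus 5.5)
We reduce the coupled system to finitely many scalar quantities and then run a monotonicity and Lyapunov argument. With \(|I(k)|=1\), write \(s_k^\star = e_{i_k}\) for distinct positions \(i_k\), and decompose \(V = \sum_k a_k V_k^\star + V_\perp\), where \(V_\perp\) is orthogonal to every \(V_k^\star\). By Assumption 2, \(\mG s = \sum_k m_k^\star s_{i_k} V_k^\star\) and \(V^\top\mG = \sum_k m_k^\star a_k e_{i_k}\). The dynamics therefore split as follows.
\begin{itemize}
\item The tangential coefficients satisfy \(\dot a_k = m_k^\star s_{i_k} - \nheads\|s\|^2 a_k\).
\item The orthogonal part satisfies \(\dot V_\perp = -\nheads\|s\|^2 V_\perp\). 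Since \(\|s\|^2\ge 1/T\) on the simplex, \(V_\perp\) decays exponentially and can be treated as a vanishing perturbation.
\item The attention evolves by \(\dot s = \Pi(s)^2 g\), with \(g = \sum_k m_k^\star a_k e_{i_k} - \nheads\|V\|_F^2 s\). This is the image under the softmax Jacobian of the logit flow \(\dot q = \Pi(s) g\). Thus the coupled system is itself a gradient flow of \(f(V,q) = \tfrac12\|\mG - \nheads V\otimes \softmax(q)\|_F^2\), so \(f\) is nonincreasing along trajectories and the trajectory of \(V\) stays bounded.
\end{itemize}

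The first substantive step is to show that the ordering in \Cref{eq:ordering} is preserved and in fact strengthened. I would track the differences \(a_1 - a_k\) and the logit gaps \(q_{i_1} - q_{i_k}\) (equivalently, \(\log(s_{i_1}/s_{i_k})\)). Since \((\Pi(s)g)_j = s_j(g_j - \langle s,g\rangle)\), we get
\[
\tfrac{d}{dt}(q_{i_1}-q_{i_k}) = s_{i_1}(g_{i_1}-\langle s,g\rangle) - s_{i_k}(g_{i_k}-\langle s,g\rangle),
\qquad
\tfrac{d}{dt}(a_1-a_k) = m_1^\star s_{i_1} - m_k^\star s_{i_k} - \nheads\|s\|^2(a_1-a_k).
\]
Using \(m_1^\star > m_k^\star\), a comparison argument on the boundary of the region \(\{a_1\ge a_k,\ s_{i_1}\ge s_{i_k}\}\) shows this region is forward invariant. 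On its boundary the right-hand sides are nonnegative. The \(V\)-equation gives this immediately. The \(s\)-equation needs \(g_{i_1}\ge g_{i_k}\), which follows from \(m_1^\star a_1 - \nheads\|V\|^2 s_{i_1} \ge m_k^\star a_k - \nheads\|V\|^2 s_{i_k}\) when \(s_{i_1}=s_{i_k}\). I would also check that the same argument gives \(a_1 > 0\) after a short time, since \(\dot a_1 \ge m_1^\star s_{i_1} > 0\) whenever \(a_1\le 0\).

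Next, I would show that \(s\) concentrates on \(i_1\). The plan is to prove that \(q_{i_1} - q_j \to \infty\) for every \(j\ne i_1\). For positions \(j \notin \{i_k\}\), \(g_j = -\nheads\|V\|^2 s_j \le 0\), while \(g_{i_1}\) is eventually bounded below by a positive constant, so those logits are pushed down relative to \(q_{i_1}\). For the other signal positions \(i_k\), the strict gap \(m_1^\star a_1 > m_k^\star a_k\) must persist uniformly; I would deduce this from the ODE for \(a_1 - a_k\) together with invariance. Once \(s\to e_{i_1}\), the \(a\)-equations become asymptotically autonomous linear ODEs: \(a_1\to m_1^\star/\nheads\), \(a_k\to 0\) for \(k\ne1\), and \(V_\perp\to0\). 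This yields \Cref{eq:fixed_point}.

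The main obstacle is excluding convergence to other critical points of \(f\). The fixed point \(s = e_{i_1}\) corresponds to infinite logits, so standard Łojasiewicz or LaSalle arguments do not directly apply on the non-compact \(q\)-space. The dangerous alternatives are \(V=0\) with \(s\) uniform, and interior saddles where \(s\) is spread over several \(i_k\) in balance. I would first try a LaSalle argument on the compactified simplex for \(s\). Using strict decrease of \(f\) and the invariant ordering, I would show that every \(\omega\)-limit point is a critical point of the restricted flow lying in the closed ordered region. Among these, only the vertex \(s=e_{i_1}\) has \(f\) below \(f(0)\) while remaining compatible with \(a_1>0\) and a strictly growing logit gap. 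The fallback, if balanced saddles cannot be excluded by energy alone, is a direct quantitative lower bound on \(\tfrac{d}{dt}(q_{i_1}-q_{i_k})\) once \(a_1\) is bounded away from zero, which would also give a convergence rate.
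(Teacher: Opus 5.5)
There is a genuine gap in the final step. Your skeleton matches the paper's: the split $V=\sum_k a_kV_k^\star+V_\perp$, monotonicity of $f=\tfrac12\|\mG\|_F^2-\nheads\phi$, and invariance of the ordered region. But the decisive step, identifying the limit, is not carried out, and the claims you lean on there are wrong.

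\begin{itemize}
\item $g_{i_1}$ is not eventually bounded below by a positive constant. At the target, $V^\top\mG=\nheads\|V\|_F^2 s=\tfrac{(m_1^\star)^2}{\nheads}e_{i_1}$, so $g\to 0$. Together with $\Pi(s)\to 0$ this gives $\dot q=\Pi(s)g\to 0$. Any lower bound on $\tfrac{d}{dt}(q_{i_1}-q_j)$ must therefore degenerate; near the target the gaps grow only logarithmically.
\item Your ``dangerous alternatives'' are not rest points. For $V=0$ with uniform $s$ you have $\dot V=\mG s\neq 0$, and balanced saddles do not exist.
\end{itemize}

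The missing idea, which the paper's proof relies on, is a classification of rest points. If $\dot V=0$, then $V=\mG s/(\nheads\|s\|^2)$, which forces $\langle s,g\rangle=0$. Hence $\Pi(s)g=0$ means $g_j=0$ on $\mathrm{supp}(s)$. Writing $\lambda=\|\mG s\|^2/\|s\|^2$, this says $\lambda=(m_k^\star)^2$ whenever $s_{i_k}>0$, and $\lambda=0$ whenever some non-signal $s_j>0$. The $m_k^\star$ are distinct, and equivalently the paper's Rayleigh-quotient bound gives $\lambda\le(m_1^\star)^2$ with equality iff $s=s_1^\star$. So the only rest points compatible with the ordering are the target and the face $\{V=0,\ s_{i_1}=\dots=s_{i_\nheads}=0\}$. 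None of them has $s$ in the open simplex.

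With this classification, LaSalle on the compactified simplex works, except for one case. The face has $\phi=0$, so your criterion ``$f$ below $f(0)$'' excludes it only if $\phi(0)>0$. Since $\phi$ increases strictly along the trajectory, $\phi(0)\ge 0$ suffices; for example $V(0)=0$. The hypotheses, however, allow $\phi(0)<0$ (e.g.\ all $a_k<0$), and then an extra argument is needed. The paper's remark that $\langle s_1^\star,s_\infty\rangle>0$ ``as it is the maximum'' does not cover this case either.

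Separately, your invariance step fails as stated. On $\{s_{i_1}=s_{i_k}\}$ you need $m_1^\star a_1\ge m_k^\star a_k$, which does not follow from $a_1\ge a_k$ when $a_1<0$. Take $V(0)=-(V_1^\star+V_2^\star)$ and $s(0)=\tfrac1T 1_T$, which satisfy the hypotheses. Then $\tfrac{d}{dt}(q_{i_1}-q_{i_2})(0)=\tfrac1T(m_2^\star-m_1^\star)<0$, so the trajectory leaves the region immediately, and ``$a_1>0$ after a short time'' does not undo this. The region $\{a_1\ge\max(a_k,0),\ s_{i_1}\ge s_{i_k}\}$ is invariant, because $\dot a_1=m_1^\star s_{i_1}>0$ at $a_1=0$. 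You should therefore either assume $\langle V(0),V_1^\star\rangle\ge 0$ or handle the transient separately. The paper's boundary computation has the same sign issue.
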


\Cref{thm:coupling} is based on an ordering argument.
As long as the initialization satisfies the ordering condition in \Cref{eq:ordering}, the dynamics of \(V\) and \(s\) are such that \(\dot{V}\) and \(\dot{s}\) reinforce the same order.
On its own, \Cref{thm:coupling} does not explain what happens when the heads do not start with the same initialization.
\Cref{thm:bounded} establishes that when many heads are initialized with a small deviation from the symmetric initialization, the deviation from the symmetric initialization is bounded for a finite time that we can precisely control.
Therefore, the initialization influences the coupling time of different heads, after which they might start to diverge.

\begin{restatable}{theorem}{bounded}\label{thm:bounded}
      Assume that the following holds for \(\epsilon \ll 1\):
      \begin{equation*}
            \forall k \in [\nheads]: \|V(0) - V_k(0)\|_F \leq \epsilon \enspace \text{and} \enspace \|s(0) - s_k(0)\|_2 \leq \epsilon\,.
      \end{equation*}
      Then, there exists a constant \(c_1\) such that \(\forall t \in \left[0, \frac{1}{- c_1 \log \epsilon}\right]\):
      \begin{equation*}
            \|V_k(t) - V(t)\|_F \leq \epsilon e^{c_1 t} \quad \text{and} \quad \|s_k(t) - s(t)\|_2 \leq \epsilon e^{c_1 t}\,.
      \end{equation*}
\end{restatable}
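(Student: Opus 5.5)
This is a Grönwall-type stability estimate for the full \(\nheads\)-head system in \Cref{eq:dynamics}, compared with the coupled trajectory \((V(t),s(t))\). Let \(\delta_k(t) = \|V_k(t) - V(t)\|_F + \|s_k(t) - s(t)\|_2\) and \(\delta(t) = \max_k \delta_k(t)\). I would bound \(\dot\delta\) linearly by \(\delta\) on the region where all iterates stay bounded, and then use the short time horizon to keep them in that region.

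First, I would establish a priori bounds. Each \(s_k\) lies in the simplex, so \(\|s_k\|_2 \le 1\), and \(\Pi(s)\) has operator norm at most \(2\). For the coupled dynamics, \(\dot V = \mG s - \nheads\|s\|^2 V\) shows that \(\|V(t)\|_F\) stays bounded by \(\max(\|V(0)\|_F, \|\mG\|_F/(\nheads \min_t\|s\|^2)) \lesssim \|V(0)\|_F + T\|\mG\|_F/\nheads\), using \(\|s\|^2 \ge 1/T\). For the individual heads I would run a bootstrap. Suppose \(\delta(\tau) \le 1\) on \([0,t]\). Then \(\|V_k\|_F \le \|V\|_F + 1 \le R\) for a constant \(R\) depending only on \(\mG\), \(T\), \(\nheads\) and the initialization, and \(\|\mG - \mP\|_F \le \|\mG\|_F + \nheads R\).

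Second, I would prove Lipschitz estimates for the vector fields on this bounded set. The map \((V,s) \mapsto (\mG - \mP)s\) is a polynomial in \((V_j, s_j)_j\), and so is \((V,s) \mapsto \Pi(s)^2 V^\top(\mG - \mP)\), with coefficients bounded in terms of \(R\). I would write \(F_k\) for the right-hand side of \Cref{eq:dynamics} for head \(k\) and compare it with \(F_k\) evaluated at the fully coupled configuration \(V_j = V\), \(s_j = s\) for all \(j\); at that configuration it reduces exactly to the coupled field. Telescoping over which arguments change gives
\[
\|\dot V_k - \dot V\|_F + \|\dot s_k - \dot s\|_2 \le L\Big(\delta_k + \sum_{j} \delta_j\Big) \le L(1+\nheads)\,\delta .
\]
The constant \(L\) is polynomial in \(R\) and \(\|\mG\|_F\). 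Integrating, and using that \(\delta\) is absolutely continuous as a max of finitely many absolutely continuous functions, gives \(\delta(t) \le \delta(0) + c_1\int_0^t \delta\) with \(c_1 = L(1+\nheads)\). Grönwall then yields \(\delta(t) \le \delta(0)e^{c_1 t}\). Since the hypotheses give \(\delta(0) \le 2\epsilon\), adjusting constants, or tracking each norm separately in place of the sum, gives the stated \(\epsilon e^{c_1 t}\) for each component.

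Third, I would close the bootstrap. The bound \(\epsilon e^{c_1 t} \le 1\) requires \(t \le -\log\epsilon / c_1\), and the stated window \(t \le 1/(-c_1\log\epsilon)\) is much shorter than that when \(\epsilon \ll 1\). On the window the bound is in fact at most \(\epsilon e^{1/|\log\epsilon|} \le e\epsilon\). So the assumption \(\delta \le 1\) is never violated, and a standard continuity argument extends it over the whole interval.

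The main obstacle is making \(c_1\) a genuine constant independent of \(\epsilon\) and \(t\). This depends on the uniform a priori bound \(R\) for the coupled trajectory, in particular the lower bound \(\|s\|^2 \ge 1/T\) that controls \(\|V\|\), and on the bootstrap keeping all heads in the same bounded region. I would handle this by taking \(R\) from the coupled solution's bound plus one, so that \(c_1\) depends only on \(T\), \(\nheads\), \(\|\mG\|_F\) and \(\|V(0)\|_F\).
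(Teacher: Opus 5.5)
Your proposal is correct and follows essentially the same route as the paper: compare each head with the coupled trajectory through $V_k - V$ and $s_k - s$, show that the derivatives of these deviations are $\gO(\epsilon)$ on a bounded region, and close with Grönwall. The differences are only technical. You bound $\|V(t)\|_F$ by a direct differential inequality using $\|s\|^2 \ge 1/T$, instead of the monotone Lyapunov function $\phi$ of \Cref{eq:lyapunov}. Your explicit bootstrap, with the deviation measured as $\max_k \max\{\|V_k - V\|_F, \|s_k - s\|_2\}$ so that it starts at $\epsilon$ rather than $2\epsilon$, makes rigorous the paper's informal ``as long as $\epsilon \ll 1$'' and gives constants that are genuinely uniform.
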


Lastly, we remark that the initialization in \Cref{thm:coupling} can be further relaxed to a wider basin of attraction around the symmetric initialization of interest.
This follows from a similar argument as in \citet{zucchet2026emergence} who has studied the escape time from this initialization when \(\nheads = 1\).

\begin{restatable}{remark}{initialization}\label{rmk:initialization}
      The initialization of interest is \(s_k(0) \approx \frac{1}{T} 1_T\) for all \(k \in [\nheads]\) as seen in \Cref{fig:attention}.
      By expanding the dynamics around this initialization with \(V_k \approx 0\) for all \(k \in [\nheads]\):
      \begin{equation*}
            \dot{V}_k(0) \approx \frac{1}{T} \mG 1_T\,, \quad \dot{s}_k(0) \approx 0\,.            
      \end{equation*}
      Similarly, second-order local approximation shows that \(s_k\) has the largest increase in the direction \(s_1^\star\).
      Therefore, we can quantify a wider basin of attraction for \Cref{thm:coupling} as all \(V_k\) and \(s_k\) move toward the initialization space defined by \Cref{eq:ordering}.
\end{restatable}

\subsection{Cooperation After Competition}
\label{sec:cooperative}

To study the cooperative phase after the initial competitive phase, we consider the dynamics of the loss at various initializations around the fixed point in \Cref{eq:fixed_point}.
Consider the following initialization scheme:
\begin{equation}\label{eq:initialization}
      \begin{split}
            V_1(0) &= \ldots = V_{\nheads-1}(0) \approx \dfrac{m_1^\star}{\nheads} V_1^\star\,, \quad V_{\nheads}(0) \approx \dfrac{m_1^\star}{\nheads} V_1^\star\,, \\
            s_1(0) &= \ldots = s_{\nheads-1}(0) = s_1^\star\,, \quad s_{\nheads}(0) \approx s_1^\star\,.
      \end{split}
\end{equation}
The dynamics of \(s_1, \ldots, s_{\nheads-1}\) remain fixed because the projection term vanishes at \(s_1^\star\), i.e., \(\Pi(s_k) = 0\) for all \(k \in [\nheads-1]\).
In addition, \(V_1, \ldots, V_{\nheads-1}\) are coupled due to the gradient flow.
Therefore, the whole system collapses to three equations: one for \(V\) that describes the ensemble and two for \(V'\) and \(s'\) that describe the offshooting head:
\begin{equation}\label{eq:cooperative}
      \begin{split}
            \dot{s'} &= \Pi(s')^2 \left(V'^\top \mG - (\nheads-1) \langle V', V\rangle s_1^\star - \|V'\|^2 s'\right)\,, \\
            \dot{V} &= m_1^\star \|s_1^\star\|^2 V_1^\star - (\nheads-1) \|s_1^\star\|^2 V - \langle s_1^\star, s'\rangle V'\,, \\
            \dot{V'} &= \mG s' - (\nheads-1) \langle s_1^\star, s'\rangle V - \|s'\|^2 V' \,,
      \end{split}
\end{equation}

We have a similar control to \Cref{thm:bounded} for the dynamics of \(V, V'\), and \(s'\).
\Cref{thm:boundedcoop} establishes that the deviation from the cooperative system is bounded for a finite time that we can precisely control.
This is due to a Lyapunov control argument in which the norms of \(V\) and \(V'\) are bounded.

\begin{restatable}{theorem}{boundedcoop}\label{thm:boundedcoop}
      Assume that the following holds for \(\epsilon \ll 1\):
      \begin{equation*}
            \begin{split}
                  \forall k \in [\nheads-1]: \|V(0) - V_k(0)\|_F \leq \epsilon, \|s_1^\star - s_k(0)\|_2 \leq \epsilon\,, \\
                  \quad \text{and} \quad \|V'(0) - V_{\nheads}(0)\|_F \leq \epsilon\,, \|s'(0) - s_{\nheads}(0)\|_2 \leq \epsilon\,.
            \end{split}
      \end{equation*}
      Let \(\Delta(t)\) be the deviation from the cooperative system in \Cref{eq:cooperative}:
      \begin{equation*}
            \begin{split}
                  \Delta(t) &= \max \Big\{\max_{k \in [\nheads-1]}\{\|V_k(t) - V(t)\|_F, \|s_k(t) - s(t)\|_2\}\,, \\
                 &\quad\quad \|V_{\nheads}(t) - V'(t)\|_F, \|s_{\nheads}(t) - s'(t)\|_2\Big\}\,.
            \end{split}
      \end{equation*}
      Assuming that \(\|s'(t) - s_1^\star\| \geq \delta\) for all \(t \in \sR\), there exists a universal constant \(c_1\) such that:
      \begin{equation*}
            \begin{split}
                  \Delta(t) \leq \epsilon e^{c_1 t} \,, \quad \forall t \in \left[0, \frac{1}{- c_1 \log \epsilon}\right]\,.                 
            \end{split}
      \end{equation*}
\end{restatable}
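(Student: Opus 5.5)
We compare the full system \Cref{eq:dynamics} with the reduced cooperative system \Cref{eq:cooperative} using a Grönwall-type argument. Write the full dynamics as $\dot z = F(z)$ with $z = (V_1,\dots,V_\nheads,s_1,\dots,s_\nheads)$. Embed the reduced trajectory as $\bar z(t) = (V(t),\dots,V(t),V'(t),s_1^\star,\dots,s_1^\star,s'(t))$. By construction, and because $\Pi(s_1^\star)=0$ freezes the ensemble attention, $\bar z$ solves $\dot{\bar z} = F(\bar z)$ exactly. We read $s(t)$ in $\Delta(t)$ as $s_1^\star$. Hence $\Delta(t)$ is controlled by $\|z(t)-\bar z(t)\|$, with $\|z(0)-\bar z(0)\|\le \sqrt{2\nheads}\,\epsilon$. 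It then suffices to show that $F$ is Lipschitz, with a constant $c_1$, on a region that contains both trajectories for $t\in[0,T_\epsilon]$, where $T_\epsilon = 1/(-c_1\log\epsilon)$.

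\textbf{Step 1 (a priori bounds).} The entries of $s_k$ and $s'$ lie in the simplex, so $\|s\|_2\le 1$ and $\|\Pi(s)\|_{op}\le 1$ hold automatically. For the value matrices, we use the Lyapunov argument the paper alludes to. Gradient flow gives $\frac{d}{dt}\gL \le 0$, so $\|\mG-\mP(t)\|_F \le \|\mG-\mP(0)\|_F =: R_0$. This is $O(m_1^\star)$ near the initialization \Cref{eq:initialization}. Then $\dot V_k = (\mG-\mP)s_k$ gives $\|\dot V_k\|_F\le R_0$, so $\|V_k(t)\|_F \le \|V_k(0)\|_F + R_0 t$. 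On $[0,T_\epsilon]$ with $T_\epsilon \le 1$ for small $\epsilon$, this is bounded by a constant $B$. The same argument applied to the reduced system, which is itself a gradient flow of $\gL$ restricted to an invariant submanifold, bounds $V$ and $V'$ by $B$.

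\textbf{Step 2 (Lipschitz estimate).} On the set $\{\|V\|_F\le B,\ s\in\text{simplex}\}$, every term of $F$ is a polynomial of bounded degree in $(V_k,s_k)$. This covers $(\mG-\mP)s_k$, the quantity $V_k^\top(\mG-\mP)$, and $\Pi(s_k)^2$. So $F$ is Lipschitz with a constant $L$ depending only on $B$, $\nheads$, and $m_1^\star$. We then bound each term of $\|F(z)-F(\bar z)\|$ by telescoping products, for example $\Pi(s_k)^2 - \Pi(s_1^\star)^2 = \Pi(s_k)(\Pi(s_k)-\Pi(s_1^\star)) + (\Pi(s_k)-\Pi(s_1^\star))\Pi(s_1^\star)$, and use $\|\Pi(s)-\Pi(\tilde s)\|\le 3\|s-\tilde s\|$.

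The assumption $\|s'(t)-s_1^\star\|\ge\delta$ enters here, and this is where we expect the main obstacle. Because the full system need not stay near the invariant submanifold, the offshooting component has to remain separated from the ensemble. Otherwise the reduced system degenerates into the coupled system of \Cref{thm:coupling}, and the labeling of which head is the offshooting one becomes ill-defined. We plan to use $\delta$ only to guarantee that the decomposition into ensemble and offshooting head is well-posed. Concretely, we would check that the $c_1$ obtained does not blow up as $\delta\to0$, or else allow $c_1$ to depend on $\delta$.

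\textbf{Step 3 (Grönwall and time window).} Steps 1 and 2 give $\frac{d}{dt}\|z-\bar z\|\le L\|z-\bar z\|$, so $\|z(t)-\bar z(t)\|\le \sqrt{2\nheads}\,\epsilon\, e^{Lt}$. We absorb $\sqrt{2\nheads}$ into the exponent by enlarging the constant: for $t\ge0$ we have $\sqrt{2\nheads}\le e^{c t}$ only if $t$ is not too small, so instead we set $c_1 = L + \log\sqrt{2\nheads}$ after rescaling, or we simply state the bound with a prefactor. Restricting to $t\le 1/(-c_1\log\epsilon)$ keeps $\epsilon e^{c_1 t}\le \epsilon e^{1/\log(1/\epsilon)} = O(\epsilon)$. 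This ensures that $z$ stays within the region where the bounds of Step 1 hold, which closes the bootstrap and yields the claimed bound on $\Delta(t)$.
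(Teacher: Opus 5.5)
Your Step 3 does not deliver the stated inequality as written. Gr\"onwall on the Euclidean norm of $z-\bar z$ gives $\Delta(t)\le\sqrt{2\nheads}\,\epsilon\,e^{Lt}$, and the proposed absorption $c_1=L+\log\sqrt{2\nheads}$ fails. The inequality $e^{c_1 t}\ge\sqrt{2\nheads}\,e^{Lt}$ holds only for $t\ge 1$, while the window $t\le 1/(-c_1\log\epsilon)$ is much shorter.

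The repair is to run Gr\"onwall directly in the block-max norm that defines $\Delta$, namely $\|z\|_{\max}=\max_k\max\{\|V_k\|_F,\|s_k\|_2\}$. On your set, $F$ is Lipschitz in this norm with constant $L'=\sqrt{2\nheads}\,L$. Integrating each block gives $\Delta(t)\le\Delta(0)+L'\int_0^t\Delta(\tau)\,d\tau$, hence $\Delta(t)\le\epsilon\,e^{L't}$ with no prefactor. This is how the paper tracks $\epsilon(t)=\max_j\max\{\|W_j\|_F,\|z_j\|\}$. Separately, no bootstrap is needed, since your Step 1 bounds hold unconditionally.

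With that fix your proof is correct. It shares the paper's skeleton: the reduced system is an exact invariant solution, the trajectories are bounded, and Gr\"onwall closes the argument. The difference is how you obtain boundedness.

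\begin{itemize}
\item The paper's potential $\phi(V,V',s')$ is exactly $\frac12\|\mG\|_F^2-\gL$ restricted to the reduced manifold. The paper uses $\phi(t)\ge\phi(0)$ as a coercive confinement of $(V,V')$ for all times.
\item You use the same monotonicity of $\gL$ only through the velocity bound $\|\dot V_k\|_F\le\|\mG-\mP\|_F\|s_k\|_2\le R_0$, integrated over a window of length at most one.
\end{itemize}

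Your route has two advantages. First, it bounds the full and the reduced trajectory at once. The paper bounds only the reduced one and recovers the full one through the smallness of the deviation. Second, it never uses $\delta$: your Lipschitz estimate holds on the whole convex set, including $s'=s_1^\star$. You can therefore drop the hedging in Step 2, and your $c_1$ is $\delta$-free.

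In the paper's route the separation hypothesis is genuinely needed. At $s'=s_1^\star$ the potential reduces to $m_1^\star\langle U,V_1^\star\rangle-\frac12\|U\|_F^2$ with $U=(\nheads-1)V+V'$. So without $\|s'-s_1^\star\|\ge\delta$, the condition $\phi(t)\ge\phi(0)$ cannot bound $V$ and $V'$ separately. In exchange, the paper's route gives a time-uniform bound on the reduced trajectory, whereas your $B$ grows linearly in $t$. That growth is harmless for the short window in the statement.
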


The dynamics in \Cref{eq:cooperative} with the initialization in \Cref{eq:initialization} are nontrivial: while \(V'\) grows in an orthogonal direction \(V_\perp\) to \(V_1^\star\), \(s'\) is still sparse around \(s_1^\star\).
This is due to the fact that \(\Pi(s') \approx 0\) at initialization as \(s'(0) \approx s_1^\star\) which leads to a scale separation between \(\dot{s}'\) and \(\dot{V}'\).
Consequently, when \(V'\) grows along some \(V_\perp\), the prediction is pushed to include the unnecessary term, \(V_\perp x_t\).
However, this is immediately canceled out by the progression of the ensemble, where \(V\) learns to offset this by learning \(-V_\perp\).

This collaborative behavior is best seen in \Cref{fig:simulations}.
We also verify that the same compensatory behavior in the feature matrices appears in the minimal transformer trained on the classification task in \Cref{fig:collaboration}, indicating that our theory captures non-trivial structural aspects of the dynamics beyond the existence of stages.
This comparison is qualitative: the two settings differ in loss, parameterization, and timescale, so we compare structural signatures rather than aligned numerical trajectories.

To simplify \Cref{eq:cooperative}, we show that the initialization in \Cref{eq:initialization} ensures that \(V\) is close to its optimal value, \(V^\star\), which is defined in \Cref{lemma:optimality}.
In fact, we can derive a precise statement about how far \(V\) is from \(V^\star\) based on the weight \(s'\) puts on the direction of \(s_1^\star\):
\begin{restatable}{lemma}{fastslow}\label{lemma:optimality}
      Let \(\Delta(t) = V(t) - V^\star(t)\) where
      \begin{equation*}
            V^\star(t) = \dfrac{1}{\nheads - 1} \left(m_1^\star V_1^\star - \langle s_1^\star, s'(t)\rangle V'(t)\right).
      \end{equation*}
      Assuming that \(\|s'(t) - s_1^\star\| \geq \delta\) for all \(t \in \sR\),  there exist constants \(c_1(\delta), c_2\) such that
      \begin{equation*}
            \|\Delta(t)\|_F \leq e^{-c_2 t} \|\Delta(0)\|_F + \dfrac{c_1(\delta)}{c_2}\,.
      \end{equation*}
\end{restatable}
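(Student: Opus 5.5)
We plan to treat the ensemble equation in \Cref{eq:cooperative} as a linear ODE in \(V\) with a fast, constant-rate contraction and a slowly varying forcing term. Since \(|I(1)|=1\), we have \(\|s_1^\star\|^2 = 1\), so the ensemble equation reads
\begin{equation*}
      \dot{V} = -(\nheads-1)\left(V - V^\star(t)\right), \qquad V^\star(t) = \dfrac{1}{\nheads-1}\left(m_1^\star V_1^\star - \langle s_1^\star, s'(t)\rangle V'(t)\right).
\end{equation*}
Differentiating \(\Delta = V - V^\star\) then gives
\begin{equation*}
      \dot{\Delta} = -(\nheads-1)\Delta - \dot{V}^\star(t).
\end{equation*}
This is the standard fast--slow structure. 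We take \(c_2 = \nheads-1\), so the claim reduces to a uniform bound \(\|\dot{V}^\star(t)\|_F \leq c_1(\delta)\). Variation of constants then yields
\begin{equation*}
      \|\Delta(t)\|_F \leq e^{-c_2 t}\|\Delta(0)\|_F + \int_0^t e^{-c_2(t-\tau)} c_1(\delta)\, d\tau \leq e^{-c_2 t}\|\Delta(0)\|_F + \dfrac{c_1(\delta)}{c_2}.
\end{equation*}
An equivalent route is a Grönwall bound on \(\tfrac{1}{2}\tfrac{d}{dt}\|\Delta\|_F^2\) using Cauchy--Schwarz.

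The remaining work is to bound
\begin{equation*}
      \dot{V}^\star = -\dfrac{1}{\nheads-1}\left(\langle s_1^\star, \dot{s}'\rangle V' + \langle s_1^\star, s'\rangle \dot{V}'\right).
\end{equation*}
First, we need a priori bounds on \(\|V\|_F\) and \(\|V'\|_F\) along the trajectory. For these we use the Lyapunov argument already invoked for \Cref{thm:boundedcoop}: the reduced system is gradient flow of \(\tfrac{1}{2}\|\mG - \mP\|_F^2\) restricted to the invariant set of \Cref{eq:initialization}, so the loss is nonincreasing. Since \(\|\mP\|_F \leq \|\mG\|_F + \sqrt{2\gL(0)}\), we need to bound each factor norm from the bound on \(\|\mP\|_F\).

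We expect this step to be the main obstacle, because cancellations between \(V\) and \(V'\) could in principle let both norms blow up. The plan is to exploit the assumption \(\|s'(t) - s_1^\star\| \geq \delta\). It keeps \(s'\) a quantitative distance from \(s_1^\star\), so the Gram matrix of \(\{s_1^\star, s'\}\) has smallest eigenvalue bounded below by some \(\kappa(\delta) > 0\). Writing
\begin{equation*}
      \|\mP\|_F^2 = \|(\nheads-1)V\|_F^2 + 2(\nheads-1)\langle s_1^\star, s'\rangle\langle V, V'\rangle + \|s'\|^2\|V'\|_F^2,
\end{equation*}
with \(\|s'\| \leq 1\), we get
\begin{equation*}
      \|\mP\|_F^2 \geq \kappa(\delta)\left((\nheads-1)^2\|V\|_F^2 + \|V'\|_F^2\right),
\end{equation*}
up to the \(\|s'\|\) factor. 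That factor is bounded below because \(s'\) lies in the simplex, so \(\|s'\| \geq 1/\sqrt{T}\). This gives \(\|V\|_F, \|V'\|_F \leq B(\delta)\).

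With these bounds in hand, each term of \(\dot{V}'\) in \Cref{eq:cooperative} is bounded: \(\|\mG s'\| \leq \|\mG\|_F\), \(|\langle s_1^\star, s'\rangle| \leq 1\), and \(\|s'\| \leq 1\). So \(\|\dot{V}'\|_F \lesssim \|\mG\|_F + (\nheads-1)B + B\). Likewise, \(\|\Pi(s')\|_{op} \leq 1\) for \(s'\) in the simplex, and \(\|V'^\top \mG\| \leq B\|\mG\|_F\), so \(\|\dot{s}'\| \lesssim B\|\mG\|_F + (\nheads-1)B^2 + B^2\). Combining these estimates gives \(\|\dot{V}^\star\|_F \leq c_1(\delta)\), with \(c_1\) polynomial in \(B(\delta)\), \(m_1^\star\) and \(\nheads\), which closes the argument.
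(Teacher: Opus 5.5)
Your proposal is correct and follows essentially the same argument as the paper: isolate the contraction $\dot{\Delta} = -(\nheads-1)\Delta - \dot V^\star$, bound the forcing term uniformly via the monotone Lyapunov quantity, and close with Grönwall. The paper's $\phi$ from \Cref{thm:boundedcoop} is exactly $\tfrac12\|\mG\|_F^2 - \gL$, i.e.\ your nonincreasing loss, and your variation-of-constants step even gives the sharper rate $c_2 = \nheads-1$. Where the paper simply cites \Cref{thm:boundedcoop} for boundedness, you make explicit how $\delta$ enters, through the Gram-matrix coercivity of $\{s_1^\star, s'\}$. Note that this step uses that $s'$ lies in the simplex (e.g.\ the Gram determinant is $\|s'_{(1)}\|^2 \geq \delta^2/T$); distance from $s_1^\star$ alone would not prevent $s'$ from being parallel to it.
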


Inspired by \Cref{lemma:optimality} and numerical simulations in \Cref{fig:theory_vs_regression}, we use a two-scale approximation in which \(V\) is optimized faster:
\begin{equation}\label{eq:second_dynamics}
      \begin{split}
            \dot{V'} &= \mG_{(1)} s_{(1)}' - \|s_{(1)}'\|^2 V'\,, \\
            \dot{s'} &= \Pi(s')^2 \left(V_{(1)}'^\top \mG_{(1)} - \|V'\|_F^2 s_{(1)}'\right)\,,
      \end{split}
\end{equation}
where we introduce \(\mG_{(i)} = \mG - \sum_{j=1}^i m_j^\star \left(V_j^\star \otimes s_j^\star\right)\) along with the following notation:
\begin{equation*}
      V_{(i)} = V - \sum_{j=1}^i \langle V_j^\star, V\rangle V_j^\star\,, \quad s_{(i)} = s - \sum_{j=1}^i \langle s_j^\star, s\rangle s_j^\star\,.
\end{equation*}

We show that the dynamics in \Cref{eq:second_dynamics} converge to the second positional feature:
\begin{restatable}{theorem}{cooperative}
\label{thm:cooperative}
      Assume that the initialization satisfies the following for all \(k \in [2, \nheads]\):
      \begin{equation*}
            \langle V'(0), V_2^\star\rangle \geq \langle V'(0), V_k^\star \rangle\,, \quad \langle s'(0), s_2^\star\rangle \geq \langle s'(0), s_k^\star \rangle\,.
      \end{equation*}
      Further, suppose that \(V'(0), s'(0)\) are such that
      \begin{equation}\label{eq:init_loss}
            \langle V_{(1)}'(0), \mG_{(1)} s_{(1)}'(0)\rangle > \dfrac{1}{2} \|V'(0)\|_F^2 \|s_{(1)}'(0)\|^2\,.
      \end{equation}
      Then, the dynamics of \(V'\) and \(s'\) converge to the following fixed point:
      \begin{equation*}
            V'(\infty) = m_2^\star V_2^\star\,, \quad s'(\infty) = s_2^\star\,.
      \end{equation*}
\end{restatable}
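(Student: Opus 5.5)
Since \(\mG_{(1)} = \sum_{k\geq 2} m_k^\star V_k^\star \otimes s_k^\star\), the plan is to reduce \Cref{eq:second_dynamics} to an instance of the coupled single-head system of \Cref{thm:coupling}, with \(\nheads = 1\) and target \(\mG_{(1)}\) in place of \(\mG\). The only differences are the projections onto the complements of \(V_1^\star\) and \(s_1^\star\). The argument is organized around the scalar coordinates
\[
a_k(t) = \langle V'(t), V_k^\star\rangle, \quad b_k(t) = \langle s'(t), s_k^\star\rangle, \quad k \in [2,\nheads],
\]
together with the orthogonal remainders. Because \(|I(k)|=1\), the \(s_k^\star\) are distinct standard basis vectors. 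Hence \(b_k\) is simply the attention weight of \(s'\) on position \(k\), and \(\mG_{(1)}^\top V' = \sum_{k\geq 2} m_k^\star a_k s_k^\star\).

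\textbf{Step 1 (ordering invariance).} Projecting the \(V'\) equation gives
\[
\dot a_k = m_k^\star b_k - \|s'_{(1)}\|^2 a_k.
\]
The \(s'\) equation has driving vector \(u = \sum_{k\geq 2} m_k^\star a_k e_k - \|V'\|_F^2 s'_{(1)}\), whose \(k\)-th entry is \((m_k^\star a_k - \|V'\|^2 b_k)\) for \(k \geq 2\) and \(0\) on position \(1\). This vector is then passed through \(\Pi(s')^2\). I will show that the set \(\{a_2 \ge a_k,\ b_2 \ge b_k\ \forall k\ge 2\}\) is forward invariant, mirroring the ordering argument of \Cref{thm:coupling}. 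On the boundary \(a_2=a_k\), we get \(\dot a_2-\dot a_k = m_2^\star b_2 - m_k^\star b_k \geq 0\), using \(m_2^\star > m_k^\star\) and \(b_2 \ge b_k \ge 0\). On the boundary \(b_2=b_k\), I use the reparameterization: \(\dot s' = \Pi(s')\dot q\) with \(\dot q = \Pi(s') u\). Then \(\dot b_2 - \dot b_k = b_2(\dot q_2 - \dot q_k)\), and \(\dot q_2 - \dot q_k = u_2 - u_k = m_2^\star a_2 - m_k^\star a_k\), since the terms subtracted by \(\Pi\) are common to both coordinates. This is \(\ge 0\) when \(a_2 \ge a_k \ge 0\), so nonnegativity of \(a_2\) must be tracked as well. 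The same computation shows that \(q_2\) grows relative to all \(q_k\), \(k \ge 2\).

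\textbf{Step 2 (escaping the first coordinate and the origin).} This is the main obstacle, and condition \Cref{eq:init_loss} is exactly there to handle it. The mass \(b_1 = \langle s', s_1^\star\rangle\) is not driven by \(\mG_{(1)}\): since \(u_1 = 0\), we get \(\dot q_1 - \dot q_k = -u_k\), so \(q_1\) decreases relative to \(q_k\) whenever \(u_k>0\). I would define the reduced loss
\[
\gL_{(1)} = \tfrac12\|\mG_{(1)} - V' \otimes s'_{(1)}\|_F^2,
\]
which is nonincreasing along \Cref{eq:second_dynamics}, as I will check from its gradient structure. Condition \Cref{eq:init_loss} is precisely \(\gL_{(1)}(0) < \tfrac12\|\mG_{(1)}\|_F^2\). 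Monotonicity then bounds \(\langle V'_{(1)}, \mG_{(1)} s'_{(1)}\rangle\) away from zero for all \(t\). This keeps \(V'\) and \(s'_{(1)}\) away from the zero saddle, and in particular keeps \(\sum_{k\ge 2} m_k^\star a_k b_k\) bounded below. Combined with Step 1, this gives \(a_2, b_2\) bounded below, so \(u_2 > 0\) is eventually dominant, and \(b_1 \to 0\) along with all \(b_k\), \(k \geq 3\).

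\textbf{Step 3 (convergence).} By LaSalle's principle, the trajectory converges to the set of critical points of \(\gL_{(1)}\) within the invariant region, with the loss strictly below \(\tfrac12\|\mG_{(1)}\|^2\). The critical points are products \(c\, V_k^\star \otimes s_k^\star\) with balanced scale, the zero saddle, and mixtures. The ordering and the lower bound on \(b_2\) exclude every candidate except \(k=2\). At \(s' = s_2^\star\), the fixed-point equation \(0 = m_2^\star s_2^\star - V'\) gives \(V'(\infty) = m_2^\star V_2^\star\). The delicate part is Step 2: showing that the loss bound, not merely the ordering, prevents \(s'\) from lingering near \(s_1^\star\), where \(\Pi(s')\approx 0\) freezes the dynamics. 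There I would exploit that \(\|s'_{(1)}\|\) bounded below already forces \(b_1 < 1\) uniformly.
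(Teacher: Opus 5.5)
\textbf{Verdict: there is a genuine gap, and it cannot be closed, because the statement as written is false for the reduced system.} The gap is exactly the step you flagged as delicate. Nothing forces $b_1 = \langle s', s_1^\star\rangle \to 0$, since the driving terms of \Cref{eq:second_dynamics} do not see $b_1$; it enters only through the preconditioner $\Pi(s')^2$. Both $\dot V'$ and $u = V_{(1)}'^\top \mG_{(1)} - \|V'\|_F^2 s'_{(1)}$ depend on $s'$ only through $s'_{(1)}$, and so does your $\gL_{(1)}$. Hence every point of the curve
\[
s' = (1-\beta)\, s_1^\star + \beta\, s_2^\star\,, \qquad V' = \frac{m_2^\star}{\beta}\, V_2^\star\,, \qquad \beta \in (0,1]\,,
\]
is stationary. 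Indeed, $\mG_{(1)} s'_{(1)} - \|s'_{(1)}\|^2 V' = m_2^\star \beta V_2^\star - \beta m_2^\star V_2^\star = 0$, and $u = \tfrac{(m_2^\star)^2}{\beta} s_2^\star - \tfrac{(m_2^\star)^2}{\beta} s_2^\star = 0$.
\begin{itemize}
\item For $\beta < 1$, such a point satisfies both ordering hypotheses and \Cref{eq:init_loss}: the two sides equal $(m_2^\star)^2$ and $\tfrac12 (m_2^\star)^2$. Yet it is not the claimed limit. With $\nheads = T = 2$ it even lies in the interior of the simplex.
\item This is not confined to special starting points. Take $\nheads = T = 2$, $V' = a V_2^\star$, $s' = (1-b, b)$. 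The flow is $\dot a = b(m_2^\star - ab)$, $\dot b = 2(1-b)^2 b^2 a (m_2^\star - ab)$, which conserves $a^2 - \log\frac{b}{1-b} - \frac{1}{1-b}$. A trajectory started at small $a, b > 0$ therefore converges to where this level curve meets $ab = m_2^\star$, at some $b < 1$.
\end{itemize}

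\textbf{Where your steps break.}
\begin{itemize}
\item The Step 2 heuristic fails because $u_2 = a(m_2^\star - ab)$ decays exponentially, so $q_2 - q_1$ has a finite limit.
\item Your Step 3 classification ``$c\,V_k^\star \otimes s_k^\star$ with balanced scale'' is correct only for the product $V' \otimes s'_{(1)}$. In parameter space the $k = 2$ critical set is the whole curve above, and a lower bound on $b_2$ removes only $\beta$ near $0$.
\item That lower bound is itself not yet established. From $\phi \geq \phi(0)$ you get $a_2 b_2$ bounded below. Bounding $b_2$ also needs $\|V'\|$ bounded above, and the curve shows $\phi$ can stay maximal as $\beta \to 0$ with $\|V'\| \to \infty$.
\end{itemize}

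\textbf{Comparison with the paper.} The paper's proof takes the same route as yours: forward invariance of $\gR$, the monotone function $\phi = \tfrac12\|\mG_{(1)}\|_F^2 - \gL_{(1)}$, then a stationary-point analysis. It has the same hole. It writes the stationarity conditions with $\|s'_\infty\|^2$ where $\|s'_{(1),\infty}\|^2$ belongs, and its ``equality iff $s_\infty = s_2^\star$'' rests on that substitution. With the correct norm, $\|\mG_{(1)} s'_{(1)}\|^2/\|s'_{(1)}\|^2 \leq (m_2^\star)^2$ holds with equality iff $s'_{(1)} \propto s_2^\star$.

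\textbf{What can be salvaged.} Given boundedness of the trajectory, the shared argument yields a function-space statement:
\begin{itemize}
\item $\gL_{(1)}$ decreases to its minimum $\tfrac12\sum_{k\geq 3}(m_k^\star)^2$;
\item $V'\otimes s'_{(1)} \to m_2^\star V_2^\star \otimes s_2^\star$;
\item the $\omega$-limit set lies on the curve above.
\end{itemize}
Forcing $\beta = 1$ would need a mechanism absent from \Cref{eq:second_dynamics}.

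\textbf{Smaller points in Step 1, both fixable.}
\begin{itemize}
\item $a_2 \geq 0$ comes for free. \Cref{eq:init_loss} gives $\sum_{k\geq 2} m_k^\star a_k b_k > 0$, so the ordering forces $a_2(0) > 0$; and at $a_2 = 0$ one has $\dot a_2 = m_2^\star b_2 \geq 0$.
\item Since $(\Pi(s)u)_i = s_i(u_i - \langle s, u\rangle)$, the identity $\dot q_2 - \dot q_k = u_2 - u_k$ is wrong in general. On the face $b_2 = b_k$, however, $\dot b_2 - \dot b_k = b_2^2(u_2 - u_k)$, so your sign argument survives.
\end{itemize}
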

\Cref{thm:cooperative} is similar in nature to \Cref{thm:coupling}.
Once there is alignment with the second positional feature, the dynamics are such that the alignment is not broken.
Notably, we require the initialization to satisfy \Cref{eq:init_loss}.
This ensures that the dynamics start with an initial decrease on the loss beyond the first saddle point characterized in \Cref{thm:coupling}: \Cref{thm:cooperative} proves that a potential characterizing the loss is monotonically minimized and this saddle is avoided.
In \Cref{rem:init_loss}, we discuss how a small perturbation toward the second positional feature is sufficient to satisfy \Cref{eq:init_loss}.

Finally, in \Cref{app:higher}, we extend the same argument to the specialization of an arbitrary head after the system has acquired the earlier features.
In later phases, the previously specialized heads are treated as approximately fixed at their learned features, while the remaining coupled ensemble still represents the dominant feature and one free head offshoots toward the next feature.
Together with \Cref{thm:bounded,thm:boundedcoop}, the analogous stability bounds show that each phase remains close to the corresponding idealized coupled dynamics for a finite, controlled time.

\section{Related Work and Discussion}

\paragraph{Incremental learning.} 
Plateau-like learning curves are a common feature in neural network training.
Early analyses, such as \citet{fukumizu2000local}, attributed these behaviors to critical points in supervised learning.
Subsequent studies have examined similar dynamics in a variety of simplified settings, including linear networks~\citep{gissin2020the,saxe2019mathematical,gidel2019implicit,arora2019implicit,jacot2021saddle,li2021towards,razin2021implicit,jiang2023algorithmic,berthier2023incremental,pesme2023saddle,jin2023understanding,varre2023on,varre2024sgd} and ReLU models~\citep{boursier2022gradient,abbe2023sgd}; recent work argues for their universality~\citep{ziyin2025parameter, kunin2025alternating, zhang2026saddle}.
In transformer training, plateaus followed by sudden capability gains \citep{chen2024sudden} are often observed in regression tasks~\citep{garg2022can,von2023transformers,ahn2023transformers} and formal language recognition~\citep{bhattamishra2024understanding,akyurek2024context,dangelo2025selective,cagnetta2025learning,dangelo2026transformers}.
Finally, \citet{cagnetta2024towards,cagnetta2025learning} show dataset size affects the order of the learned hierarchy in random probabilistic context-free grammars, mirroring our data-dependent stage progression and its connection to effective model complexity.

\paragraph{\(n\)-gram models.}
\(n\)-gram language models~\citep{jm3} serve as a toy setting to understand language models.
This perspective has motivated studies of optimization landscapes~\citep{makkuva2025attention}, expressivity over \(n\)-gram distributions~\citep{svete2024transformers}, and sample complexity~\citep{yuksel2025on,yuksel2025generalization}.
The learning of variable-order \(n\)-grams has been studied by \citet{zhou2026information}, while \citet{deora2025incontext} consider \(n\)-grams of different fixed orders \(n\).
Connections between in-context learning and the emergence of induction heads~\citep{elhage2021mathematical,olsson2022context}, together with their acquisition via gradient descent~\citep{nichani2024transformers}, are drawn by \citet{bietti2023birth}.
Training dynamics on \(n\)-gram tasks have also been shown to progress in stages: intermediate solutions approximate sub-\(n\)-grams \citep{edelman2024evolution}, which are formalized as near-stationary points by \citet{varre2025learning}.
Despite this rich phenomenology, standard \(n\)-gram models lack the hierarchical abstractions characteristic of natural language \citep{wu2022learning,wu2025building}.
Our task adds an importance hierarchy over positional features, while leaving richer forms of abstraction and composition outside its scope.

\paragraph{Attention dynamics.}
The dynamics of attention have recently been explored through various simplified tasks \citep{snell2021approximating,li2023transformers,yang2024training,huang2024incontext,goel2026training}, including tasks exhibiting sparsity \citep{jelassi2022vision,tian2023scan,wang2024transformers}.
Particularly relevant to our task, \citet{marion2025attention} study single-location regression, in which attention concentrates on a randomly selected informative position, a setting related to sequence multi-index models \citep{cui2024phase,troiani2025fundamental}.
Tractable analysis usually requires simplified attention parameterizations: diagonal \citep{abbe2023transformers}, position-only softmax \citep{jelassi2022vision}, position-free softmax \citep{tian2023scan}, component-wise \citep{marion2025attention}, or linear \citep{zhang2024trained,shen2025on} attention.
Closest to our setting, \citet{zucchet2026emergence} consider the same data model with position-only softmax and \(\nheads=1\), characterizing escape from the uniform-attention initialization via a local Taylor approximation.
The multi-head setting has been comparatively less developed.
\citet{chen2024training} and \citet{zhang2025training} study in-context linear regression with multi-head softmax and linear attention, respectively.
\citet{chen2024unveiling} analyze the training dynamics of softmax attention on in-context \(n\)-gram tasks, and \citet{varre2025incremental} study incremental learning on a related associative recall task.

\paragraph{Discussion.}
We study the saddle-to-saddle dynamics of multi-head softmax attention in a positional setting with an importance hierarchy over past positions.
This setting reveals a competitive-to-cooperative transition: heads first compete for a dominant sparse pattern, then specialize sequentially via offshooting, with non-offshooting heads compensating through their value matrices.
These structural signatures go beyond the mere presence of staged learning or head specialization.
Capturing these signatures requires jointly analyzing value and attention parameter dynamics in a multi-head system; they would be obscured by freezing parameter blocks to impose stage-wise training or by reductions that decouple head trajectories, e.g., via initialization or parameterization.
This competition is also distinct from circuit-level competition between algorithmic solutions \citep{reddy2024mechanistic,park2025competition}: here, the competing units are symmetric heads sharing the same input and output domain.
More broadly, our results give a controlled temporal example of the head-level interactions that \citet{chakrabarti2026multi} frame as a multi-player game.

\section{Conclusion}

In this work, we introduce a simple but rich task in which transformers need to implement multiple sparse attention patterns.
We show that transformers learn this task incrementally, with each stage acquiring a new sparse attention pattern in order of statistical importance.
This staged behavior is driven by a competitive-to-cooperative transition in the head dynamics, where all heads first focus on the most important pattern and later specialize via offshooting.
To analyze this mechanism, we use a simplified architecture and regression objective, deriving surrogate differential equations that capture the same qualitative stage-wise dynamics.
Together, these results provide a controlled account of how sparse attention patterns and head specialization can emerge through training dynamics.
Extending this analysis to more realistic architectures, objectives, and data distributions remains an important direction for future work.

\section*{Acknowledgments}

This work was partially funded by an unrestricted gift from Coefficient Giving, and the grant number 212111 from the Swiss National Science Foundation.
O{\u{g}}uz Kaan Y{\"u}ksel is supported by the SwissAI Fellowship.

\bibliography{main}
\bibliographystyle{tmlr}

\clearpage
\appendix
\begin{appendices}
    \section*{Organization of the Appendix}
\label{app:organization}
The appendix is organized as follows.
\begin{itemize}
      \item \Cref{app:identifiability} gives examples of the non-identifiability of the task.
      \item \Cref{app:proofs} provides proofs of the theoretical results.
      \item \Cref{app:initialization} discusses how the initialization in our main theorems can be relaxed.
      \item \Cref{app:simulations} simulates the regression flow, verifies the theoretical predictions against it, and compares the resulting trajectories with the minimal transformer.
      \item \Cref{app:exp} provides the experimental details for the experiments in \Cref{sec:incremental,sec:minimal}.
      \item \Cref{app:additional_experiments} presents additional experiments varying the data regime (infinite data, weight decay), the importance and interval structure (reverse order, non-uniform \(\alpha\), overlapping intervals), the depth (two-block transformers), and the optimizer (SGD).
\end{itemize}

\section{Identifiability}\label{app:identifiability}

The data generation process defined in \Cref{eq:markov_chain} is not identifiable: different choices of partitions \(I(k)\) and feature matrices \(A_k^\star\) can induce the same conditional distribution.
A trivial source of non-identifiability is permutation: jointly permuting the pairs \(\{(I(k), A_k^\star)\}_{k=1}^{\nheads}\) leaves the sum in \Cref{eq:markov_chain} unchanged.

Beyond permutations, there exist non-trivial decompositions in which the partitions \(I(k)\) overlap and the attention weights \(\alpha_i\) spread across
multiple positions.
Consider \(\nheads = 2\) with the sparse, disjoint decomposition
\[
I(1) = \{1\}\,, \quad I(2) = \{2\}\,, \quad \alpha_1^{(1)} = \alpha_2^{(2)} = 1\,,
\]
and feature matrices \((A_1^\star, A_2^\star)\). The resulting logits are
\[
A_1^\star x_{t-1} + A_2^\star x_{t-2}\,.
\]
Now take the alternative decomposition with overlapping partitions
\[
I(1) = I(2) = \{1, 2\}\,, \quad \alpha^{(1)} = (2/3, 1/3), \quad \alpha^{(2)} = (1/3, 2/3),
\]
and modified feature matrices
\[
B_1^\star = 2 A_1^\star - A_2^\star\,, \quad B_2^\star = -A_1^\star + 2 A_2^\star\,.
\]
The resulting logits are
\[
B_1^\star \left(\tfrac{2}{3} x_{t-1} + \tfrac{1}{3} x_{t-2}\right) + B_2^\star \left(\tfrac{1}{3} x_{t-1} + \tfrac{2}{3} x_{t-2}\right) = A_1^\star x_{t-1} + A_2^\star
x_{t-2}\,,
\]
matching the original distribution. In this alternative representation, each head attends to both positions, and the feature matrices are linear combinations
of the original. More generally, any invertible row-stochastic matrix \(
      \begin{pmatrix} a & 1-a \\ b & 1-b \end{pmatrix}\,,
\) with \(a \ne b\) yields an equivalent representation, and analogous constructions exist for \(\nheads > 2\).
As shown in \Cref{sec:incremental}, training consistently recovers the sparse, disjoint decomposition despite these non-sparse representations.

\section{Deferred Proofs}
\label{app:proofs}

We start with the proof of \Cref{prop:matrix} and some elementary results on the operation \(\Pi\).
Recall that we assume \(s_i^\star\) are one-hot in \Cref{sec:regression}.
That is, in the sequel, \(\|s_i^\star\|^2 = 1\).

\matrix*
\begin{proof}
      We start by some computations.
      Note that for any vectors \(v_1, v_2 \in \sR^T\), we have:
      \begin{equation*}
            \begin{split}
                  \E\left[\left(X v_1\right) \left(X v_2\right)^\top\right] &= \sum_{i=1}^T \sum_{j=1}^T (v_1)_{i} (v_2)_{j} \E\left[x_i x_j^\top\right] \\
                  &= \langle v_1, v_2\rangle I_{d}\,.
            \end{split}
      \end{equation*}
      Also, for any vectors \(v_1, v_2 \in \sR^T\) and any matrix \(Q \in \sR^{d \times d}\), we have:
      \begin{equation*}
            \begin{split}
                  \E\left[v_1^\top X^\top Q X v_2\right] &= \sum_{i=1}^T \sum_{j=1}^T (v_1)_{i} (v_2)_{j} \E\left[x_i^\top Q x_j\right] \\
                  &= \sum_{i=1}^T \sum_{j=1}^T (v_1)_{i} (v_2)_{j} \Tr\left(Q \E\left[x_j x_i^\top\right]\right) \\
                  &= \langle v_1, v_2\rangle \Tr(Q)\,.
            \end{split}
      \end{equation*}
      By selecting \(v_2 = e_i\) for all \(i \in [d]\), we get:
      \begin{equation*}
            \E\left[v_1 X^\top Q X\right] = \Tr(Q) v_1\,.
      \end{equation*}

      First, the derivative with respect to \(V_i\) is as follows:
      \begin{equation*}
      \begin{split}
            \dfrac{\partial \gL(\theta)}{\partial V_i} &= \E_{X, \xi}\left[\left(f_\theta(X) - f^\star(X, \xi)\right) \left(X s_i\right)^\top\right] \\
            &= \sum_{j=1}^{\nheads} \langle s_i, s_j \rangle V_j - \sum_{j=1}^{\nheads} m_j^\star \langle s_i, s_j^\star \rangle V_j^\star\,.
      \end{split}
      \end{equation*}
      Next, the derivative with respect to \(q_i\) is as follows:
      \begin{equation*}
            \begin{split}
                  \dfrac{\partial \gL(\theta)}{\partial q_i} &= \left(\diag(s_i) - s_i s_i^\top\right) \E_{X, \xi}\left[X^\top V_i^\top \left(f_\theta(X) - f^\star(X, \xi)\right)\right] \\
                  &= \left(\diag(s_i) - s_i s_i^\top\right) \left(\sum_{j=1}^{\nheads} \langle V_i, V_j\rangle s_j - \sum_{j=1}^{\nheads} m_j^\star \langle V_i, V_j^\star\rangle s_j^\star \right) \,.
            \end{split}
      \end{equation*}

      Then, the gradient flow dynamics are as follows:
      \begin{equation*}
            \begin{split}
                  \dot{V}_i &= -\nabla_{V_i} \gL(\theta) = \left(\mG - \mP\right) s_i \\
                  \dot{q}_i &= -\nabla_{q_i} \gL(\theta) = \Pi(s_i) \left(V_i^\top \left(\mG - \mP\right) \right)\,.
            \end{split}
      \end{equation*}
      This can be seen as a gradient descent flow on the following loss:
      \begin{equation*}
            \gL(\theta) = \dfrac{1}{2} \|\mG - \mP\|_F^2\,.
      \end{equation*}
\end{proof}

\begin{lemma}\label{lem:kernel}
      Let \(s\) be a vector with non-negative entries and \(\|s\|_1 = 1\).
      Then, the kernel space of \(\Pi(s) = \mathrm{diag}(s) - ss^\top\) is
      \begin{equation*}
            \ker\left(\Pi(s)\right) = \mathrm{span}\left(\left\{e_j : \langle e_j, s\rangle = 0\right\}\right) \cup \mathrm{span}\left(\sum_{j: \langle e_j, s\rangle > 0} e_j\right)\,.
      \end{equation*}
      Furthermore, if \(\|s\|_1 < 1\),
      \begin{equation*}
            \ker\left(\Pi(s)\right) = \mathrm{span}\left(\left\{e_j : \langle e_j, s\rangle = 0\right\}\right)\,.
      \end{equation*}
\end{lemma}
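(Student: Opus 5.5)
The plan is a direct computation of \(\Pi(s)v\) for an arbitrary \(v \in \sR^T\). The condition \(\Pi(s)v = 0\) can be read coordinate by coordinate, and this turns the kernel condition into a simple scalar consistency equation. Throughout, I will read the ``\(\cup\)'' in the statement as the span of the union, i.e.\ the sum of the two subspaces, since a kernel is a linear subspace. Write \(S = \{j : s_j > 0\}\) for the support of \(s\) and \(c = \langle s, v\rangle\). Then
\begin{equation*}
      \Pi(s) v = \diag(s) v - s \langle s, v\rangle, \qquad \left(\Pi(s)v\right)_j = s_j \left(v_j - c\right).
\end{equation*}

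\textbf{Step 1 (reduce to a coordinate condition).} Since \(s_j \ge 0\), the equation \(\Pi(s)v = 0\) holds if and only if \(v_j = c\) for every \(j \in S\). Coordinates \(j \notin S\) are unconstrained, because the factor \(s_j\) vanishes there. Consequently every kernel vector has the form
\begin{equation*}
      v = c \sum_{j \in S} e_j + \sum_{j \notin S} v_j e_j.
\end{equation*}
This is not yet a characterization, because \(c\) was defined as \(\langle s, v\rangle\). Such a \(v\) is in the kernel only if this self-consistency holds.

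\textbf{Step 2 (consistency).} For \(v\) of the above form, \(\langle s, v\rangle = c\sum_{j\in S} s_j = c\,\|s\|_1\), since \(s\) vanishes off \(S\). The requirement \(\langle s, v\rangle = c\) therefore becomes \(c\,(1-\|s\|_1) = 0\).

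\textbf{Step 3 (the two cases).} If \(\|s\|_1 = 1\), the consistency equation holds for every \(c\). The kernel is then exactly \(\mathrm{span}\{e_j : j\notin S\} + \mathrm{span}\bigl(\sum_{j\in S} e_j\bigr)\). For the converse inclusion, such vectors have the form from Step 1 with \(c = \langle s, v\rangle\), so the same computation shows they lie in the kernel. If instead \(\|s\|_1 < 1\), consistency forces \(c = 0\), so \(v_j = 0\) on \(S\), and the kernel is \(\mathrm{span}\{e_j : \langle e_j, s\rangle = 0\}\).

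There is no real obstacle here. The only delicate point is to remember that \(c\) depends on \(v\), so Step 2 is needed; it is precisely what distinguishes the normalized case from the sub-normalized one. I would also flag the interpretation of ``\(\cup\)'' as a sum of subspaces.
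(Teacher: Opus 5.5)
Your proof is correct. The paper's own proof is one sentence, saying the result ``follows trivially from a rank analysis''. That presumably means the following: the rows and columns of \(\Pi(s)\) indexed outside the support \(S\) vanish, while on \(S\) the matrix is the invertible \(\diag(s_S)\) minus the rank-one term \(s_S s_S^\top\). It is therefore singular exactly when \(s_S^\top \diag(s_S)^{-1} s_S = \|s\|_1 = 1\), and then its rank drops by exactly one. One then checks that each \(e_j\) with \(j \notin S\), and the vector \(\sum_{j\in S} e_j\), lie in the kernel, and concludes by counting dimensions.

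You instead solve \(\Pi(s)v = 0\) coordinatewise. This needs neither the rank-one-update/determinant fact nor a dimension count, and it gives both inclusions directly. It also shows exactly where nonnegativity enters, namely \(\sum_{j\in S}s_j = \|s\|_1\). In fact it shows the kernel is \(\mathrm{span}\{e_j : j\notin S\}\) whenever \(\|s\|_1 \neq 1\).

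For completeness, state the reverse inclusion in the case \(\|s\|_1<1\) explicitly: a vector supported off \(S\) is annihilated by both \(\diag(s)\) and \(s^\top\).

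Your reading of \(\cup\) as a sum of subspaces is necessary, not just convenient. For \(s = e_1\) with \(T \ge 2\), one has \(\Pi(s) = 0\), so the kernel is all of \(\sR^T\) rather than the union of a hyperplane and a line.
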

\begin{proof}
      The proof follows trivially from a rank analysis.
\end{proof}

\begin{lemma}\label{lem:ordering}
      Let \(s\) be a vector on the simplex that satisfies \(s_i \geq s_j\) for all \(j \in [\nheads]\).
      Then, for any vector \(v\) that satisfies \(v_i \geq v_j\) for all \(j \in [\nheads]\), we have for all \(j \in [\nheads]\):
      \begin{equation*}
            \left(\Pi(s) v\right)_i \geq \left(\Pi(s) v\right)_j\,.
      \end{equation*}
\end{lemma}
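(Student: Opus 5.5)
Write out the \(i\)-th and \(j\)-th coordinates of \(\Pi(s)v\) directly:
\[
(\Pi(s)v)_i = s_i v_i - s_i\langle s,v\rangle = s_i\bigl(v_i - \langle s, v\rangle\bigr)\,,\qquad (\Pi(s)v)_j = s_j\bigl(v_j - \langle s,v\rangle\bigr)\,.
\]
Set \(\bar v = \langle s, v\rangle\). Since \(s\) lies on the simplex, \(\bar v\) is a convex combination of the entries of \(v\). Because \(v_i\) is the maximal entry (\(v_i \geq v_j\) for all \(j\)), we get \(v_i - \bar v \geq 0\). Hence \((\Pi(s)v)_i = s_i(v_i - \bar v) \geq 0\), using \(s_i \geq 0\).

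Then split on the sign of \(v_j - \bar v\).

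If \(v_j - \bar v \leq 0\), then \((\Pi(s)v)_j = s_j(v_j - \bar v) \leq 0 \leq (\Pi(s)v)_i\), since \(s_j \geq 0\).

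If \(v_j - \bar v > 0\), chain two monotonicity bounds:
\[
s_j(v_j - \bar v) \leq s_i(v_j - \bar v) \leq s_i(v_i - \bar v)\,.
\]
The first inequality uses \(s_j \leq s_i\) together with positivity of the factor \(v_j - \bar v\). The second uses \(v_j \leq v_i\) together with \(s_i \geq 0\). This gives \((\Pi(s)v)_j \leq (\Pi(s)v)_i\) in both cases.

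There is no real obstacle here. The only care point is that the hypotheses on \(s\) are genuinely needed. Nonnegativity of \(s\) and \(\sum_k s_k = 1\) are what make \(\bar v\) a convex combination of the entries of \(v\), which is what makes \(v_i - \bar v \geq 0\) hold. The index \(i\) being simultaneously the argmax of \(s\) and of \(v\) is what drives the case split.

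As a remark for later use, presumably in the proofs of \Cref{thm:coupling} and \Cref{thm:cooperative}, the same argument applied twice handles \(\Pi(s)^2\). Provided \(\Pi(s)v\) again has its maximum at \(i\), which is exactly what the lemma guarantees, one can apply the lemma to \(\Pi(s)v\) in place of \(v\). This yields \((\Pi(s)^2 v)_i \geq (\Pi(s)^2 v)_j\).
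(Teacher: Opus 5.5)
Your proof is correct and follows the paper's argument: you use the same coordinate formulas \((\Pi(s)v)_k = s_k(v_k - \langle s, v\rangle)\) and the same three facts \(s_i \geq s_j\), \(v_i \geq v_j\), and \(v_i \geq \langle s, v\rangle\). The paper skips your case split by bounding the difference directly, \((\Pi(s)v)_i - (\Pi(s)v)_j = (s_i - s_j)(v_i - \langle s,v\rangle) + s_j(v_i - v_j) \geq (s_i - s_j)(v_i - \langle s, v\rangle) \geq 0\), and it leaves implicit the convex-combination step \(v_i \geq \langle s, v\rangle\) that you state explicitly.
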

\begin{proof}
      We have the following computations:
      \begin{equation*}
            \begin{split}
                  \left(\Pi(s) v\right)_i &= s_i \left(v_i - \langle s, v\rangle\right)\, \\
                  \left(\Pi(s) v\right)_j &= s_j \left(v_j - \langle s, v\rangle\right)\,. \\
            \end{split}
      \end{equation*}
      Then, we have:
      \begin{equation*}
            \left(\Pi(s) v\right)_i - \left(\Pi(s) v\right)_j \geq \left(s_i - s_j\right) \left(v_i - \langle s, v\rangle\right) \geq 0\,.
      \end{equation*}
\end{proof}

\subsection{Boundedness}

In this section, we prove \Cref{thm:bounded,thm:boundedcoop}, which are required to establish boundedness of the dynamics.

\bounded*
\begin{proof}
      We write the flow of \(V_i\) and \(s_i\) in terms of the flow of \(V\) and \(s\) using new variables:
      \begin{equation*}
            W_i = V_i - V\,, \quad z_i = s_i - s\,.
      \end{equation*}
      Let \(\epsilon\) be the following quantity:
      \begin{equation*}
            \epsilon = \max_{j \in [\nheads]} \max\{\|W_j\|_F, \|z_j\|\}\,.
      \end{equation*}
      We are interested in the regime where \(\epsilon \ll 1\).
      
      Recall that \(\phi(V, s)\) defined in \Cref{eq:lyapunov} is always non-decreasing.
      Therefore, \(V\) cannot grow larger than \(\dfrac{\mG s}{\nheads \|s\|^2}\) in norm or otherwise \(\phi(V, s)\) would decrease.
      This is the optimal value of \(V\) for a particular \(s\).
      Thus, we have a time-independent upper bound \(|V| \leq \max_{s} \dfrac{\mG s}{\nheads \|s\|^2} = \dfrac{m_1^\star}{\nheads}\).

      Then, the flow of \(W_i\) and \(z_i\) is as follows:
      \begin{equation*}
            \begin{split}
                  \dot{W}_i &= \mG z_i - \mP s_i + \nheads \|s\|^2 V\,, \\
                  \dot{z}_i &= \Pi(s_i)^2 \left(V_i^\top \left(\mG - \mP\right)\right) - \Pi(s)^2 \left(V^\top \mG - \nheads \|V\|^2 s\right)\,.
            \end{split}
      \end{equation*}
      Note that \(\mP\) can be rewritten as follows:
      \begin{equation*}
            \mP = \sum_{j=1}^{\nheads} V_j \otimes s_j = \nheads V \otimes s + \left(\sum_{j=1}^{\nheads} W_j\right) \otimes s + V \otimes \left(\sum_{j=1}^{\nheads} z_j\right) + \left(\sum_{j=1}^{\nheads} W_j \otimes z_j\right)\,.
      \end{equation*}
      This implies that:
      \begin{equation*}
            V^\top \mP = \nheads \|V\|^2 s + \gO\left(\epsilon + \epsilon^2\right)\,, \quad \mP s = \nheads \|s\|^2 V + \gO\left(\epsilon + \epsilon^2\right)\,.
      \end{equation*}

      We can rewrite the flow of \(z_i\) as follows:
      \begin{equation*}
            \dot{z}_i = \left(\Pi(s_i)^2 - \Pi(s)^2\right) \left(V_i^\top \left(\mG - \mP\right)\right) + \Pi(s)^2 \left(W_i^\top \mG - V_i^\top \mP + \nheads \|V\|^2 s\right)\,.
      \end{equation*}
      Therefore, we have:
      \begin{equation*}
            \dot{W}_i = \gO(\epsilon), \quad \dot{z}_i = \gO(\epsilon)\,.
      \end{equation*}
      The norms of \(W_i\) and \(z_i\) then evolve as follows:
      \begin{equation*}
            \dot{\widehat{\|W_i\|}} = \dfrac{\dot{W_i}^\top W_i}{\|W_i\|} \leq \|\dot{W}_i\| = \gO(\epsilon)\,.
      \end{equation*}
      We similarly derive that \(\|\dot{z}_i\| = \gO(\epsilon)\).

      This implies that \(\epsilon\) satisfies the equation:
      \begin{equation*}
            \dot{\epsilon} \leq C \epsilon\,, \quad \text{ as long as } \epsilon \ll 1\,,
      \end{equation*}
      where \(C\) is a constant that depends on the problem parameters \(\nheads\) and \(\mG\).
      By Grönwall's inequality, we have:
      \begin{equation*}
            \epsilon(t) \leq \epsilon(0) e^{Ct}\,, \quad \text{ as long as } t \in \left[0, \dfrac{1}{- C \log \epsilon(0)}\right]\,.
      \end{equation*}
\end{proof}

\boundedcoop*
\begin{proof}
      We follow the same strategy as in the proof of \Cref{thm:bounded}.
      The new Lyapunov function is as follows:
      \begin{equation*}
            \begin{split}
                  \phi(V, V', s') &= (\nheads-1)m_1^\star \langle V, V_1^\star\rangle - \dfrac{(\nheads-1)^2}{2} \|V\|_F^2 \\
                  &- (\nheads-1) \langle s_1^\star, s'\rangle \langle V, V' \rangle + \langle V', \mG s'\rangle - \frac{1}{2} \|s'\|^2 \|V'\|_F^2\,.
            \end{split}
      \end{equation*}
      We have the following derivatives:
      \begin{equation*}
            \begin{split}
                  \nabla_V \phi(V, V', s') &= (\nheads-1) \dot{V}\,, \\
                  \nabla_{V'} \phi(V, V', s') &= \dot{V'}\,, \\
                  \nabla_{s'} \phi(V, V', s') &= V^\top \mG - (\nheads-1) \langle V, V'\rangle s_1^\star - \|V'\|^2 s'\,.
            \end{split}
      \end{equation*}
      By a similar argument, we have that
      \begin{equation*}
            \dot{\phi} = (\nheads-1) \|\dot{V}\|^2 + \|\dot{V}'\|^2 + \|\Pi(s')\dot{s}'\|^2 \geq 0\,.
      \end{equation*}
      This indicates that \(\phi\) is non-decreasing.
      By a similar argument to \Cref{thm:bounded}, we establish an upper bound to \(\phi\) and consequently the boundedness of the flow.
      Then, it is possible to show the noise process grows as \(\gO(\epsilon)\) where \(\epsilon\) is the same quantity as in \Cref{thm:bounded}.
\end{proof}

\subsection{Competitive Phase}

In this section, we prove the main result of \Cref{sec:coupled}.

\coupling*
\begin{proof}
      Let \(\gR\) be the following set:
      \begin{equation*}
            \gR = \left\{ (V, s) \mid \forall k \in [\nheads], \langle V, V_1^\star - V_k^\star\rangle \geq 0\,, \langle s, s_1^\star - s_k^\star\rangle \geq 0\right\}\,.
      \end{equation*}
      We prove that the flow is forward-invariant on \(\gR\).

      Fix any \(j \in [\nheads]\).
      Let \(w_j = \langle V, V_1^\star - V_j^\star \rangle\), \(z_j = \langle s, s_1^\star - s_j^\star \rangle\), \(r_j = \langle s \odot s, s_1^\star - s_j^\star \rangle\), \(t_j = \langle s \odot s \odot s, s_1^\star - s_j^\star \rangle\).
      The flow of \(w_j\) and \(z_j\) are as follows:
      \begin{equation*}
            \begin{split}
                  \dot{w}_j &= m_1^\star \langle s, s_1^\star\rangle - m_j^\star \langle s, s_j^\star\rangle - \nheads \|s\|^2 w_j\,, \\
                  \dot{z}_j &= (s_1^\star - s_j^\star)^\top \Pi(s)^2 \left(V^\top \mG - \nheads \|V\|_F^2 s\right)\,.
            \end{split}
      \end{equation*}
      Rewriting the derivative of \(\dot{z}_j\):
      \begin{equation*}
            \begin{split}
                  \dot{z}_j &= \left((s_1^\star - s_j^\star)^\top \diag(s) - z_j s^\top\right) \Pi(s) \left(V^\top \mG - \nheads \|V\|_F^2 s\right) \\
                  &= (s_1^\star - s_j^\star)^\top \diag(s)^2 \left(V^\top \mG - \nheads \|V\|_F^2 s\right) - z_j s^\top \diag(s) \left(V^\top \mG - \nheads \|V\|_F^2 s\right) \\
                  &\quad\quad + \left(\|s\|^2 z_j - r_j\right) \left(V^\top \mG s - \nheads \|V\|_F^2 \|s\|^2\right) \\
                  &= m_1^\star \langle s_1^\star, s\rangle^2 \|s_1^\star\|^2 \langle V, V_1^\star \rangle - m_j^\star \langle s_j^\star, s\rangle^2 \|s_j^\star\|^2 \langle V, V_j^\star \rangle - \nheads \|V\|_F^2 t_j \\
                  &\quad\quad - z_j s^\top \diag(s) \left(V^\top \mG - \nheads \|V\|_F^2 s\right) + \left(\|s\|^2 z_j - r_j \right) \left(V^\top \mG s - \nheads \|V\|_F^2 \|s\|^2\right)\,.
            \end{split}
      \end{equation*}

      On the boundary of \(\gR\), we have \(w_j = 0\) or \(z_j = 0\).
      If \(w_j = 0\), then \(\dot{w}_j \geq 0\) and if \(z_j = 0\), then \(r_j = t_j = 0\) and \(\dot{z}_j \geq 0\).
      Therefore, a flow that has started in \(\gR\) will remain in \(\gR\) for all time.

      Now, consider the following Lyapunov function:
      \begin{equation}\label{eq:lyapunov}
            \phi(V, s) = \langle V, \mG s\rangle - \dfrac{\nheads}{2} \|V\|_F^2 \|s\|^2\,.
      \end{equation}
      The derivative of \(\phi(V, s)\) is as follows:
      \begin{equation*}
            \begin{split}
                  \nabla_V \phi(V, s) &= \mG s - \nheads \|s\|^2 V\,, \\
                  \nabla_s \phi(V, s) &= V^\top \mG - \nheads \|V\|_F^2 s\,.                  
            \end{split}
      \end{equation*}
      Therefore, the time derivative of \(\phi\):
      \begin{equation*}
            \dot{\phi}(V, s) = \|\dot{V}\|^2 + \|\Pi(s) \nabla_s \phi(V, s)\|^2\, \geq 0\,.
      \end{equation*}
      
      \(\phi\) is optimized when \(V = \dfrac{\mG s}{\nheads \|s\|^2}\) which leads to a finite value upper bound on \(\phi(V, s)\).
      Therefore, \(\lim_{t \to \infty} \phi(V(t), s(t))\) is finite and the flow converges to a stationary point of \(\phi\).
      That is, the flow converges to a point \((V_\infty, s_\infty)\) that satisfies:
      \begin{equation*}
            \mG s_\infty - \nheads \|s_\infty\|^2 V_\infty = 0\,, \quad V_\infty^\top \mG - \nheads \|V_\infty\|_F^2 s_\infty \in \mathrm{ker}(\Pi(s_\infty))\,.
      \end{equation*}
      Note that we have the following equality:
      \begin{equation*}
            \begin{split}
                  (\mG s_\infty)^\top \mG &= \sum_{j=1}^{\nheads} m_j^\star \left\langle V_j^\star, \sum_{k=1}^{\nheads} m_k^\star V_k^\star \langle s_k^\star, s_\infty\rangle \right\rangle s_j^\star = \sum_{j=1}^{\nheads} (m_j^\star)^2 \langle s_j^\star, s_\infty\rangle s_j^\star\,.
            \end{split}
      \end{equation*}
      Then, the stationary point \((V_\infty, s_\infty)\) satisfies
      \begin{equation}\label{eq:kernel_condition}
            \sum_{j=1}^{\nheads} (m_j^\star)^2 \langle s_j^\star, s_\infty\rangle s_j^\star - \nheads^2 \|s_\infty\|^2 \|V_\infty\|_F^2 \langle s_j^\star, s_\infty\rangle s_j^\star \in \mathrm{ker}(\Pi(s_\infty))\,.
      \end{equation}

      We have proven that \(\langle s_1^\star, s_\infty \rangle > 0\) as \(\langle s_1^\star, s_\infty\rangle = \max_{k \in [\nheads]} \langle s_k^\star, s_\infty\rangle\).
      From \Cref{lem:kernel}, \(s_1^\star \not \in \ker(\Pi(s_\infty))\) as there is at least one index \(m \in [T]\) such that \(\langle e_m, s_\infty \rangle > 0\) and \(\langle e_m, s_1^\star \rangle > 0\).
      By projecting to the direction \(s_1^\star\), \Cref{eq:kernel_condition} implies
      \begin{equation*}
            (m_1^\star)^2 \langle s_1^\star, s_\infty \rangle - \nheads^2 \|s_\infty\|^2 \|V_\infty\|^2 \langle s_1^\star, s_\infty\rangle = 0\,.
      \end{equation*}
      However, note that
      \begin{equation*}
            \nheads^2 \|s_\infty\|^2 \|V_\infty\|_F^2 = \dfrac{\|\mG s_\infty\|^2}{\|s_\infty\|^2} \leq \max_{\|s\| = 1} \|\mG s\|^2 = (m_1^\star)^2\,,
      \end{equation*}
      with equality if and only if \(s_\infty = s_1^\star\).
      Therefore, the flow converges to the stationary point
      \begin{equation*}
            s = s_1^\star\,, \quad V = \frac{m_1^\star}{\nheads} V_1^\star\,.
      \end{equation*}
\end{proof}

\subsection{Cooperation Phase}
\label{app:cooperation}

In this section, we prove the remaining results in \Cref{sec:cooperative}.
First, we show convergence of the second head starting with the system in \Cref{eq:cooperative}.
We then extend the analysis to arbitrary phases of the dynamics.

\subsubsection{Convergence of the Second Head}
\label{app:second}

Following \Cref{eq:initialization}, we consider the following initialization scheme:
\begin{equation}\label{eq:fast_init}
      V(0) = \dfrac{1}{\nheads-1} \left(m_1^\star V_1^\star - \langle s_1^\star, s'(0)\rangle V'(0)\right)\,, \quad V'(0) \approx V(0)\,, \quad s'(0) \approx s_1^\star\,.
\end{equation}
Here, we note that \(\dot{V}(0) = 0\).
That is, \(V(0)\) is at its optimal value given \(V'(0)\) and \(s'(0)\).
The following lemma shows that \(V\) stays close to its optimum through the trajectory:

\fastslow*
\begin{proof}
      We compute the derivative of \(\Delta\):
      \begin{equation*}
            \dot{\Delta} = - (\nheads-1) \|s_1^\star\|^2 \Delta + \frac{1}{\nheads-1} \langle s_1^\star, s'\rangle \dot{V}' + \frac{1}{\nheads-1} \langle s_1^\star, \dot{s}'\rangle V'\,.
      \end{equation*}
      Then, setting \(c_2 = \dfrac{(\nheads-1)}{2} \|s_1^\star\|^2\) and \(c(t) = \dfrac{1}{\nheads-1} \langle s_1^\star, s'(t) \rangle V'(t)\)
      \begin{equation*}
            \dot{\widehat{\|\Delta(t)\|_F^2}} = 2 \langle \dot{\Delta}(t), \Delta(t) \rangle = - 2 c_2 \|\Delta(t)\|_F^2 + 2 \langle \dot{c}(t), \Delta(t) \rangle\,.
      \end{equation*}
      We bound the last term as follows:
      \begin{equation*}
            \langle \dot{c}(t), \Delta(t) \rangle \leq \|\dot{c}(t)\|_F \|\Delta(t)\|_F\,.
      \end{equation*}
      However, \(\dot{c}(t)_F\) is uniformly bounded as in \Cref{thm:boundedcoop}, so we get:
      \begin{equation*}
            \dot{\widehat{\|\Delta(t)\|_F^2}} \leq - 2 c_2 \|\Delta(t)\|_F^2 + 2 c_1 \|\Delta(t)\|_F\,.
      \end{equation*}
      Set \(u(t) = \|\Delta(t)\|_F - \dfrac{c_1}{c_2}\) and rewrite the inequality:
      \begin{equation*}
            \dot{u}(t) \leq - c_2 u(t) \,.
      \end{equation*}
      By Grönwall's inequality, we have the desired result.
\end{proof}

Based on \Cref{lemma:optimality} and evidence from our numerical simulations, we approximate the full dynamics by a two-scale analysis where \(V\) is optimized faster than \(V'\) and \(s'\), leading to \Cref{eq:second_dynamics}.
Expanding \(\Pi(s')\), we get
\begin{equation*}
      \Pi(s') = \Pi(s_{(1)}') + \langle s_1^\star, s'\rangle \left(s_1^\star \left(s_1^\star\right)^\top - s_1^\star s_{(1)}'^\top - s_{(1)}' \left(s_1^\star\right)^\top\right)\,.
\end{equation*}
Since the \(V_{(1)}'^\top \mG_{(1)} - \|V'\|_F^2 s_{(1)}'\) is perpendicular to the direction \(s_1^\star\), we obtain:
\begin{equation*}
      \dot{s}' = \Pi(s') \left(\Pi(s_{(1)}') - \langle s_1^\star, s' \rangle s_1^\star s_{(1)}'^\top \right) \left(V_{(1)}'^\top \mG_{(1)} - \|V'\|_F^2 s_{(1)}'\right)\,.
\end{equation*}
Writing out the update along the direction of \(s_1^\star\):
\begin{equation*}
      \begin{split}
            \langle s_1^\star, \dot{s}' \rangle &= \langle s_1^\star, s' \rangle \|s_1^\star\|^2 \left(s_1^\star - s_{(1)}'\right)^\top \left(\Pi(s_{(1)}') - \langle s_1^\star, s' \rangle s_1^\star s_{(1)}'^\top \right) \left(V_{(1)}'^\top \mG_{(1)} - \|V'\|_F^2 s_{(1)}'\right) \\
            &= - \langle s_1^\star, s'\rangle \|s_1^\star\|^2 s_{(1)}'^\top \left(\Pi(s_{(1)}') + \langle s_1^\star, s'\rangle \|s_1^\star\|^2 I\right) \left(V_{(1)}'^\top \mG_{(1)} - \|V'\|_F^2 s_{(1)}'\right)\,.
      \end{split}
\end{equation*}
The rest of the update follows:
\begin{equation*}
      \begin{split}
            \dot{s}'_{(1)} &= \left(\Pi(s_{(1)}') - \langle s_1^\star, s'\rangle s_{(1)}' (s_1^\star)^\top\right) \left(\Pi(s_{(1)}') - \langle s_1^\star, s' \rangle s_1^\star s_{(1)}'^\top \right) \left(V_{(1)}'^\top \mG_{(1)} - \|V'\|_F^2 s_{(1)}'\right) \\
            &= \left(\Pi(s_{(1)}')^2 + \langle s_1^\star, s'\rangle^2 \|s_1^\star\|^2 s_{(1)}' s_{(1)}'^\top\right) \left(V_{(1)}'^\top \mG_{(1)} - \|V'\|_F^2 s_{(1)}'\right)\,.
      \end{split}
\end{equation*}
Similarly, writing the update for \(V'_{(1)}\) and the update in the direction of \(V_1^\star\):
\begin{equation*}
      \begin{split}
            \langle V_1^\star, \dot{V}'\rangle &= - \|s_{(1)}'\|^2 \langle V_1^\star, \dot{V}'\rangle\,, \\
            \dot{V}'_{(1)} &= \mG_{(1)} s_{(1)}' - \|s_{(1)}'\|^2 V'_{(1)}\,.
      \end{split}
\end{equation*}

We are ready to state the main theorem:
\cooperative*
\begin{proof}
      We follow the same strategy as in \Cref{thm:coupling}.
      Let \(\gR\) be the following set:
      \begin{equation*}
            \gR = \left\{ (V', s') \mid \forall k \in [2, \nheads], \langle V', V_2^\star\rangle \geq \langle V', V_k^\star \rangle \enspace \text{and} \enspace \langle s', s_2^\star\rangle \geq \langle s', s_k^\star \rangle \right\}\,.
      \end{equation*}
      We prove that the flow is forward-invariant on \(\gR\).

      Fix any \(j \in [2, \nheads]\).
      Let \(w_j = \langle V', V_2^\star - V_j^\star \rangle\) and \(z_j = \langle s_{(1)}', s_2^\star - s_j^\star \rangle\).
      The flow of \(w_j\) and \(z_j\) are as follows:
      \begin{equation*}
            \begin{split}
                  \dot{w}_j &= m_2^\star \langle s_{(1)}', s_2^\star\rangle - m_j^\star \langle s_{(1)}', s_j^\star\rangle - \|s_{(1)}'\|^2 w_j\,, \\
                  \dot{z}_j &= \left(s_2^\star - s_j^\star\right)^\top \left(\Pi(s_{(1)}')^2 + \langle s_1^\star, s'\rangle^2 \|s_1^\star\|^2 s_{(1)}' s_{(1)}'^\top\right) \left(V'^\top \mG_{(1)} - \|V'\|_F^2 s_{(1)}'\right)\,.
            \end{split}
      \end{equation*}
      Rewriting the derivative of \(\dot{z}_j\):
      \begin{equation*}
            \begin{split}
                  \dot{z}_j &= (s_2^\star - s_j^\star)^\top \Pi(s_{(1)}')^2 \left(V'^\top \mG_{(1)} - \|V'\|_F^2 s_{(1)}'\right) + c z_j \\
                  &= (s_2^\star - s_j^\star)^\top \diag(s_{(1)}')^2 \left(V'^\top \mG_{(1)} - \|V'\|_F^2 s_{(1)}'\right) \\
                  &\quad\quad - (s_2^\star - s_j^\star)^\top \diag(s_{(1)}') s_{(1)}' s_{(1)}'^\top \left(V'^\top \mG_{(1)} - \|V'\|_F^2 s_{(1)}'\right) + c z_j \\
                  &= (s_2^\star - s_j^\star)^\top \diag(s_{(1)}')^2 \left(V'^\top \mG_{(1)} - \|V'\|_F^2 s_{(1)}'\right) \\
                  &\quad\quad - \left(\langle s_{(1)}', s_2^\star - s_j^\star \rangle s_2^\star + \langle s_{(1)}', s_j^\star \rangle \left(s_2^\star - s_j^\star\right)\right)^\top s_{(1)}' s_{(1)}'^\top \left(V'^\top \mG_{(1)} - \|V'\|_F^2 s_{(1)}'\right) + c z_j \\
                  &= m_2^\star \langle s_2^\star, s_{(1)}'\rangle^2 \|s_2^\star\|^2 \langle V', V_2^\star \rangle - m_j^\star \langle s_j^\star, s_{(1)}'\rangle^2 \|s_j^\star\|^2 \langle V', V_j^\star \rangle + c z_j\,,
            \end{split}
      \end{equation*}
      where \(c\) is some arbitrary time-dependent function that changes from line to line.
      On the boundary of \(\gR\), we have \(w_j = 0\) or \(z_j = 0\).
      If \(w_j = 0\), then \(\dot{w}_j \geq 0\) and if \(z_j = 0\), then \(\dot{z}_j \geq 0\).
      Therefore, a flow that has started in \(\gR\) will remain in \(\gR\) for all time.
      
      Now, consider the following Lyapunov function:
      \begin{equation*}
            \phi(V', s_{(1)}') = \langle V', \mG_{(1)} s_{(1)}'\rangle - \dfrac{1}{2} \|V'\|_F^2 \|s_{(1)}'\|^2\,.
      \end{equation*}
      The derivative of \(\phi(V', s_{(1)}')\) is as follows:
      \begin{equation*}
            \begin{split}
                  \nabla_{V'} \phi(V', s_{(1)}') &= \mG_{(1)} s_{(1)}' - \|s_{(1)}'\|^2 V'\,, \\
                  \nabla_{s_{(1)}'} \phi(V', s_{(1)}') &= V'^\top \mG_{(1)} - \|V'\|_F^2 s_{(1)}'\,.                  
            \end{split}
      \end{equation*}
      Therefore, the time derivative of \(\phi\):
      \begin{equation*}
            \dot{\phi} = \|\dot{V}'\|^2 + \|\tilde{\Pi}(s') \nabla_{s_{(1)}'} \phi(V', s')\|^2\, \geq 0\,,
      \end{equation*}
      where \(\tilde{\Pi}(s')\) is a positive semi-definite matrix that satisfies:
      \begin{equation*}
            \tilde{\Pi}(s')^2 = \left(\Pi(s_{(1)}')^2 + \langle s_1^\star, s'\rangle^2 \|s_1^\star\|^2 s_{(1)}' s_{(1)}'^\top\right)\,, \quad \ker(\tilde{\Pi}(s')) \subseteq \ker(\Pi(s_{(1)}'))\,.
      \end{equation*}

      By \Cref{eq:init_loss},
      \begin{equation*}
            \phi(0) = \phi(V'(0), s_{(1)}'(0)) > 0\,,
      \end{equation*}
      and \(s'_{(1)} \neq 0\) as \(\phi\) is increasing.
      \(\phi\) is optimized when \(V' = \dfrac{\mG_{(1)} s_{(1)}'}{\|s_{(1)}'\|^2}\) which leads to a finite value upper bound on \(\phi(V', s_{(1)}')\).
      Therefore, \(\lim_{t \to \infty} \phi(V'(t), s_{(1)}'(t))\) is finite and the flow converges to a stationary point of \(\phi\).
      That is, the flow converges to a point \((V'_\infty, s'_\infty)\) that satisfies:
      \begin{equation*}
            \mG_{(1)} s'_\infty - \|s'_\infty\|^2 V'_\infty = 0\,, \quad V_\infty'^\top \mG_{(1)} - \|V_\infty'\|_F^2 s_\infty' \in \ker(\Pi(s_\infty'))\,.
      \end{equation*}

      Note that we have the following equality:
      \begin{equation*}
            \begin{split}
                  (\mG_{(1)} s_\infty)^\top \mG_{(1)} &= \sum_{j=2}^{\nheads} m_j^\star \left\langle V_j^\star, \sum_{k=2}^{\nheads} m_k^\star V_k^\star \langle s_k^\star, s_\infty'\rangle \right\rangle s_j^\star = \sum_{j=2}^{\nheads} (m_j^\star)^2 \langle s_j^\star, s_\infty'\rangle s_j^\star\,.
            \end{split}
      \end{equation*}
      Then, the stationary point \((V_\infty', s_\infty')\) satisfies
      \begin{equation*}
            \sum_{j=2}^{\nheads} (m_j^\star)^2 \langle s_j^\star, s_\infty'\rangle s_j^\star - \nheads^2 \|s_\infty'\|^2 \|V_\infty'\|_F^2 \langle s_j^\star, s_\infty'\rangle s_j^\star \in \ker(\Pi(s_\infty'))\,.
      \end{equation*}

      We have proven that \(\langle s_2^\star, s_\infty' \rangle > 0\) as \(\langle s_2^\star, s_\infty'\rangle = \max_{k \in [2, \nheads]} \langle s_k^\star, s_\infty'\rangle\).
      From \Cref{lem:kernel}, \(s_2^\star \not \in \ker(\Pi(s_\infty'))\).
      By projecting to the direction \(s_2^\star\),
      \begin{equation*}
            (m_2^\star)^2 \langle s_2^\star, s_\infty' \rangle - \nheads^2 \|s_\infty'\|^2 \|V_\infty'\|^2 \langle s_2^\star, s_\infty'\rangle = 0\,.
      \end{equation*}
      However, note that
      \begin{equation*}
            \nheads^2 \|s_\infty'\|^2 \|V_\infty'\|_F^2 = \dfrac{\|\mG_{(1)} s_\infty'\|^2}{\|s_\infty'\|^2} \leq \max_{\|s\| = 1} \|\mG_{(1)} s\|^2 = (m_2^\star)^2\,,
      \end{equation*}
      with equality if and only if \(s_\infty = s_2^\star\).
      Therefore, the flow converges to the stationary point
      \begin{equation*}
            s = s_2^\star\,, \quad V = m_2^\star V_2^\star\,.
      \end{equation*} 
\end{proof}

Lastly, we justify the initialization assumption in \Cref{eq:init_loss}.
\Cref{thm:coupling,thm:bounded} demonstrate that a wide range of symmetric initializations converge toward the configuration defined in \Cref{eq:fast_init}.
Note that \Cref{eq:init_loss} requires stronger alignment than \Cref{eq:fast_init}, specifically that the tensor factorization loss is strictly lower than the value attained at the first saddle point characterized by \Cref{thm:cooperative}.
In practice, this condition is satisfied by a small perturbation along the second positional feature:
\begin{remark}\label{rem:init_loss}
      \Cref{eq:init_loss} is satisfied by the following initialization
      \begin{equation*}
            V'(0) \approx \dfrac{m_1^\star}{\nheads} V_1^\star + \epsilon V_2^\star\,, \quad s'(0) \approx (1 - \epsilon) s_1^\star + \epsilon s_2^\star\,,
      \end{equation*}
      for small \(\epsilon > 0\).
\end{remark}

\subsection{Extension to Higher-order Heads}
\label{app:higher}

As in \Cref{app:cooperation}, we study the offshoot of an arbitrary head \(n > 2\) after the system has learned the first \(n-1\) features.
The features \(2, 3, \ldots, n-1\) are all learned by a single head, whereas the remaining \(\nheads - n\) heads are still aligned with the first feature.
This leads to dynamics analogous to \Cref{eq:cooperative}:
\begin{equation*}
      \begin{split}
            V_1 &= V_{n+1} = \ldots = V_{\nheads} = V\,, \quad s_1 = s_{n+1} = \ldots = s_h = s_1^\star\,, \quad s_2 = s_2^\star\,, \ldots, s_{n-1} = s_{n-1}^\star\,.
      \end{split}
\end{equation*}
We assume an analog of the initialization in \Cref{eq:fast_init}:
\begin{equation*}
\begin{split}
      V(0) &= \dfrac{1}{\nheads-n+1} \left(m_1^\star V_1^\star - \langle s_1^\star, s_{n}\rangle V_{n}(0)\right)\,, \\
      V_i(0) &= m_i^\star V_i^\star - \langle s_i^\star, s_{n}\rangle V_{n}(0)\,, \quad \forall i \in [2, n-1]\,, \\
      V_{n}(0) &\approx V(0)\,, \quad s_{n}(0) \approx s_1^\star\,.      
\end{split}
\end{equation*}

This leads to similar dynamics after assuming that \(V, V_2, \ldots, V_{n-1}\) have fast dynamics by an argument similar to \Cref{lemma:optimality}. Here, we write \(V' = V_n\) and \(s' = s_n\) for brevity:
\begin{equation*}
      \begin{split}
            \dot{V}' &= \mG_{(n-1)} \left(s_{n}\right)_{(n-1)} - \|\left(s_{n}\right)_{(n-1)}\|^2 V_{n}\,, \\
            \dot{s}' &= \Pi(s_{n})^2 \left(V_{n}^\top \mG_{(n-1)} - \|V_{n}\|_F^2 \left(s_{n}\right)_{(n-1)}\right)\,.            
      \end{split}
\end{equation*}
Computing the update in the relevant directions of \(s'\), we obtain:
\begin{equation*}
      \dot{s}'_{(n-1)} = \left(\Pi(s_{(n-1)}')^2 + \sum_{j=1}^{n-1} \langle s_j^\star, s'\rangle^2 \|s_j^\star\|^2 s_{(n-1)}' s_{(n-1)}'^\top\right) \left(V'^\top \mG_{(n-1)} - \|V'\|_F^2 s_{(n-1)}'\right)\,.
\end{equation*}
The same analysis in \Cref{app:second} leads to the following theorem:
\begin{theorem}\label{thm:higher}
      Assume that the initialization satisfies the following for all \(k \in [n, \nheads]\):
      \begin{equation*}
            \langle V_{n}(0), V_n^\star\rangle \geq \langle V_{n}(0), V_k^\star \rangle \quad \langle s_{n}(0), s_{n}^\star\rangle \geq \langle s_{n}(0), s_k^\star \rangle\,.
      \end{equation*}
      Further, suppose that \(V_{n}(0), s_{n}(0)\) are such that
      \begin{equation*}
            \langle V_{n}(0), \mG_{(n-1)} (s_{n})_{(n-1)}(0)\rangle > \dfrac{1}{2} \|V_{n}(0)\|_F^2 \|(s_{n})_{(n-1)}\|^2\,.
      \end{equation*}
      Then, the dynamics of \(V_{n}\) and \(s_{n}\) converge to the following fixed point:
      \begin{equation*}
            V_{n}(\infty) = V_{n}^\star\,, \quad s_{n}(\infty) = s_{n}^\star\,.
      \end{equation*}
\end{theorem}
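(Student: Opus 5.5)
We mirror the proof of \Cref{thm:cooperative}, with the single residual index $1$ replaced by the block $\{1,\ldots,n-1\}$. Write $V'=V_n$, $s'=s_n$, and use the reduced dynamics displayed just before the statement. Since $s_1^\star,\ldots,s_\nheads^\star$ are orthonormal one-hot vectors, the projections $V'_{(n-1)}$ and $s'_{(n-1)}$ live in the span of the features $n,\ldots,\nheads$. In that span, $\mG_{(n-1)}=\sum_{j\ge n} m_j^\star V_j^\star\otimes s_j^\star$ is a pure diagonal factorization problem whose top feature is $(V_n^\star,s_n^\star)$. The stated fixed point is then naturally read as $m_n^\star V_n^\star$ (the statement's $V_n^\star$ is presumably shorthand for this).

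\textbf{Step 1 (forward invariance).} Let $\gR$ be the set where $w_j=\langle V',V_n^\star-V_j^\star\rangle\ge 0$ and $z_j=\langle s'_{(n-1)},s_n^\star-s_j^\star\rangle\ge0$ for all $j\in[n,\nheads]$. We show $\gR$ is forward invariant.
\begin{itemize}
\item For $w_j$: $\dot w_j = m_n^\star\langle s',s_n^\star\rangle - m_j^\star\langle s',s_j^\star\rangle - \|s'_{(n-1)}\|^2 w_j$. On $\{w_j=0\}$ this is nonnegative, because $z_j\ge0$ and $m_n^\star> m_j^\star$.
\item For $z_j$: expand $\dot z_j$ with the effective preconditioner $\Pi(s'_{(n-1)})^2+\kappa\, s'_{(n-1)}s'^\top_{(n-1)}$, where $\kappa=\sum_{i<n}\langle s_i^\star,s'\rangle^2$. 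Every term except $m_n^\star\langle s_n^\star,s'\rangle^2\langle V',V_n^\star\rangle - m_j^\star\langle s_j^\star,s'\rangle^2\langle V',V_j^\star\rangle$ is proportional to $z_j$, by the same $\diag(s)$ manipulation as in \Cref{thm:coupling}. On $\{z_j=0\}$ the remaining difference is $\ge0$, using $w_j\ge0$ and the ordering of the $m^\star$.
\end{itemize}

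\textbf{Step 2 (Lyapunov function).} Set $\phi=\langle V',\mG_{(n-1)}s'_{(n-1)}\rangle-\tfrac12\|V'\|_F^2\|s'_{(n-1)}\|^2$. Then $\dot\phi=\|\dot V'\|^2+\|\tilde\Pi\nabla_{s'_{(n-1)}}\phi\|^2\ge0$, where $\tilde\Pi$ is the PSD square root of the preconditioner above. Its kernel is contained in $\ker\Pi(s'_{(n-1)})$, as in the second-head case.

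By the initialization inequality, $\phi(0)>0$. Hence $s'_{(n-1)}$ never vanishes. Maximizing over $V'$ shows $\phi\le \tfrac12\|\mG_{(n-1)}s'_{(n-1)}\|^2/\|s'_{(n-1)}\|^2\le \tfrac12 (m_n^\star)^2$. So $\phi$ is bounded and monotone, and the trajectory is bounded. A LaSalle-type argument then puts the $\omega$-limit set inside the stationary set:
\[
\mG_{(n-1)}s'_\infty=\|s'_{\infty,(n-1)}\|^2V'_\infty,
\qquad
V'^\top_\infty\mG_{(n-1)}-\|V'_\infty\|^2 s'_{\infty,(n-1)}\in\ker\tilde\Pi .
\]

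\textbf{Step 3 (identification).} Using orthogonality, compute $(\mG_{(n-1)}s)^\top\mG_{(n-1)}=\sum_{j\ge n}(m_j^\star)^2\langle s_j^\star,s\rangle s_j^\star$. Invariance of $\gR$ gives $\langle s_n^\star,s'_\infty\rangle=\max_{j\ge n}\langle s_j^\star,s'_\infty\rangle$, and this is $>0$ since $s'_{(n-1)}\neq0$. By \Cref{lem:kernel}, $s_n^\star\notin\ker$. Projecting the stationarity condition onto $s_n^\star$ gives
\[
(m_n^\star)^2=\frac{\|\mG_{(n-1)}s'_\infty\|^2}{\|s'_{\infty,(n-1)}\|^2}.
\]
The right-hand side is at most $(m_n^\star)^2$, with equality iff $s'_{\infty,(n-1)}\propto s_n^\star$. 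Combined with the simplex constraint, and with the components along $s_1^\star,\ldots,s_{n-1}^\star$ being driven to zero (see below), this forces $s'_\infty=s_n^\star$, and then $V'_\infty=m_n^\star V_n^\star$.

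\textbf{Main obstacle.} The main obstacle is Step 3's last point: ruling out limits where $s'$ keeps mass on the earlier positions $s_1^\star,\ldots,s_{n-1}^\star$, and ruling out degenerate kernel directions of $\tilde\Pi$ when $n-1$ coordinates are simultaneously active. To handle this, I would first use the explicit $\langle s_i^\star,\dot s'\rangle$ formula. It has the sign $-\langle s_i^\star,s'\rangle\, s'^\top_{(n-1)}(\cdots)\nabla\phi$. I would show that along the flow, once $\phi>0$, this is eventually negative, so the old-position mass decays monotonically. Only then would I invoke the kernel argument on the reduced simplex.
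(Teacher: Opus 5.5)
Your Steps 1–3 are the paper's argument. The paper proves this result "mutatis mutandis" from \Cref{thm:cooperative}: forward invariance of the ordering region, monotonicity of \(\phi\), and identification of the limit by projecting the stationarity condition onto \(s_n^\star\). You are right that the equality case only gives \(s'_{\infty,(n-1)} \propto s_n^\star\). The paper's "equality iff \(s_\infty = s_n^\star\)" silently replaces \(s'_{(n-1)}\) by \(s'\), so the paper's proof has the same gap. Your proposed repair cannot work, because what it is meant to prove is false for the reduced flow. Take any \(r \in (0,1)\) and the point \(s' = (1-r)s_1^\star + r s_n^\star\), \(V' = (m_n^\star/r)V_n^\star\). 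It satisfies \(\mG_{(n-1)}s'_{(n-1)} = \|s'_{(n-1)}\|^2 V'\) and \(V'^\top \mG_{(n-1)} = \|V'\|_F^2 s'_{(n-1)}\), so it is an equilibrium. It also meets both ordering hypotheses and has \(\phi = \frac{1}{2}(m_n^\star)^2 > 0\), which is the global maximum of \(\phi\). So \(\phi\) is constant on a whole family of maximizers parameterized by the mass on earlier positions. Nothing in the Lyapunov or kernel structure pushes that mass to zero.

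This is not only about trajectories started at equilibrium. Consider the face \(s' = (1-r)s_1^\star + r s_n^\star\), \(V' = cV_n^\star\). It is invariant for the \(s\)-dynamics on the closed simplex, which is how the paper treats the flow. On it the reduced system reads
\[
\dot c = r\,(m_n^\star - rc)\,, \qquad \dot r = 2(1-r)^2 r^2 c\,(m_n^\star - rc)\,,
\]
and it conserves \(H = \ln\frac{r}{1-r} + \frac{1}{1-r} - c^2\). Start from small \(r_0, c_0 > 0\), where all hypotheses hold. The trajectory follows a level set of \(H\) and converges to its unique intersection with \(rc = m_n^\star\), at some \(r^\ast < 1\). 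Hence \(s'(\infty) \neq s_n^\star\) and \(V'(\infty) = (m_n^\star/r^\ast)V_n^\star\). Along the way, \(\langle s_1^\star, \dot s'\rangle = -2(1-r)^2 r^2 c\,(m_n^\star - rc)\) is negative. But monotone decrease of the old mass does not mean decay to zero, since the rate is proportional to \(\nabla_{s'}\phi \to 0\). If instead \(m_n^\star < r_0 c_0 < 2 m_n^\star\), so that still \(\phi(0) > 0\), this rate is positive throughout. So the step "components along \(s_1^\star, \ldots, s_{n-1}^\star\) are driven to zero" is a genuine gap that no sharper estimate closes. The Lyapunov route can at best identify the limit modulo the old-position mass, i.e.\ \(s'_{(n-1)}/\|s'_{(n-1)}\| \to s_n^\star\) and \(\|s'_{(n-1)}\|\,V' \to m_n^\star V_n^\star\). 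This is consistent with the residual attention on earlier positions that the paper reports in its simulations. The literal conclusion needs an extra hypothesis, e.g.\ \(\langle s_i^\star, s'(0)\rangle = 0\) for \(i < n\). Those coordinates are then invariant and the simplex constraint finishes your Step 3. However, this conflicts with the intended start \(s'(0) \approx s_1^\star\).
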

\begin{proof}
      The proof proceeds mutatis mutandis to that of \Cref{thm:cooperative}.
\end{proof}

\section{Expanding the Initialization Condition}
\label{app:initialization}

In this section, we explain \Cref{rmk:initialization} in detail.
As stated, for any initialization around \(s_k(0) \approx \frac{1}{T} 1_T\) and \(V_k \approx 0\), we obtain the following from the first-order Taylor approximation as \(\mP \approx 0\):
\begin{equation*}
      \dot{V}_k(0) \approx \dfrac{1}{T} \mG 1_T\,, \quad \dot{s}_k(0) \approx 0\,.
\end{equation*}
Therefore, the heads \(V_k(0)\) exhibit a faster dynamics than the attention scores \(s_k\).
For small timescales \(t\), the heads are approximately aligned with the same direction:
\begin{equation*}
      V_k(t) \approx \dfrac{t}{T} \mG 1_T\,,
\end{equation*}
which satisfies the initialization condition in \Cref{thm:coupling} as \(m_1^\star \geq m_k^\star\) for any \(k \in [\nheads]\).
Moreover, the second-order Taylor approximation yields:
\begin{equation*}
      \begin{split}
            \ddot{V}_k(0) &\approx - \sum_{i} \dot{V}_i s_i^\top s_k \approx \dfrac{1}{T^2} \mG 1_T\,, \\
            \ddot{s}_k(0) &\approx \Pi(s_k) \dot{V}_k^\top \left(\mG - \mP\right) \approx \frac{1}{T} \pi(s_k) \mG 1_T\,.     
      \end{split}
\end{equation*}
By \Cref{lem:ordering}, we can show that \(\dot{s}_k(0)\) is such that the component of \(s_1^\star\) is the maximal entry.
Therefore, we expect \(s_k\) to align with the initialization condition given in \Cref{thm:coupling} for small timescales \(t\):
\begin{equation*}
      s_k(t) \approx \frac{1}{T^2} \left(I_T - \frac{1}{T} 1_T 1_T^\top\right) \mG 1_T\,.
\end{equation*}
A similar type of analysis also applies to the initializations of \Cref{thm:cooperative,thm:higher}.

Note that the initialization regimes in our theorems are not near a particular point but large sets that satisfy some ordering.
Coupled with the analysis above, the initialization basin for these theorems can be expanded.
This contrasts with analyses that rely on vanishing initialization or limits toward critical submanifolds.

\section{Simulations and Comparison with Transformers}
\label{app:simulations}

In this section, we numerically simulate the regression gradient flow analyzed in \Cref{sec:regression}, verify that our theoretical predictions match these simulations, and compare the resulting trajectories with those of the minimal transformer trained on the classification task from \Cref{sec:incremental}.

\subsection{Numerical Simulations of the Regression Flow}
\label{app:simulations:regression}

We present numerical simulations of the gradient flow dynamics of the loss in \Cref{eq:loss} with the following parameters: \(d = 50\), \(T = 40\), \(h = 3\), \(|I(k)| = 1\) for all \(k \in [h]\), and \(m = 1.7\).
We initialize the value parameters \(V_i\) to 0 and the attention patterns \(s_i\) to \(\frac{1}{T} 1_T + \epsilon_i\) where \(\epsilon_i\) are sampled from Gaussian distribution with zero-mean and \(\epsilon I_T\) covariance with \(\epsilon = 10^{-6}\).
\Cref{fig:simulations} shows the evolution of the attention patterns \(s_k\), the value parameters \(V_k\), and the loss over time.

The results align with the transformer experiments in \Cref{sec:incremental}.
Similar to the transformer experiments, the heads first learn from position \((1)\), then position \((2)\), and finally position \((3)\).
The time scales of these stages are clearly separated where the first stage is the fastest and the third stage is the slowest.
Notably, at first, all heads try to learn from the position \((1)\), as it is related to the most important feature.
After this competitive phase, the heads start to learn from the position \((2)\) and then the position \((3)\) where they specialize in different patterns.
Here, they cooperate to learn from the position \((3)\).
In particular, the first head offsets feature \((3)\) as the third head's residual attention on the first position results in a cross term.

\begin{figure}[!ht]
      \begin{center}
            \includegraphics[width=\linewidth]{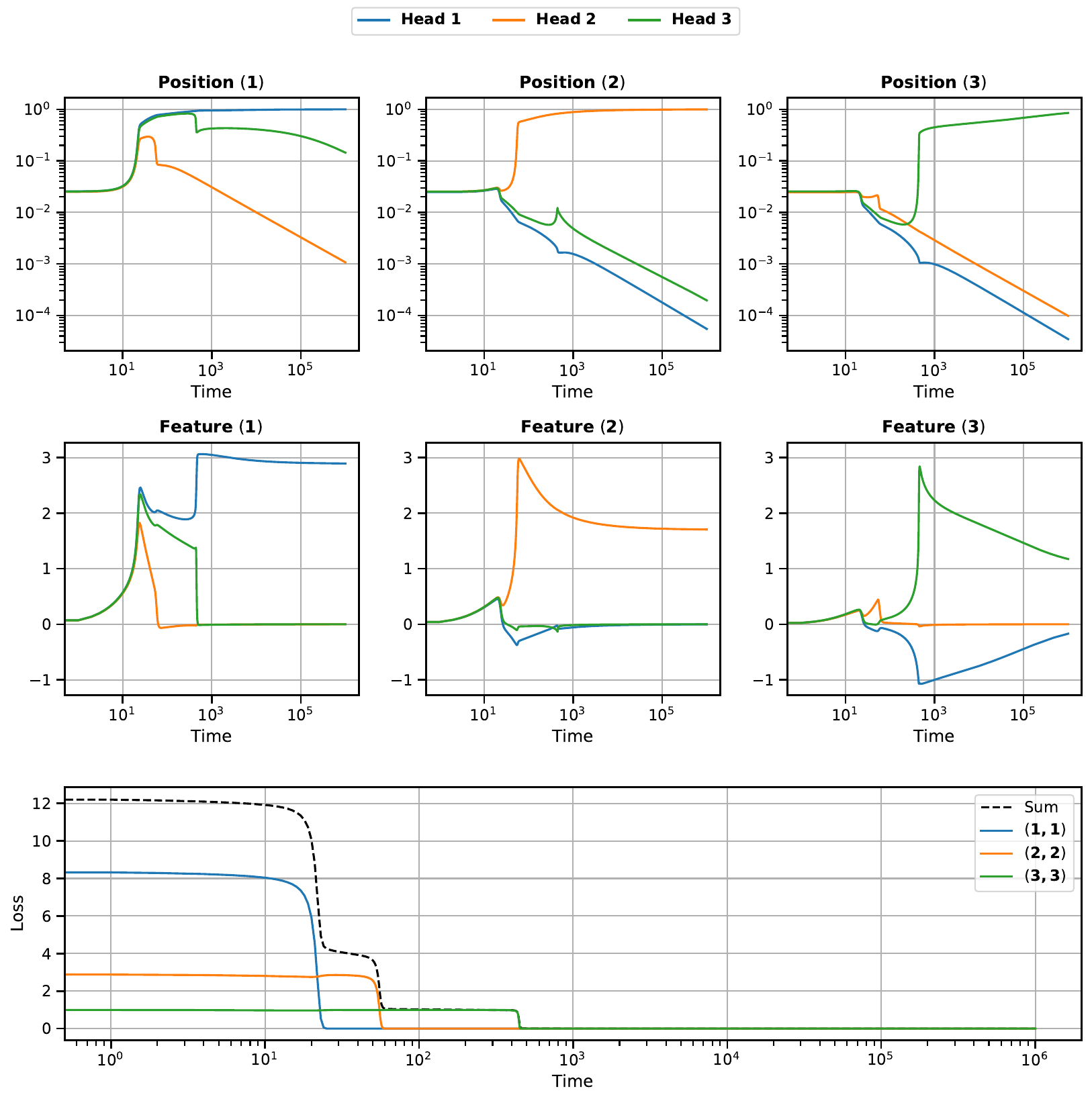}
      \end{center}
      \caption{
            (Top) Per-head attention mass \(s_k\) on positions \((1), (2), (3)\) over time.
            (Middle) Per-head value-matrix alignments \(\langle V_k, V_j^\star\rangle\) with the three feature directions.
            (Bottom) The evolution of the loss over time.
            We only plot the relevant coordinates of \(s_k\) and \(V_k\) for clarity.
            We decompose the loss into the (feature, position) contributions which are plotted in the color of the heads that learn these contributions.
      }
      \label{fig:simulations}
\end{figure}

\Cref{fig:theory_vs_regression} overlays the predictions of our simplified analysis on the full regression simulation.
We pick \(t = 25\) as the start of the second (cooperative) stage and \(t = 300\) as the start of the third stage; at each boundary, we initialize the simplified coupled ODEs of \Cref{sec:cooperative} from the theoretical initialization for that stage and integrate them forward, plotting the resulting trajectories as dashed curves.
The simplified dynamics quantitatively trace the full ODE across both attention positions \(s_k\) and value parameters \(V_k\), including the non-monotonic dips in non-offshooting heads during the cooperative phase.
This indicates that our theoretical simplifications preserve the quantitative behavior of the underlying dynamics, not just the qualitative stage structure.

\begin{figure}[!ht]
      \begin{center}
            \includegraphics[width=\linewidth]{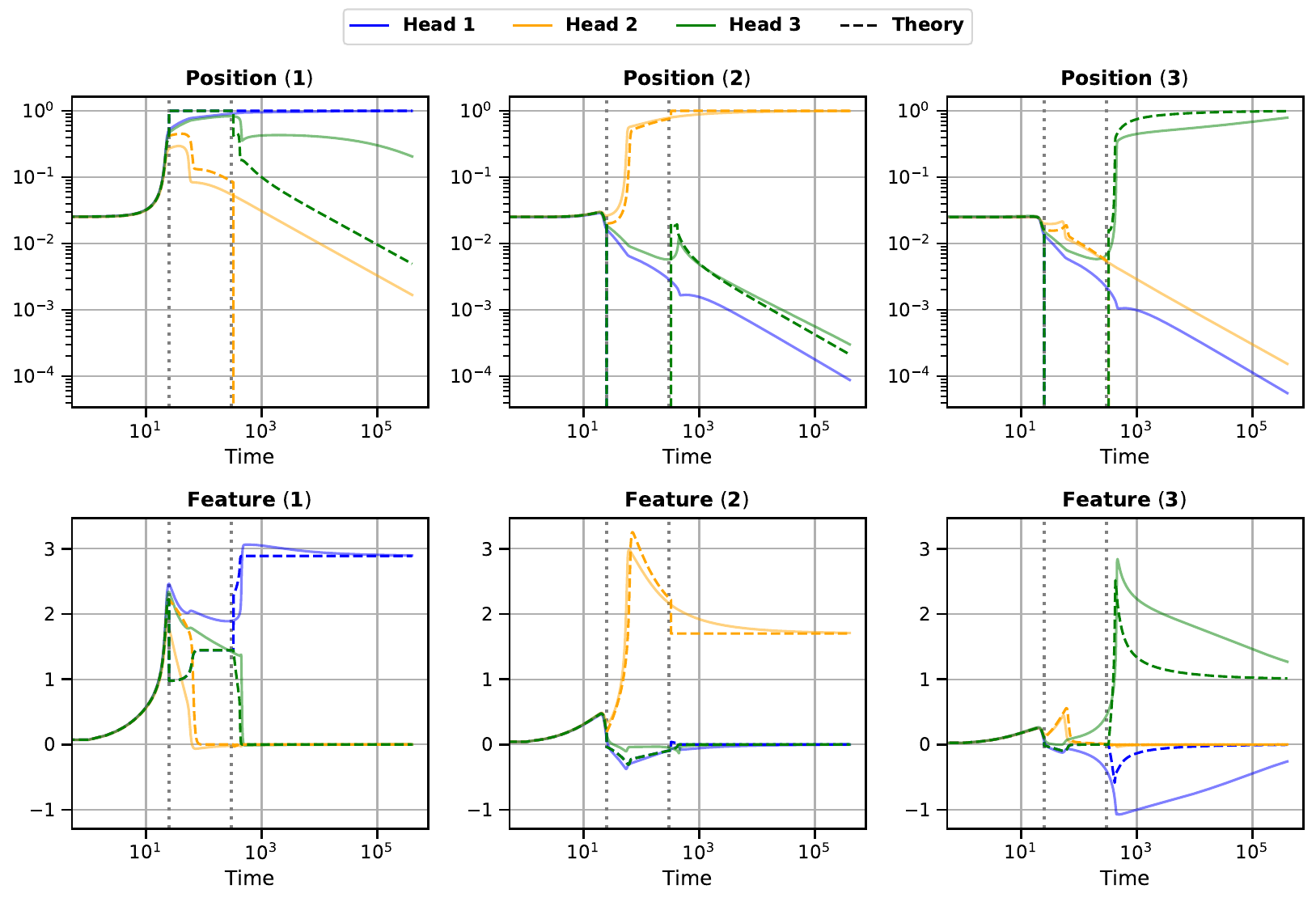}
      \end{center}
      \caption{
            Quantitative comparison between the regression gradient-flow simulation (solid) and the simplified theoretical dynamics (dashed).
            At each stage boundary \(t = 25\) for the second stage and \(t = 300\) for the third, we initialize the simplified coupled ODEs of \Cref{sec:cooperative} from the theoretical initialization for that stage and integrate them forward; the dashed curves are the resulting trajectories.
            (Top) Per-head attention mass \(s_k\) on positions \((1), (2), (3)\) over time.
            (Bottom) Per-head value-matrix alignments \(\langle V_k, V_j^\star\rangle\) with the three feature directions.
            Vertical dotted lines mark the stage boundaries.
            The theoretical trajectories quantitatively track the full ODE, capturing both convergence to each saddle and the compensatory dips in non-offshooting heads during the cooperative phase.
      }
      \label{fig:theory_vs_regression}
\end{figure}

\subsection{Comparison with the Classification Transformer}
\label{app:simulations:transformer}

We now compare the regression-flow simulation in \Cref{fig:simulations} with the corresponding trajectories of the minimal transformer trained on the classification task in \Cref{fig:collaboration}.
Despite the differences in architecture, loss, and parameterization, the two settings agree on the structural signatures of the dynamics rather than only on the existence of stages.
First, both exhibit a clear competitive phase in which all three heads concentrate on the same most important positions before any specialization occurs, with attention to the remaining positions decaying simultaneously across heads.
Second, the offshooting order is the same in both: heads leave the shared pattern one at a time, in the order dictated by the feature importances \(m_1^\star > m_2^\star > m_3^\star\), with the second-most important position acquired before the least important one.
Third, the value matrices reproduce the cooperative compensation predicted by the theory: when the offshooting head locks onto a new feature, the non-offshooting heads reduce their alignment with that same feature direction to cancel the cross-term induced by the offshooting head's residual attention.
This is visible in \Cref{fig:simulations} as the negative dips in \(V_k\) for the heads that did not offshoot in the current stage, and in \Cref{fig:collaboration} as the corresponding dips in the value-matrix alignments of Heads~1 and~3 during Stage~2 and of Head~3 during Stage~3.
The comparison is qualitative: the regression flow uses a continuous-time gradient flow on a population objective with singleton groups, while the transformer is trained on a finite sample with cross-entropy and absolute positional encodings, so axis scales and timescales differ by construction.

\begin{figure}[!ht]
      \begin{center}
            \includegraphics[width=\linewidth]{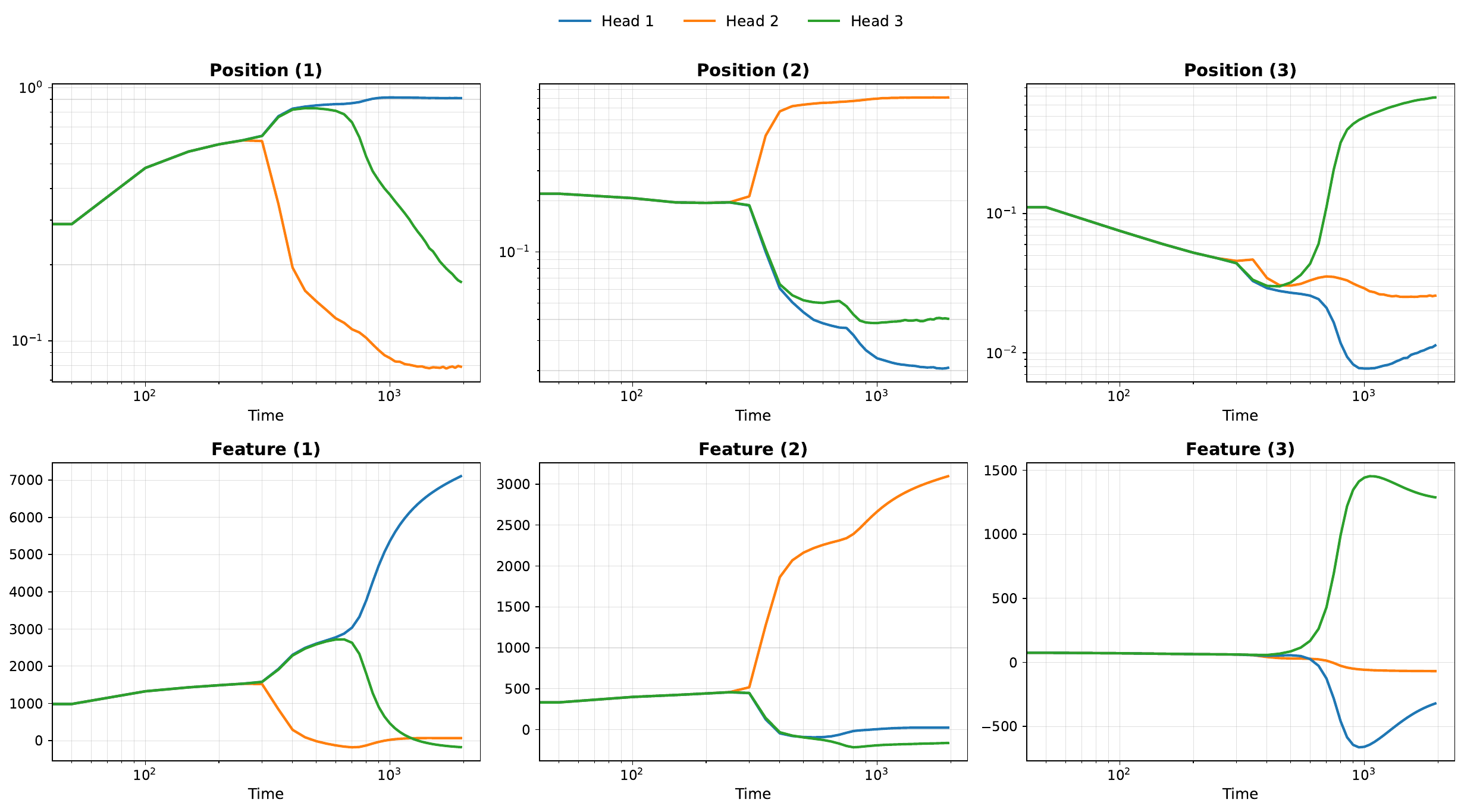}
      \end{center}
      \caption{
            Same quantities as in \Cref{fig:simulations}, but measured on the minimal model with a small initialization on the classification task rather than on the regression gradient flow.
            (Top) Per-head attention mass on positions in \(I(1), I(2), I(3)\) over training.
            (Bottom) Per-head alignment of the value matrix with the feature directions \(A_1^\star, A_2^\star, A_3^\star\).
            The transformer reproduces the structural signatures predicted by the theory:
            (i) a competitive phase in which all heads concentrate on \(I(1)\), the most important positions;
            (ii) a sequential offshoot, with Head~2 acquiring \(I(2)\) before Head~3 acquires \(I(3)\);
            (iii) compensatory decreases in the non-offshooting heads' value alignments (e.g., Heads~1 and~3 reduce their alignment with \(A_2^\star\) as Head~2 grows, and Head~3 reduces its alignment with \(A_1^\star\) as Head~1 grows), matching the cooperative mechanism analyzed in \Cref{sec:cooperative}.
            The comparison with \Cref{fig:simulations} is qualitative: the two settings differ in architecture, loss, and parameterization, so axis scales and timescales are not aligned.
      }
      \label{fig:collaboration}
\end{figure}

\section{Experimental Details}
\label{app:exp}

The full model has a standard single-layer transformer decoder architecture as discussed in \Cref{sec:incremental}.
It uses absolute positional encodings with learnable embedding and unembedding matrices and has the configuration shown in \Cref{tab:transformer-param}. The minimal model, as described in 
\Cref{sec:representation}, removes layer normalization, dropout, residual connections, key and output attention matrices, and the MLP layer.
It uses one-hot positional encodings and does not have embedding and unembedding matrices. 
Both the full model and the minimal model are trained with the same optimization hyperparameters listed in \Cref{tab:optimization-param}, and the same synthetic data generation process described in \Cref{tab:synth-dataset-param}.
The main difference in the learning task between the two models is the interval lengths \(|I(k)|\) of the Markov process: the full model uses intervals of length 4, while the minimal model uses intervals of length 2, as summarized in \Cref{tab:markov-intervals}.

We train the $n$-gram models using the same architecture and optimization hyperparameters as the full transformer model, but train with windows of size $n$ sliding over the full sequence.
The source code to reproduce our experiments is available at \url{https://github.com/tml-epfl/sparse-attention-dynamics}.

\begin{table}[!h]
\caption{Synthetic dataset parameters}
\label{tab:synth-dataset-param}
\begin{center}
\begin{tabular}{lc}
\multicolumn{1}{c}{\bf Parameter} & \multicolumn{1}{c}{\bf Value} \\
\hline \\
Heads $h$               & 3 \\
Dictionary size $d$     & 50 \\
Multiplicative constant $m$ & 1.7 \\
Base scale $b_0$        & 10 \\
Sequence length $T$     & 20 \\
Train samples           & 9000 \\
Test samples            & 3000 \\
Seed                    & 0 \\
\end{tabular}
\end{center}
\end{table}

\begin{table}[!h]
\caption{Optimization hyperparameters}
\label{tab:optimization-param}
\begin{center}
\begin{tabular}{lc}
\multicolumn{1}{c}{\bf Parameter} & \multicolumn{1}{c}{\bf Value} \\
\hline \\
Steps               & 2000 \\
Batch size          & 3000 \\
Gradient clipping   & 1.0 \\
Optimizer           & AdamW \\
Weight decay        & 0.01 \\
Learning rate       & 0.003 \\
Scheduler           & ReduceLROnPlateau \\
Patience            & 10 \\
Factor              & 0.5 \\
\end{tabular}
\end{center}
\end{table}

\begin{table}[!h]
\caption{Transformer configuration}
\label{tab:transformer-param}
\begin{center}
\begin{tabular}{lc}
\multicolumn{1}{c}{\bf Parameter} & \multicolumn{1}{c}{\bf Value} \\
\hline \\
Hidden dimension        & 255 \\
Feedforward dimension   & 64 \\
Dropout                 & 0.1 \\
Initialization scale    & 1 \\
Number of blocks        & 1 \\
Number of heads         & 3 \\
\end{tabular}
\end{center}
\end{table}

\begin{table}[!h]
\caption{Markov process intervals}
\label{tab:markov-intervals}
\begin{center}
\begin{tabular}{lcc}
\multicolumn{1}{c}{} & \multicolumn{1}{c}{\bf Full} & \multicolumn{1}{c}{\bf Minimal} \\
\hline \\
$w$  & $12$              & $6$ \\
$I(1)$             & $\{1,2,3,4\}$     & $\{1,2\}$ \\
$I(2)$             & $\{5,6,7,8\}$     & $\{3,4\}$ \\
$I(3)$             & $\{9,10,11,12\}$  & $\{5,6\}$ \\
\end{tabular}
\end{center}
\end{table}

\section{Additional Experiments}
\label{app:additional_experiments}
We run additional experiments to study incremental learning behavior under different settings.
In particular, we study the effect of infinite data versus finite data, different orders of importance with non-uniform interval lengths, and the impact of weight decay.

\subsection{Infinite Data}
\label{app:infinite_data}
Instead of training on a finite dataset of 9000 samples, we train the model with infinite data by sampling a new batch of data at each step. This removes any effect of overfitting on incremental learning.
We observe in \Cref{fig:attention-infinite-data} and \Cref{fig:plots-infinite-data} that the model still exhibits the same behavior.
This experiment is run with the minimal architecture described in \Cref{sec:minimal}.
\begin{figure}[!h]
      \begin{center}
            \includegraphics[width=\linewidth]{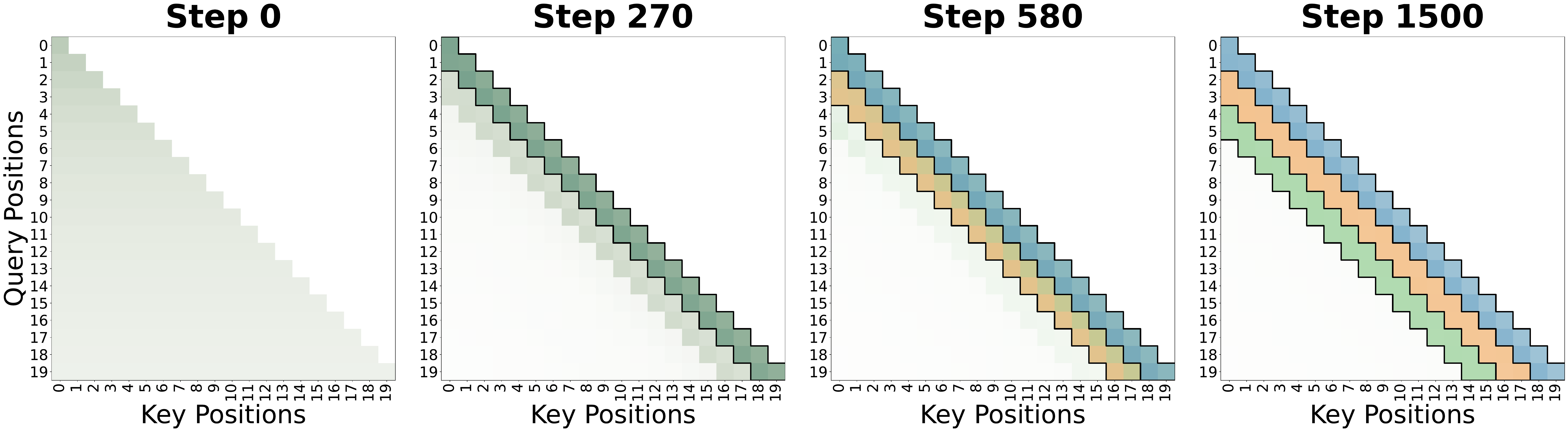}
      \end{center}
      \caption{Attention patterns over the training steps with online sampling of data.}
      \label{fig:attention-infinite-data}
\end{figure}
\begin{figure}[!h]
      \begin{center}
            \includegraphics[page=1,width=0.48\linewidth]{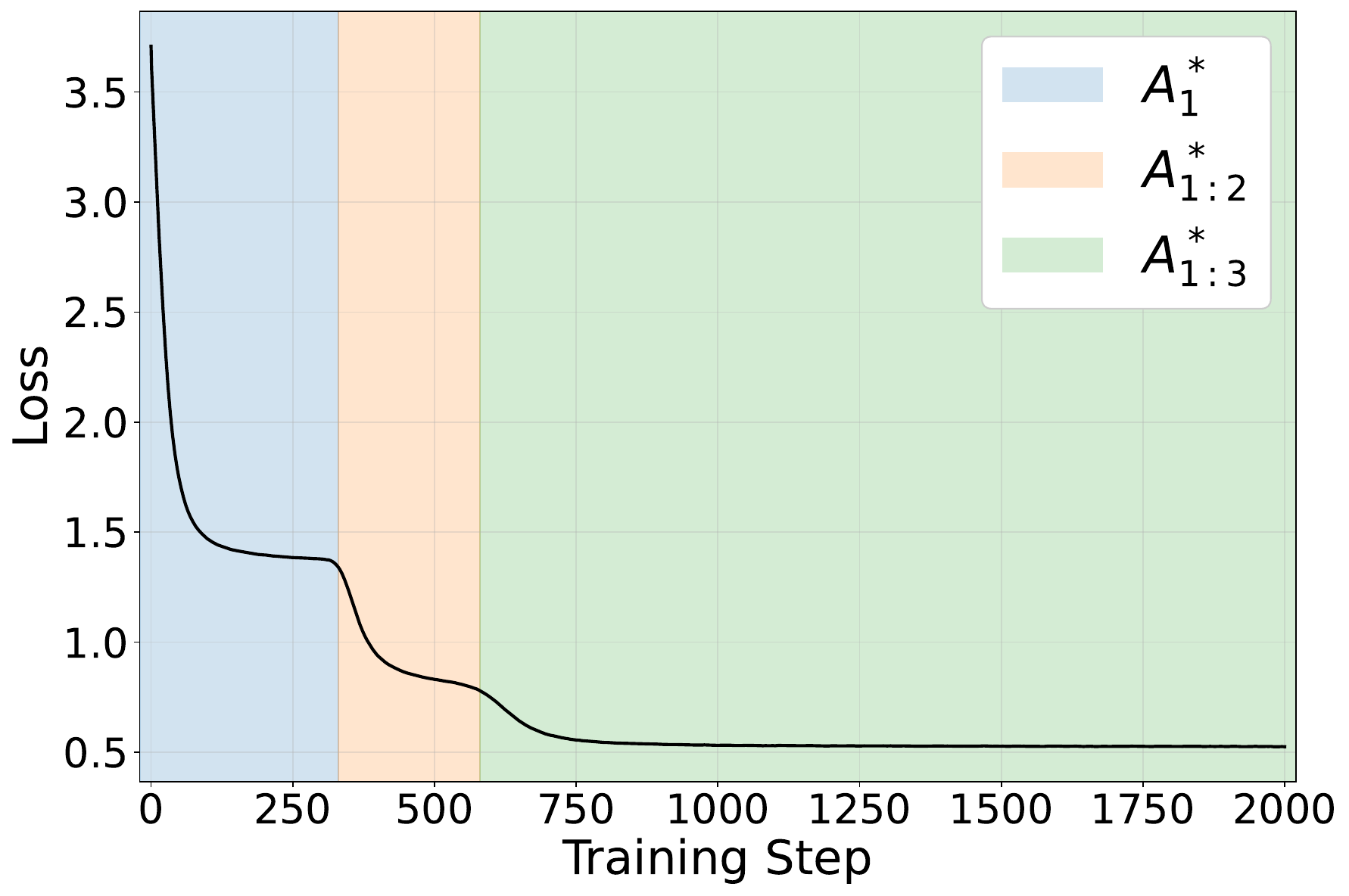}
            \includegraphics[page=1,width=0.48\linewidth]{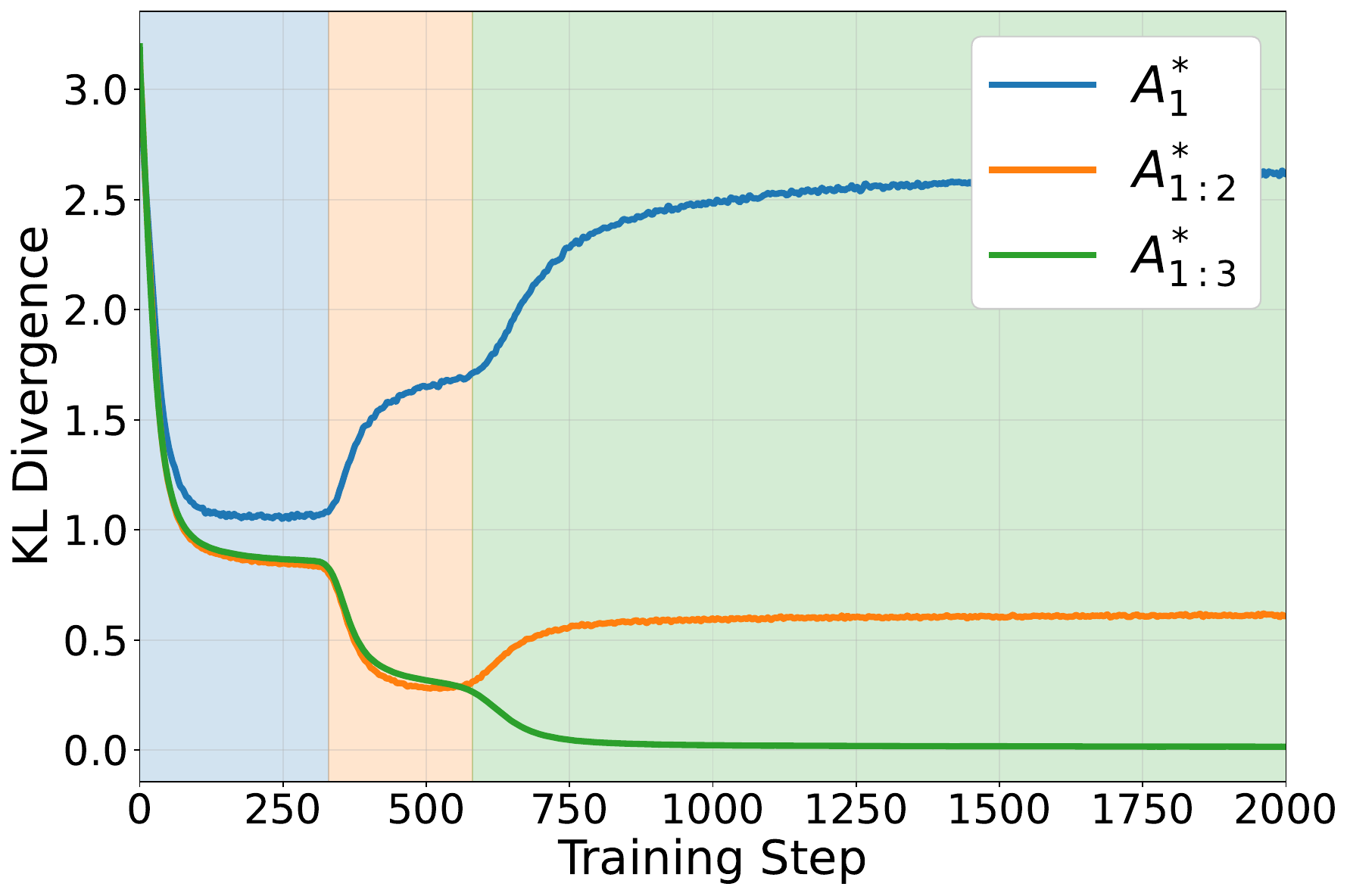}
      \end{center}
      \caption{Validation loss and KL divergence over the training steps with online sampling of data.}
      \label{fig:plots-infinite-data}
\end{figure}

\subsection{Reverse Order}
\label{app:reverse_order}

We reverse the order of importance of the intervals such that the most important interval is the furthest one.
\Cref{fig:attention-reversed} and \Cref{fig:plots-reversed} show the results when \(I(3)=\{12, 13\}\), \(I(2)=\{8, 9, 10, 11\}\), and \(I(1)=\{0, 1, 2, 3, 4, 5, 6, 7\}\), which reveal the same behavior as the original order.
We also note that it is generally easier to observe incremental learning behavior when the most important interval is the furthest one.
This indicates that the learning dynamics are impacted by the sequential structure of the task.
This experiment is run with the full architecture described in \Cref{sec:incremental}.

\begin{figure}[!h]
      \begin{center}
            \includegraphics[width=\linewidth]{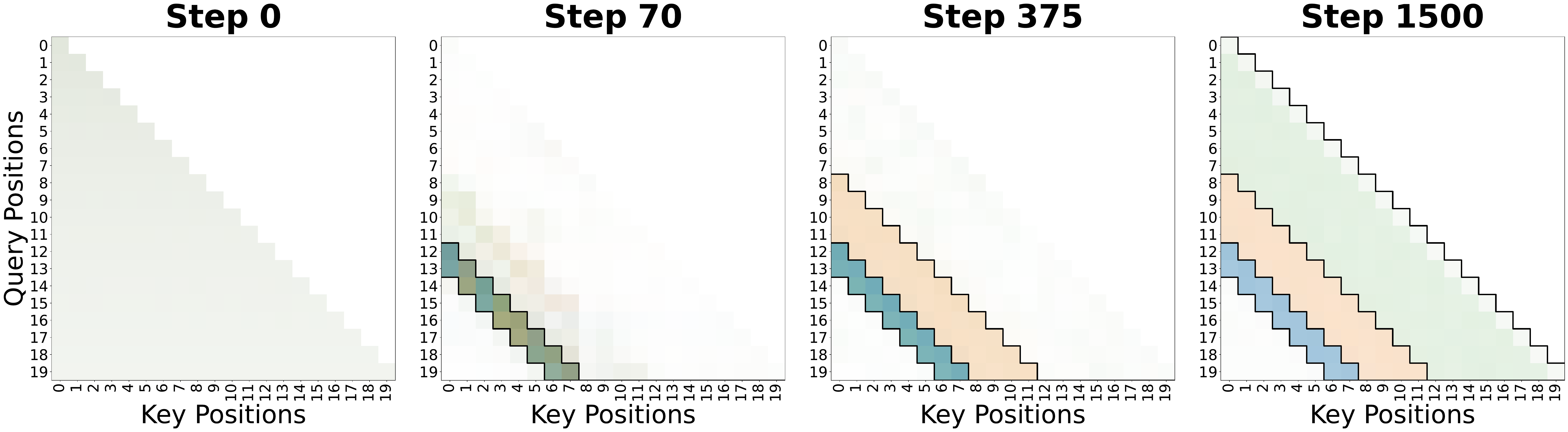}
      \end{center}
      \caption{Attention patterns over the training steps with reversed order of importance and varying interval lengths.}
      \label{fig:attention-reversed}
\end{figure}
\begin{figure}[!h]
      \begin{center}
            \includegraphics[page=1,width=0.48\linewidth]{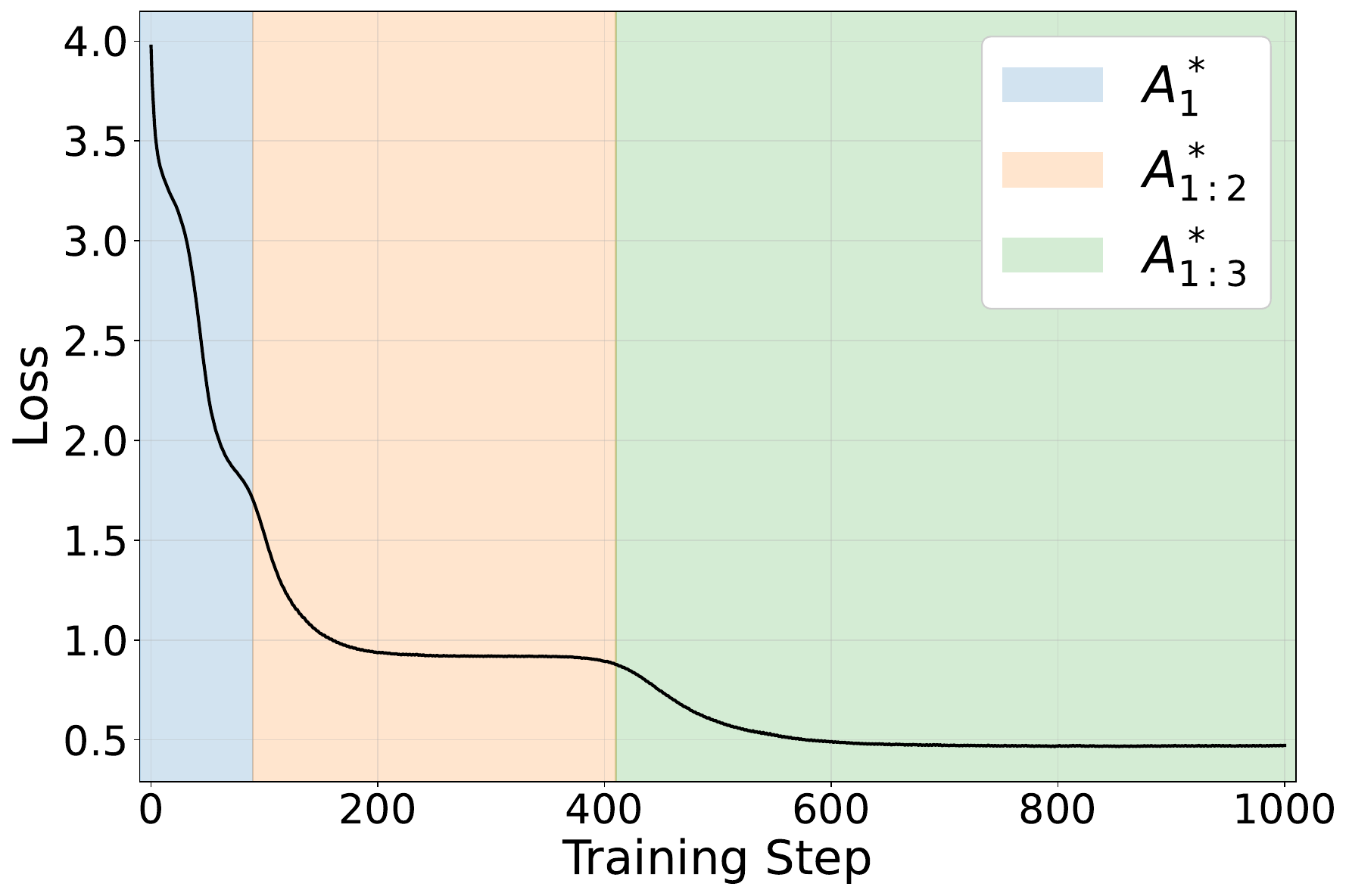}
            \includegraphics[page=1,width=0.48\linewidth]{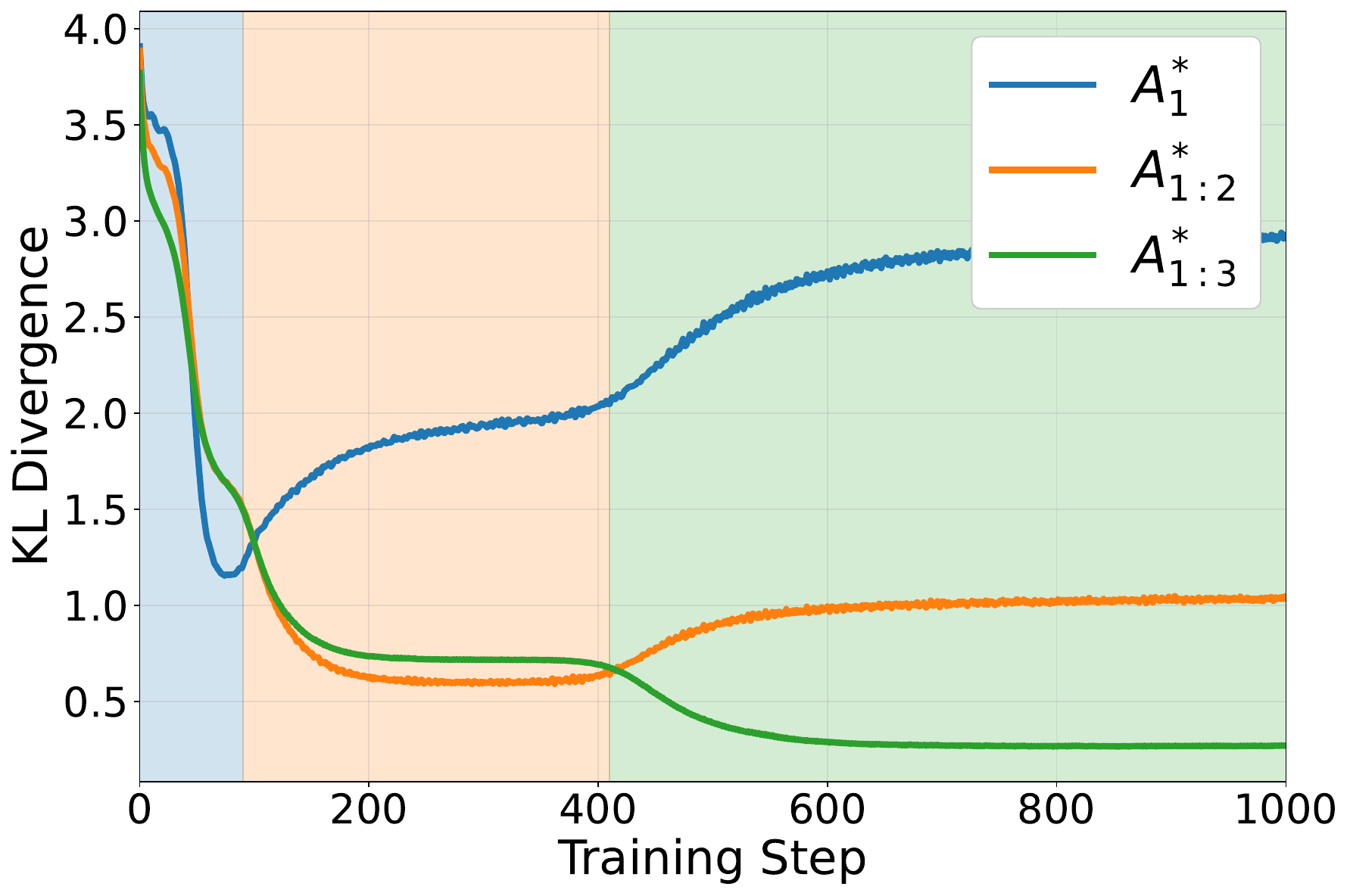}
      \end{center}
      \caption{Validation loss and KL divergence over the training steps with reversed order of importance and varying interval lengths.}
      \label{fig:plots-reversed}
\end{figure}

\subsection{Weight Decay}
\label{app:weight_decay}

We also study the impact of weight decay on the learning dynamics.
We observe almost no difference in the learning dynamics when weight decay is not applied so we do not report the results.

\subsection{Two-block Transformers}

We train 2-block minimal and full transformers with the same configuration as in \Cref{app:exp} but
adjusting the learning rate and number of training examples.
\Cref{fig:2layer-minimal,fig:2layer-full} show that the incremental learning behavior is similar to the 1-block case.
We observe that the first region corresponding to the first feature matrix is less pronounced.

\begin{figure}[!h]
      \begin{center}
            \includegraphics[page=1,width=0.48\linewidth]{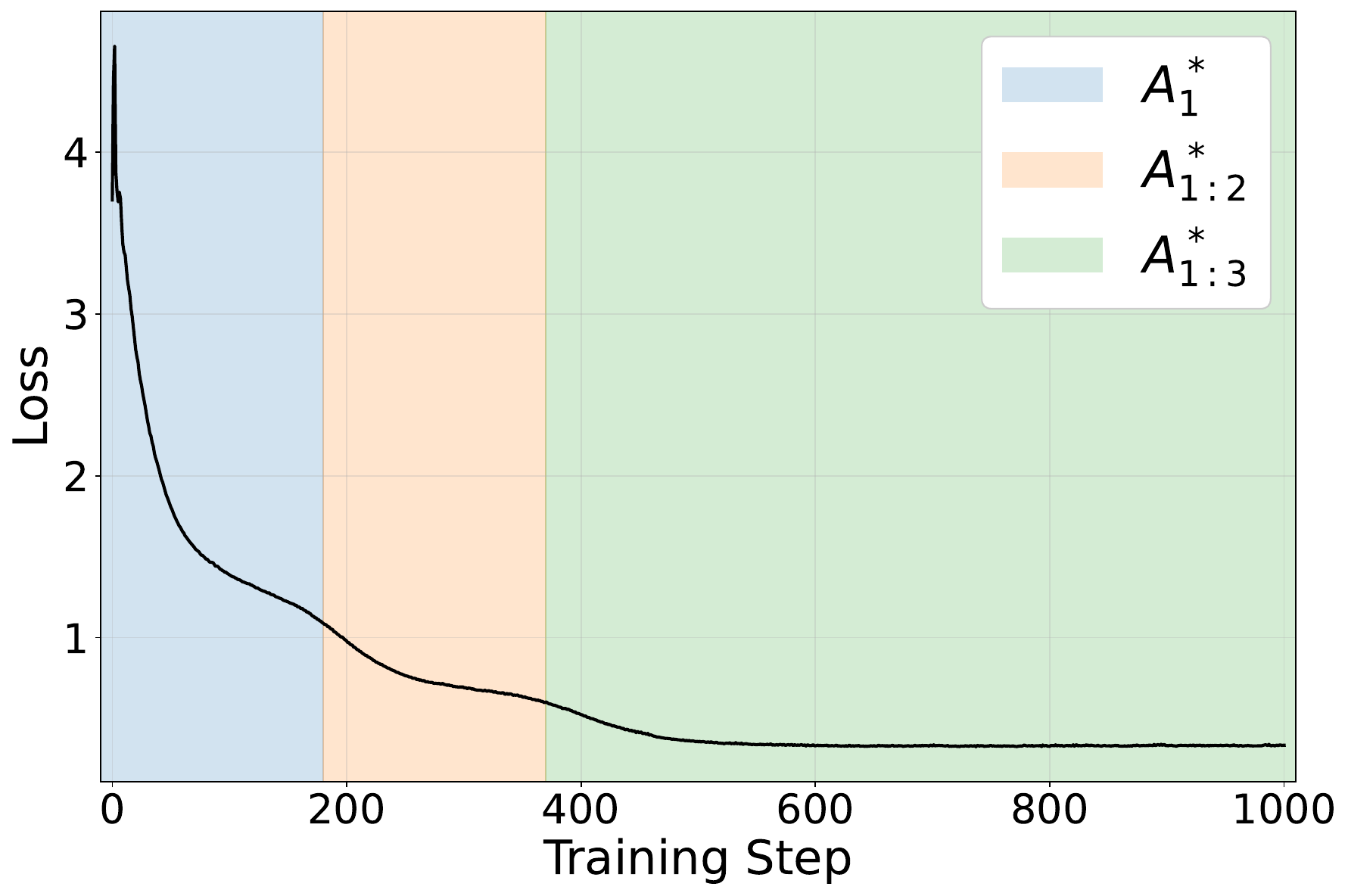}
            \includegraphics[page=1,width=0.48\linewidth]{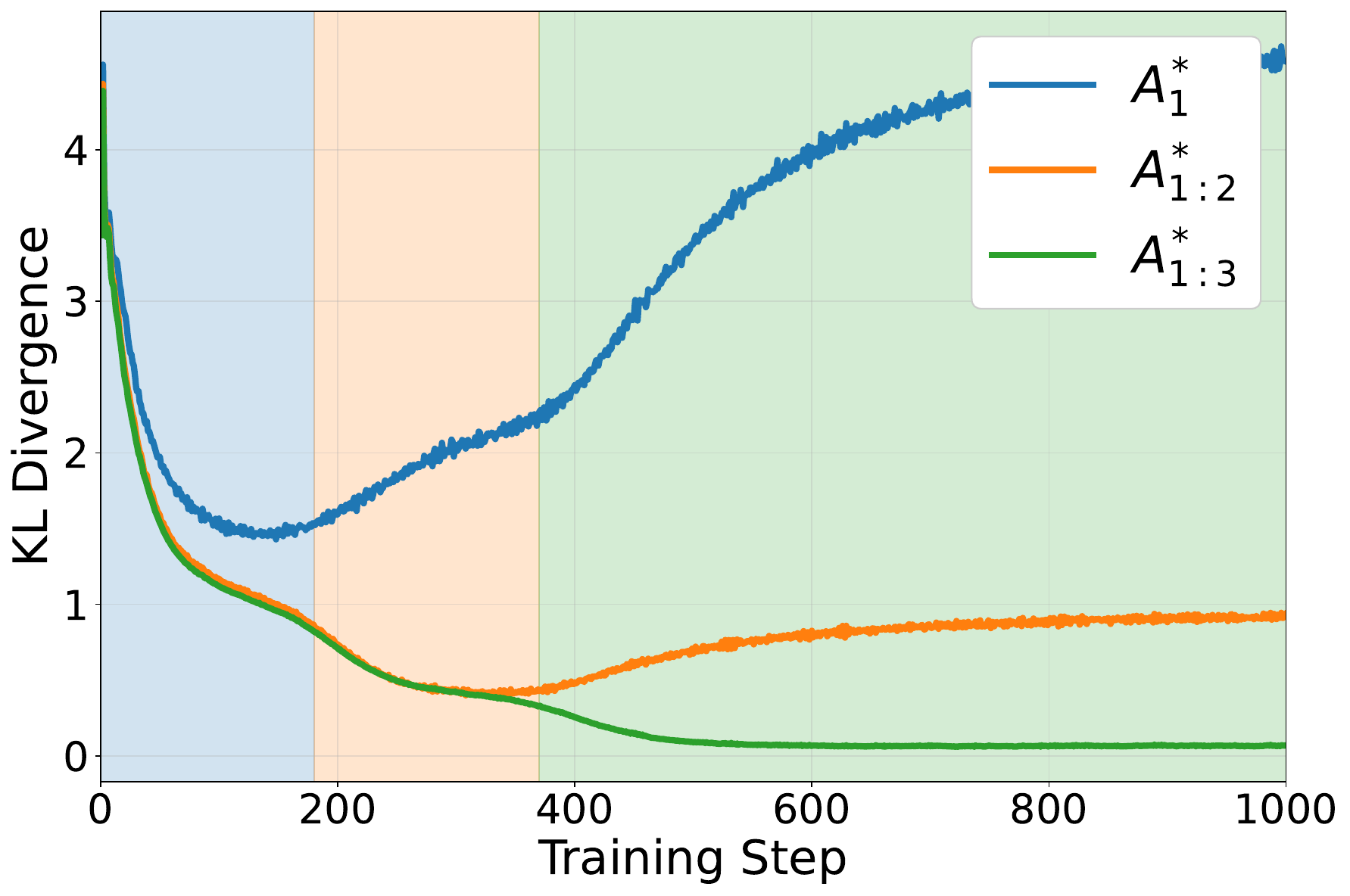}
      \end{center}
      \caption{Validation loss and KL divergence over the training steps for a 2-layer minimal transformer.}
      \label{fig:2layer-minimal}
\end{figure}

\begin{figure}[!h]
      \begin{center}
            \includegraphics[page=1,width=0.48\linewidth]{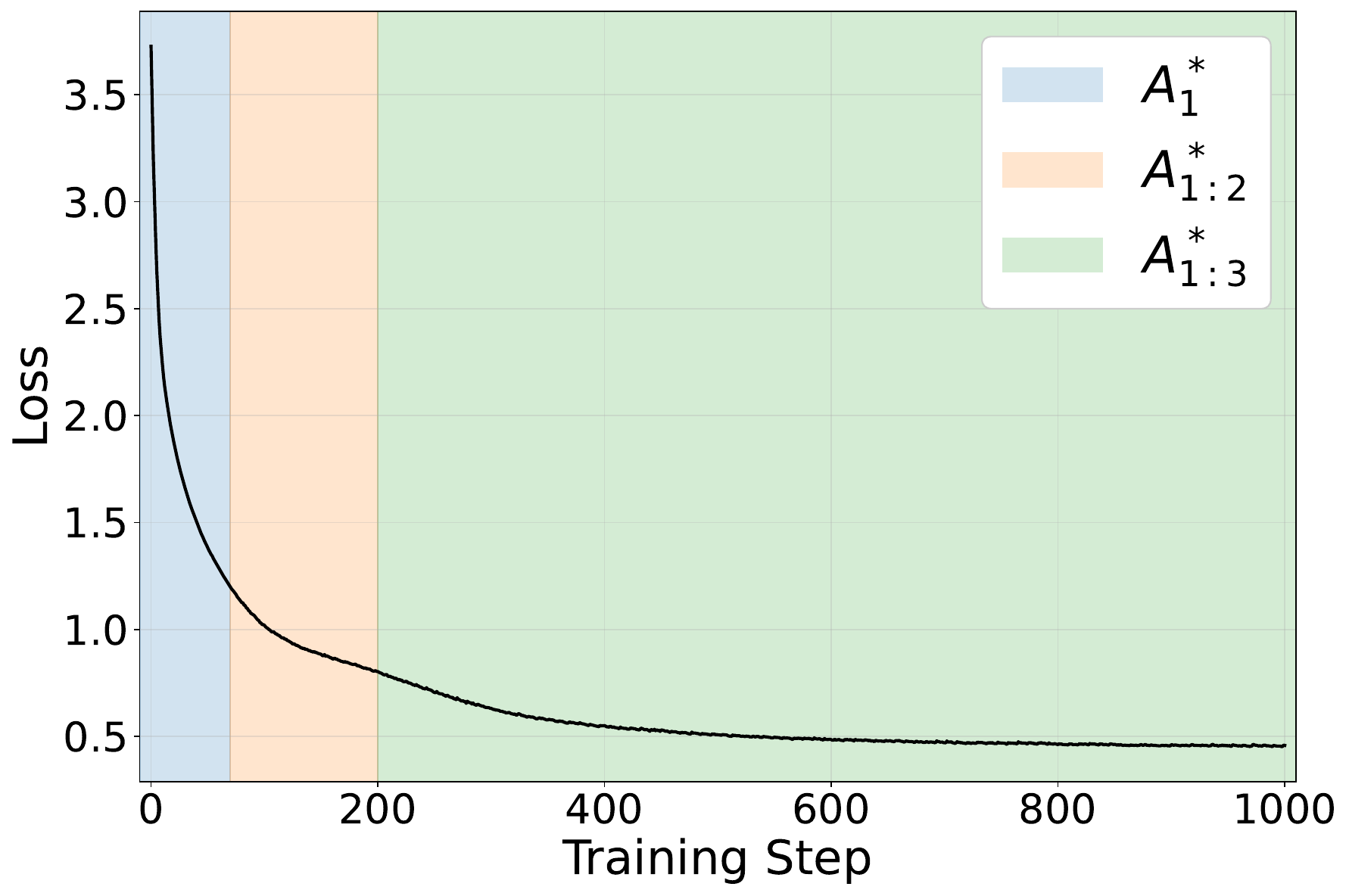}
            \includegraphics[page=1,width=0.48\linewidth]{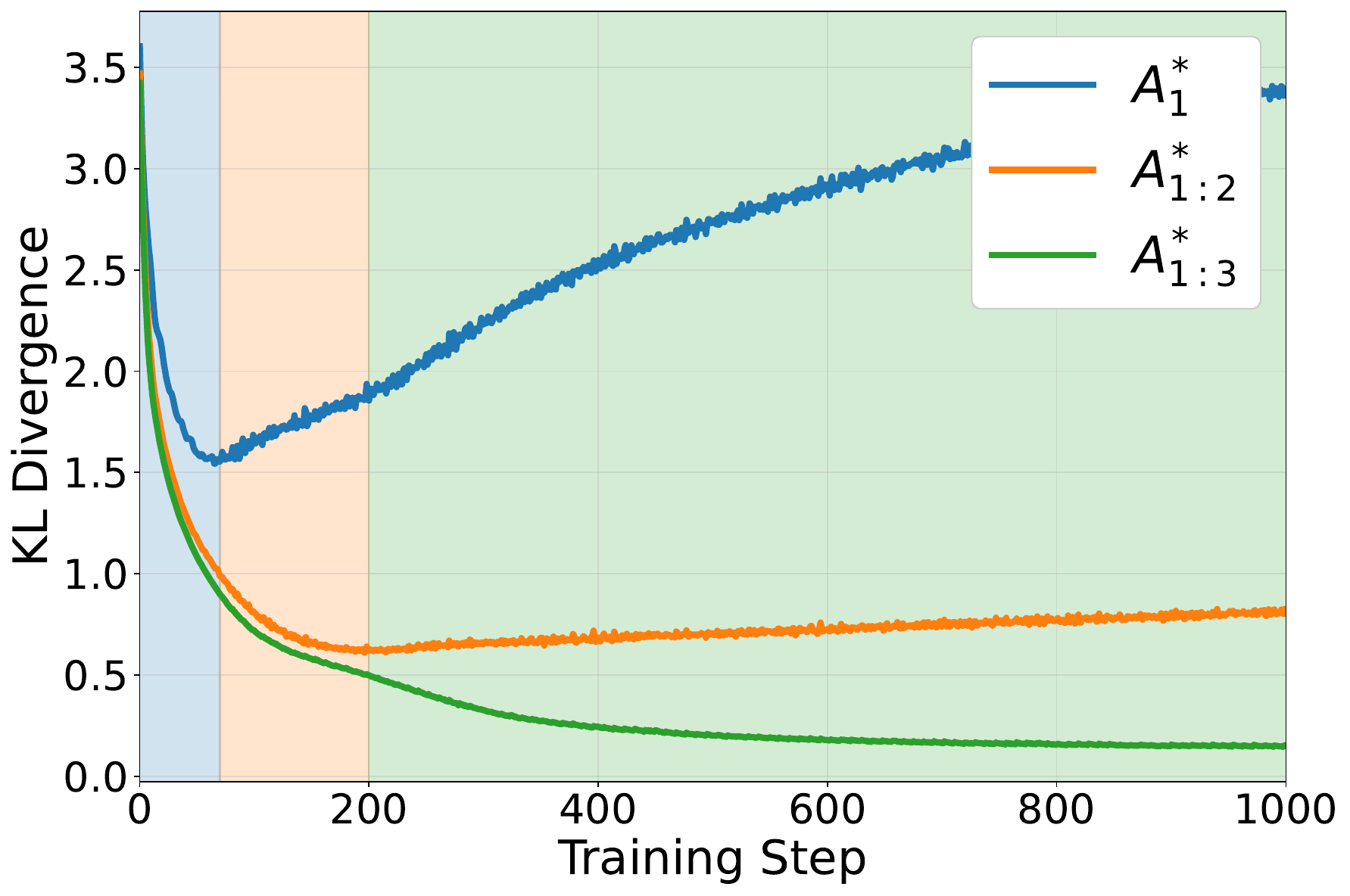}
      \end{center}
      \caption{Validation loss and KL divergence over the training steps for a 2-layer full transformer.}
      \label{fig:2layer-full}
\end{figure}

\subsection{Non-uniform \texorpdfstring{\(\alpha\)}{α} values}

We run experiments with \(\alpha = [0.7, 0.3]\) in \Cref{fig:kl-non-uniform-alphas} and observe that the model still exhibits incremental learning. 
In \Cref{fig:attention-non-uniform-alphas}, we observe the checkered pattern where heads focus more attention on the position with the highest \(\alpha\) value.

\begin{figure}[!h]
      \begin{center}
            \includegraphics[page=1,width=0.48\linewidth]{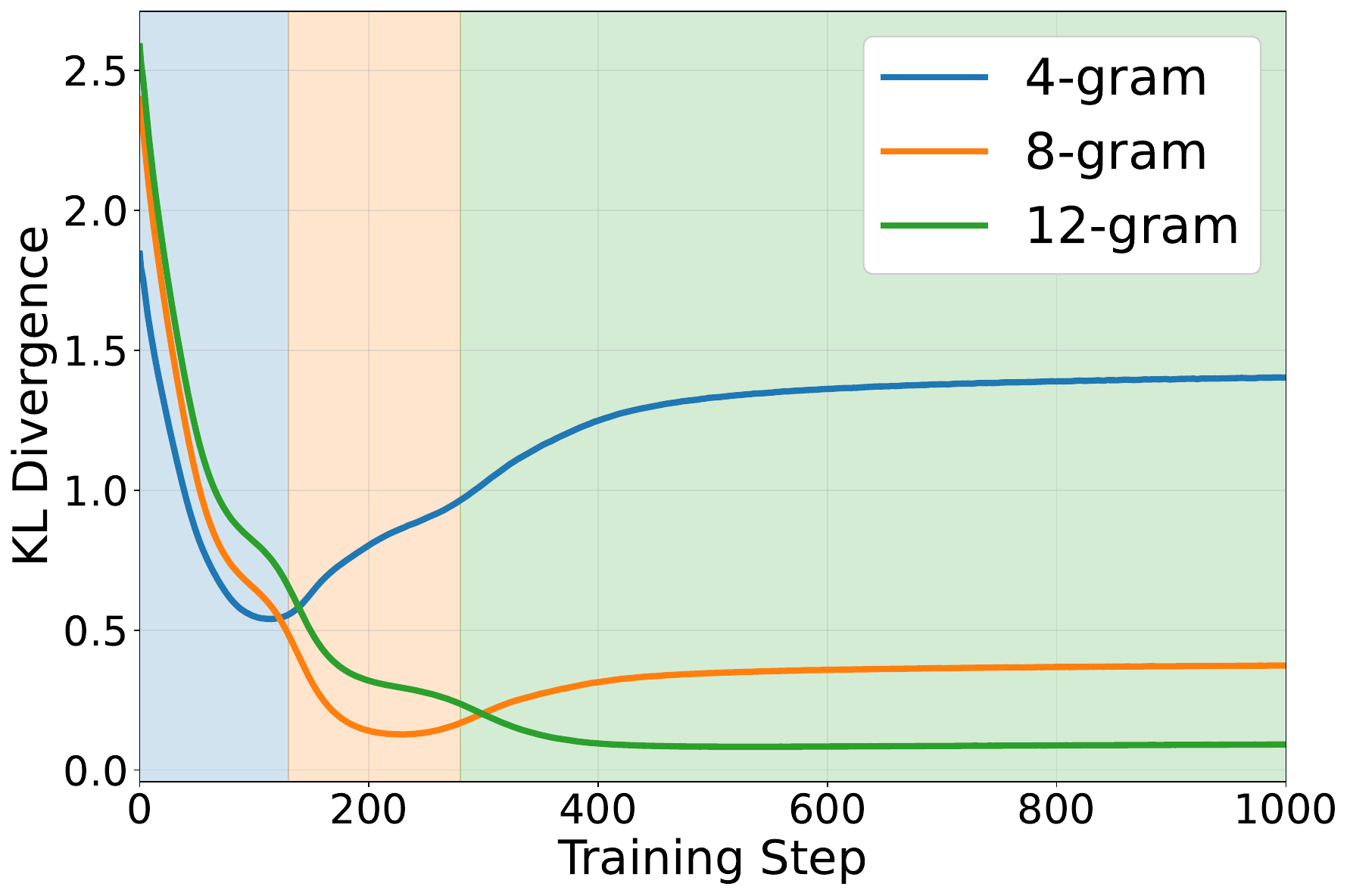}
      \end{center}
      \caption{KL divergence over the training steps with non-uniform \(\alpha\) values.}
      \label{fig:kl-non-uniform-alphas}
\end{figure}
\begin{figure}[!h]
      \begin{center}
            \includegraphics[width=\linewidth]{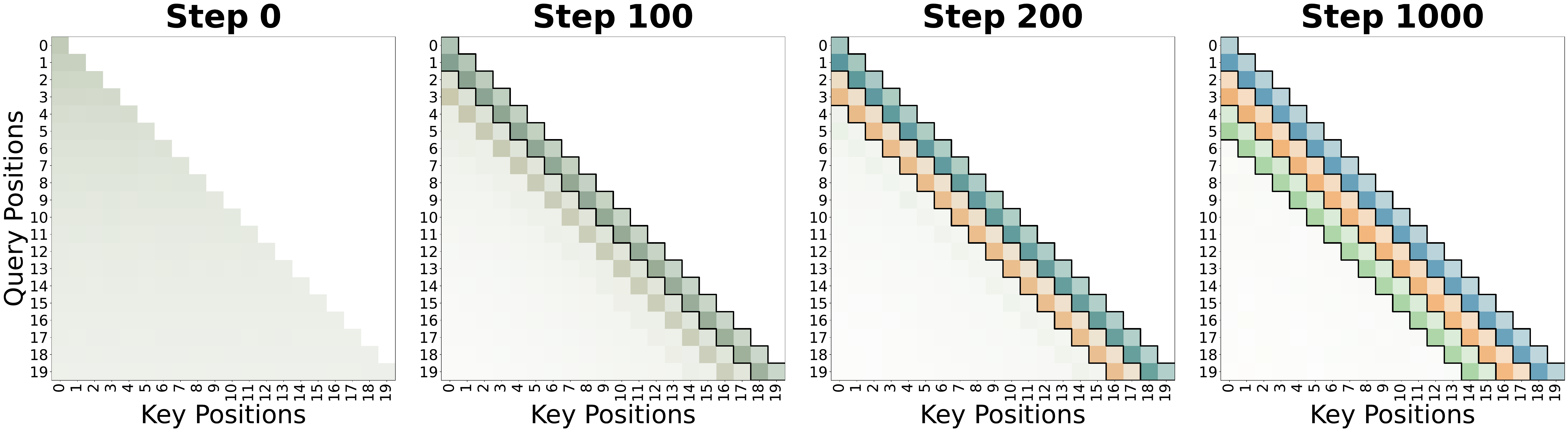}
      \end{center}
      \caption{Attention patterns over the training steps with non-uniform \(\alpha\) values.}
      \label{fig:attention-non-uniform-alphas}
\end{figure}

\subsection{Overlapping Intervals}
We run experiments with overlapping intervals where \(I(1) = \{5, 6, 7, 8\}\), 
\(I(2) = \{3, 4, 5, 6\}\), and \(I(3) = \{1, 2, 3, 4\}\).
This corresponds to interval lengths of 4 with an overlap or stride of 2.
We try learning transformers with three or four heads.
We observe in \Cref{fig:kl-overlapping-4heads} that the model with four heads still exhibits incremental learning behavior.
Similar results are observed for the model with three heads and thus omitted.
Attention patterns in \Cref{fig:attention-three-heads-overlapping-intervals,fig:attention-four-heads-overlapping-intervals} reveal the different learning orderings for three and four heads.
When the intervals are overlapping, it is unclear which positions are statistically the most significant, and transformers may follow different solutions based on the feature matrices.

\begin{figure}[!h]
      \begin{center}
            \includegraphics[page=1,width=0.48\linewidth]{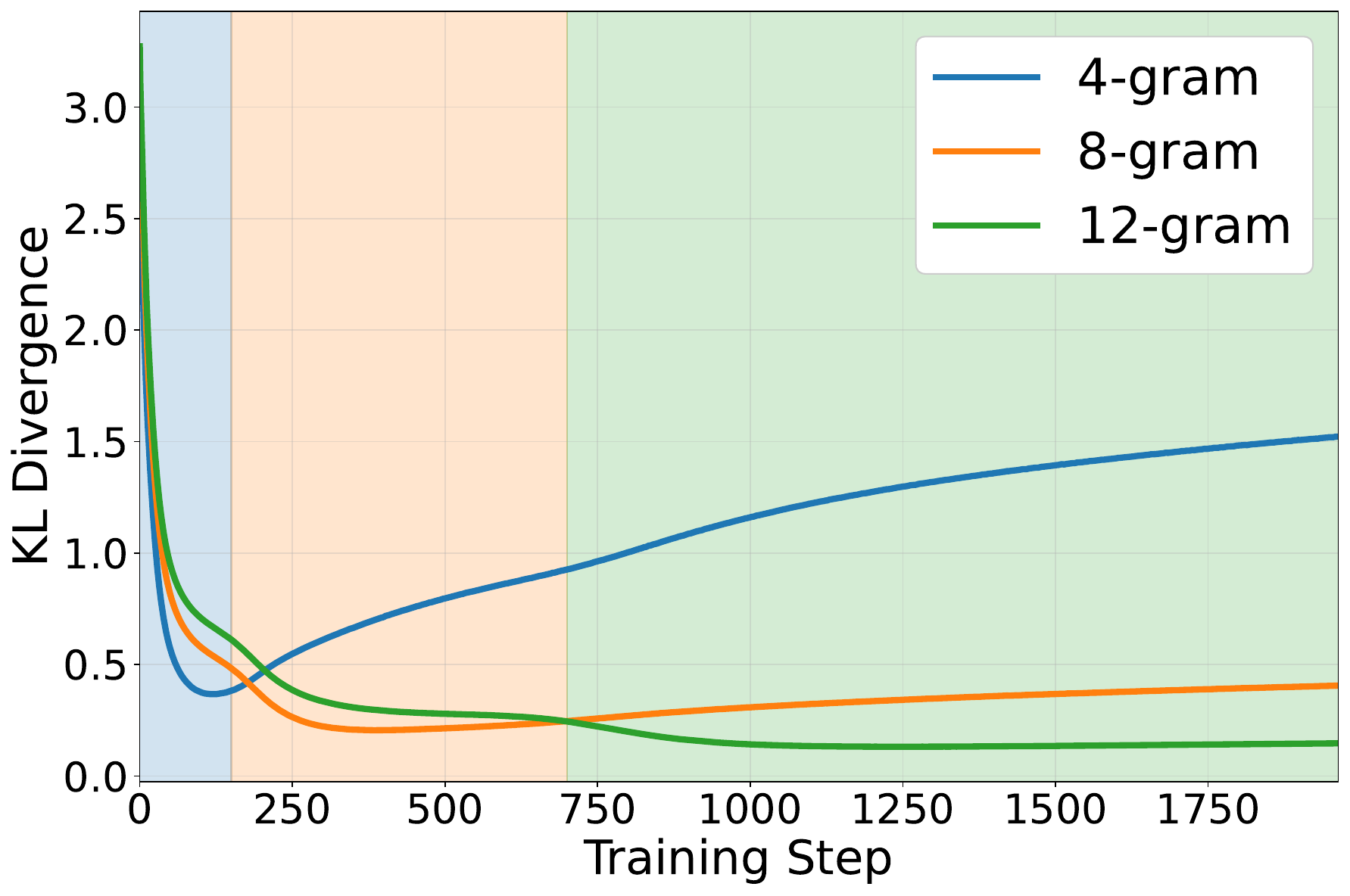}
      \end{center}
      \caption{KL divergence over the training steps with intervals of size 4 and overlap of 2 for a transformer model with 4 heads.}
      \label{fig:kl-overlapping-4heads}
\end{figure}

\begin{figure}[!h]
      \begin{center}
            \includegraphics[page=1,width=\linewidth]{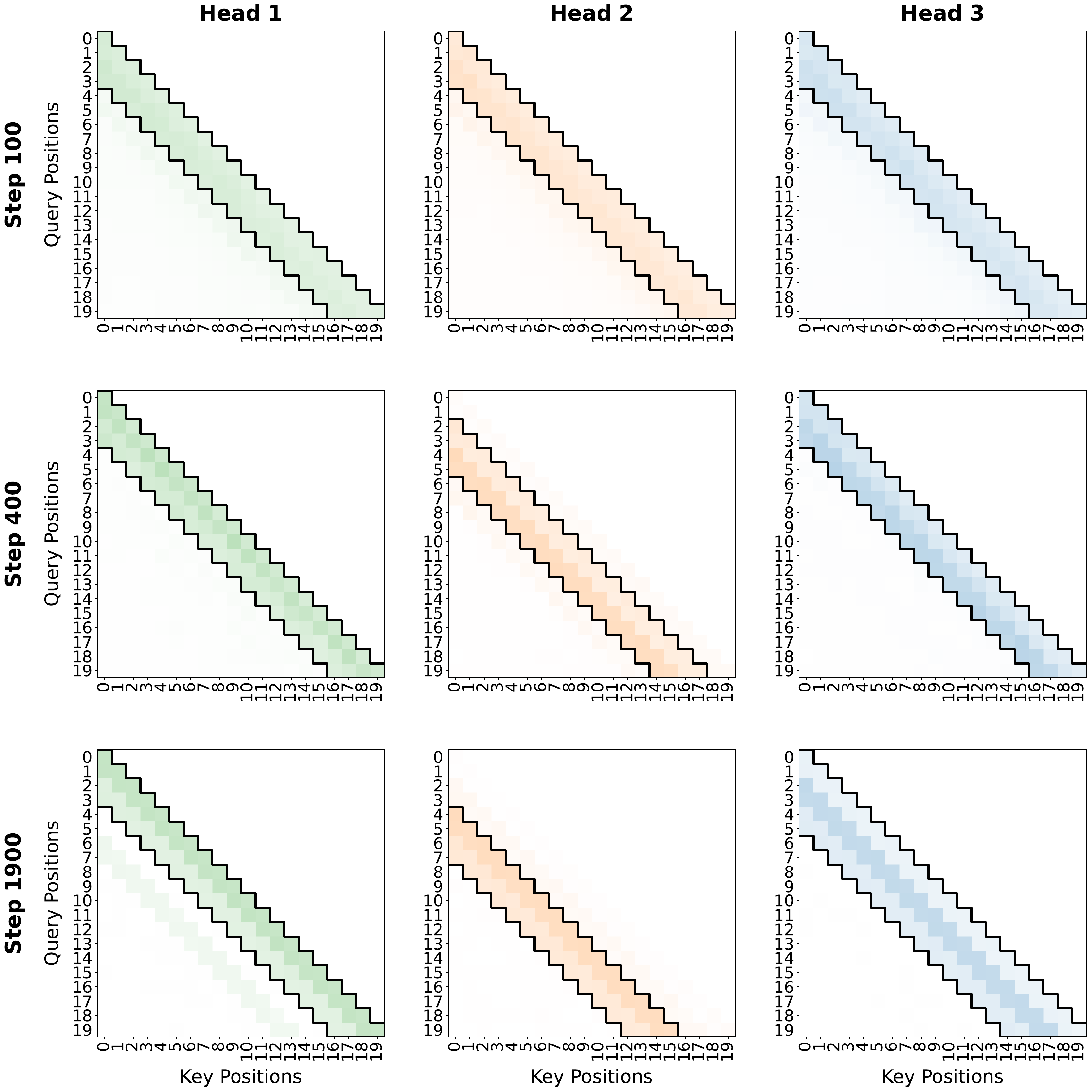}
      \end{center}
      \caption{Attention patterns for 3 heads over the training steps with overlapping intervals.}
      \label{fig:attention-three-heads-overlapping-intervals}
\end{figure}

\begin{figure}[!h]
      \begin{center}
            \includegraphics[page=1,width=\linewidth]{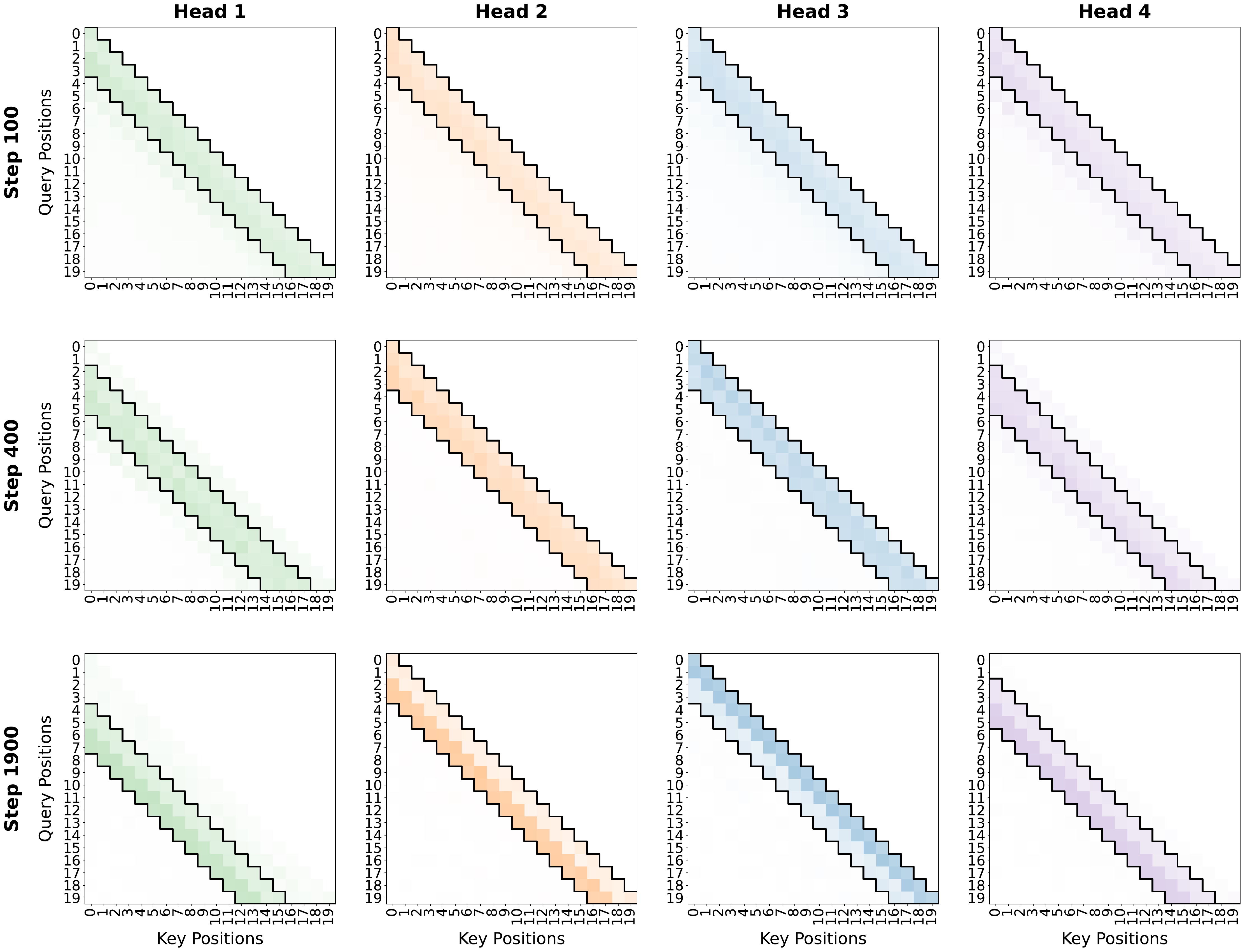}
      \end{center}
      \caption{Attention patterns for 4 heads over the training steps with overlapping intervals.}
      \label{fig:attention-four-heads-overlapping-intervals}
\end{figure}

\subsection{Stochastic Gradient Descent (SGD)}
We run experiments with SGD optimizer instead of AdamW.
We observe in \Cref{fig:kl-sgd-full-one-layer} and \Cref{fig:attention-sgd-full-one-layer} that the quantitative behavior of incremental learning is the same.

\begin{figure}[!h]
      \begin{center}
            \includegraphics[page=1,width=0.48\linewidth]{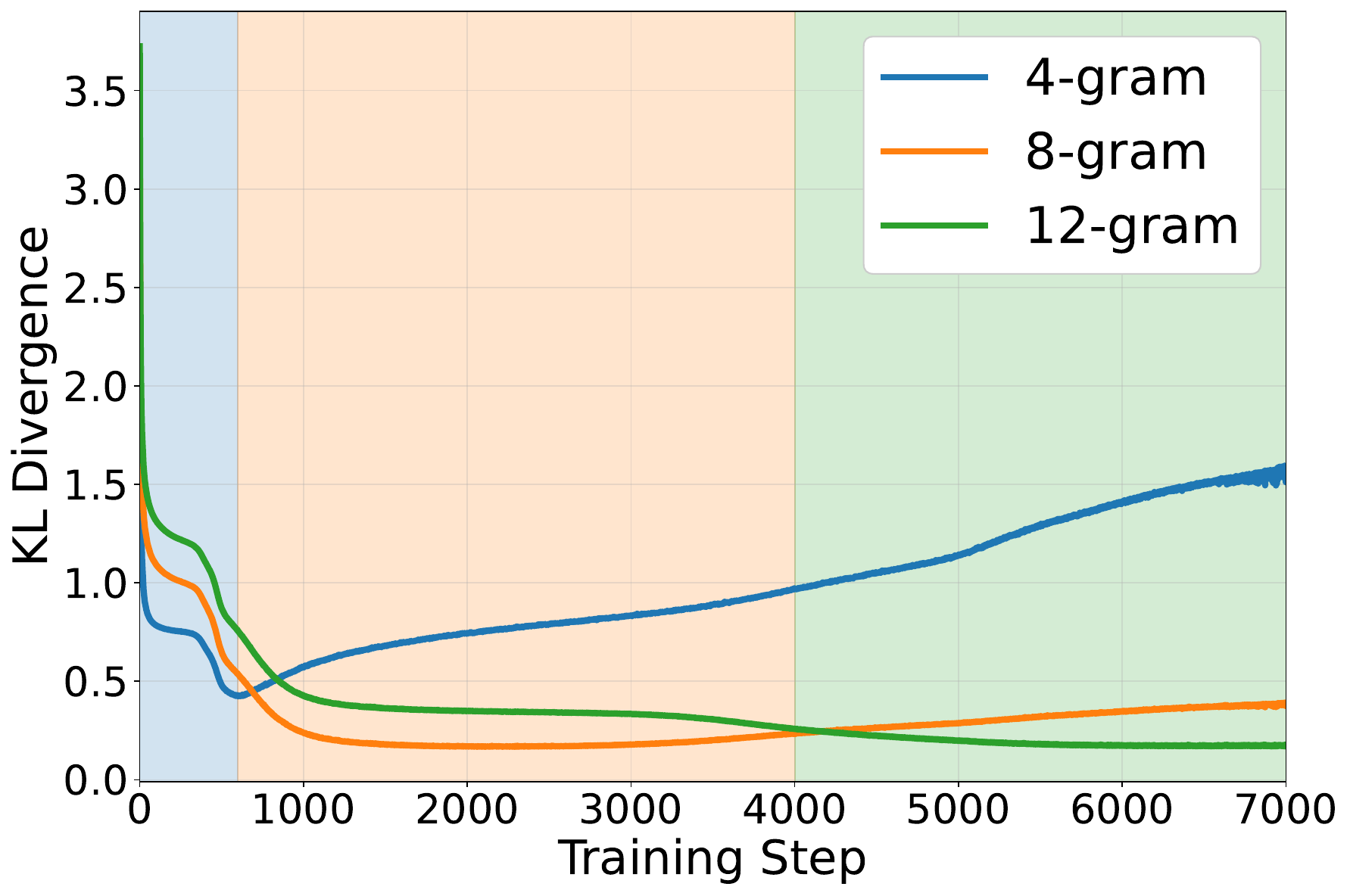}
      \end{center}
      \caption{KL divergence over the training steps with SGD optimizer.}
      \label{fig:kl-sgd-full-one-layer}
\end{figure}

\begin{figure}[!h]
      \begin{center}
            \includegraphics[width=\linewidth]{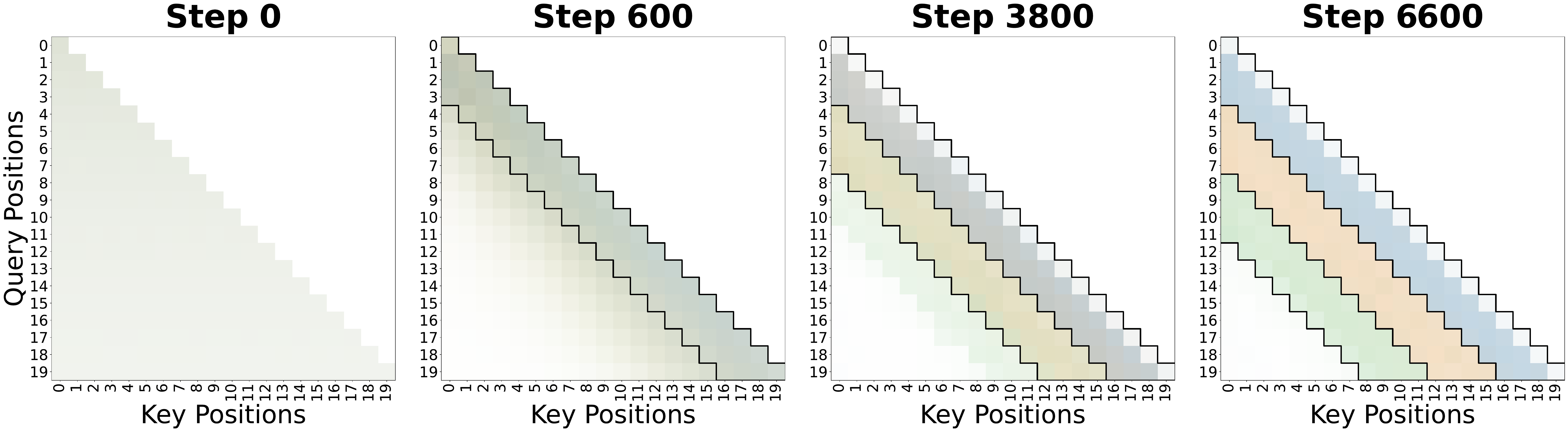}
      \end{center}
      \caption{Attention patterns over the training steps with SGD optimizer.}
      \label{fig:attention-sgd-full-one-layer}
\end{figure}

\end{appendices}

\end{document}